\newcommand{\haim}[1]{}
\newcommand{\jay}[1]{}
\newcommand{\yishay}[1]{}
\newcommand{\uri}[1]{}
\newcommand{\eps}{\varepsilon}
\let\oldnl\nl% Store \nl in \oldnl
\newcommand{\nonl}{\renewcommand{\nl}{\let\nl\oldnl}}% Remove line number for one line
\newcommand{\ip}[2]{\left\langle{#1},{#2}\right\rangle}
\newcommand{\naturals}{\mathbb{N}}
\newcommand{\reals}{\mathbb{R}}
\DeclareMathOperator*{\argmax}{arg\,max}
\newcommand{\CSTA}{\mbox{CSTA}\xspace}
\newcommand{\psco}{\mbox{PSCO}}
\tikzset{
  treenode/.style = {align=center, inner sep=0pt, text centered,
    font=\sffamily},
  arn_n/.style = {treenode, circle, white, font={\fontsize{0.1pt}{0.1}\selectfont}, draw=black,
    fill=black, text width=3em, minimum width=3em},% arbre rouge noir, noeud noir
  arn_r/.style = {treenode, circle, black, font={\fontsize{0.1pt}{0.1}\selectfont}, draw=red,
    fill=red, text width=3em, minimum width=3em},% arbre rouge noir, noeud noir
  arn_s/.style = {treenode, circle, black, font={\fontsize{0.1pt}{0.1}\selectfont}, draw=black,
    fill=white, text width=3em, minimum width=3em},% arbre rouge noir, noeud noir
  small_arn_n/.style = {treenode, circle, white, font={\fontsize{0.1pt}{0.1}\selectfont}, draw=black,
    fill=black, text width=1.5em, minimum width=1.5em},% arbre rouge noir, noeud noir
  small_arn_r/.style = {treenode, circle, black, font={\fontsize{0.1pt}{0.1}\selectfont}, draw=red,
    fill=red, text width=1.5em, minimum width=1.5em},% arbre rouge noir, noeud noir
  small_arn_s/.style = {treenode, circle, black, font={\fontsize{0.1pt}{0.1}\selectfont}, draw=black,
    fill=white, text width=1.5em, minimum width=1.5em},% arbre rouge noir, noeud noir
}
\theoremstyle{plain}
\newtheorem{theorem}{Theorem}[section]
\newtheorem{lemma}[theorem]{Lemma}
\newtheorem{corollary}[theorem]{Corollary}
\theoremstyle{definition}
\newtheorem{definition}[theorem]{Definition}
\newtheorem{assumption}[theorem]{Assumption}
\theoremstyle{remark}
\icmltitlerunning{Concurrent Shuffle Differential Privacy Under Continual Observation}
\begin{document}

\twocolumn[
\icmltitle{Concurrent Shuffle Differential Privacy Under Continual Observation}

% It is OKAY to include author information, even for blind
% submissions: the style file will automatically remove it for you
% unless you've provided the [accepted] option to the icml2023
% package.

% List of affiliations: The first argument should be a (short)
% identifier you will use later to specify author affiliations
% Academic affiliations should list Department, University, City, Region, Country
% Industry affiliations should list Company, City, Region, Country

% You can specify symbols, otherwise they are numbered in order.
% Ideally, you should not use this facility. Affiliations will be numbered
% in order of appearance and this is the preferred way.
\icmlsetsymbol{equal}{*}

% \author{
%   Jay Tenenbaum\thanks{Google Research. \href{mailto:jayten@google.com}{jayten@google.com}.}
%   \And
%   Haim Kaplan\thanks{Blavatnik School of Computer Science, Tel Aviv University and Google Research. \href{mailto:haimk@tau.ac.il}{haimk@tau.ac.il}. 
% %   Partially supported by Israel Science Foundation (grant
% % 1595/19), German-Israeli Foundation (grant 1367/2017), and the Blavatnik Family Foundation.
% }
%   \And 
%   Yishay Mansour\thanks{Blavatnik School of Computer Science, Tel Aviv University and Google Research. \href{mailto:mansour.yishay@gmail.com}{mansour.yishay@gmail.com}. 
% %   This project has received funding from the European Research Council (ERC) under the European Union’s Horizon 2020 research and innovation program (grant agreement No. 882396), by the Israel Science Foundation(grant number 993/17) and the Yandex Initiative for Machine Learning at Tel Aviv University.
% }
%   \And 
%   Uri Stemmer\thanks{Blavatnik School of Computer Science, Tel Aviv University and Google Research. \href{mailto:u@uri.co.il}{u@uri.co.il}.
% %   Partially supported by the Israel Science Foundation (grant 1871/19) and by Len Blavatnik and the Blavatnik Family foundation.
% }
% }

\begin{icmlauthorlist}
\icmlauthor{Jay Tenenbaum}{jayaff}
\icmlauthor{Haim Kaplan}{haimaff}
\icmlauthor{Yishay Mansour}{yishayaff}
\icmlauthor{Uri Stemmer}{uriaff}
\end{icmlauthorlist}

\icmlaffiliation{jayaff}{Google Research. \href{mailto:jayten@google.com}{jayten@google.com}.}
\icmlaffiliation{haimaff}{Blavatnik School of Computer Science, Tel Aviv University and Google Research. \href{mailto:haimk@tau.ac.il}{haimk@tau.ac.il}.}
\icmlaffiliation{yishayaff}{Blavatnik School of Computer Science, Tel Aviv University and Google Research. \href{mailto:mansour.yishay@gmail.com}{mansour.yishay@gmail.com}.}
\icmlaffiliation{uriaff}{Blavatnik School of Computer Science, Tel Aviv University and Google Research. \href{mailto:u@uri.co.il}{u@uri.co.il}}

% \icmlcorrespondingauthor{}{}
% \icmlcorrespondingauthor{Firstname2 Lastname2}{first2.last2@www.uk}

% You may provide any keywords that you
% find helpful for describing your paper; these are used to populate
% the "keywords" metadata in the PDF but will not be shown in the document
\icmlkeywords{differential privacy, shuffle, continual observation, contextual linear bandits, concurrent}

\vskip 0.3in
]

% this must go after the closing bracket ] following \twocolumn[ ...

% This command actually creates the footnote in the first column
% listing the affiliations and the copyright notice.
% The command takes one argument, which is text to display at the start of the footnote.
% The \icmlEqualContribution command is standard text for equal contribution.
% Remove it (just {}) if you do not need this facility.

\printAffiliationsAndNotice{}  % leave blank if no need to mention equal contribution
% \printAffiliationsAndNotice{\icmlEqualContribution} % otherwise use the standard text.

\begin{abstract}
%In the shuffle model of differential privacy we have a trusted shuffler that communicates a differentially private random permutation of messages from the users to the server. This model often allows to achieve privacy with utility much higher than in the local model.

We introduce  the \textit{concurrent shuffle} model of differential privacy.
In this model we have multiple concurrent shufflers permuting messages from  different, possibly overlapping, batches of users. 
Similarly to the standard (single) shuffle model, the privacy requirement is that the concatenation of all shuffled messages should be differentially private.

We study the {\em private continual summation} problem (a.k.a.\ the {\em  counter} problem) and show that 
the  concurrent shuffle model allows for significantly improved error compared to a standard (single) shuffle model. Specifically,
%
%offers improved error bounds compared to the \textit{sequential shuffler} model which uses a single shuffler sequentially for the problem of private summation, where the server must privately approximate the sum of bounded values in an online fashion. 
we give a summation algorithm with error $\Tilde{O}(n^{1/(2k+1)})$ 
with $k$ concurrent shufflers on a sequence of length $n$. Furthermore, we prove that this bound is tight for any $k$, even if the algorithm can choose the sizes of the batches adaptively. For  $k=\log n$ shufflers, the resulting error is polylogarithmic, much better than $\Tilde{\Theta}(n^{1/3})$ which we show is the smallest  possible with a single shuffler.
%We use our online summation algorithm to get algorithms with improved regret bounds for the {\em contextual linear bandit} problem compared to prior works on the shuffle model of differential privacy.
%
%In particular, we provide a tree-based algorithm with error of $\Tilde{O}(n^{1/(2k+1)})$ when we have $k$ shufflers, and matching lower bounds that prove these algorithms are optimal, improving from previous bounds of $\Tilde{O}(n^{1/3})$ in the sequential shuffler model. This means that for $k=\log n$ shufflers, the error is polylogarithmic. 
%

We use our online summation algorithm to get algorithms with improved regret bounds for the contextual linear bandit problem.
In particular we get optimal  $\Tilde{O}(\sqrt{n})$  regret with  $k= \Tilde{\Omega}(\log n)$ concurrent shufflers.
%
%
%We also consider the concurrent shuffler model's application to the contextual linear bandit problem, a problem in machine learning that involves making sequential decisions based on context information. We present an algorithm with improved regret bounds compared to previous work~\cite{chowdhury2022shuffle} for small values $k>2$, and completely close the regret gap to $\Tilde{O}(\sqrt{n})$ for horizon $n$ when we have $k= \Tilde{\Omega}(\log n)$ shufflers. 
\end{abstract}

\section{Introduction}
Differential privacy, introduced in the seminal work of \citet{dwork2006calibrating}, is a formal notion of privacy that enables the release of statistical information about a set of users, without compromising the privacy of any individual user. Briefly, differential privacy requires that any change in a single user's data changes the probability of any  output of the computation by only a limited amount. 
\textit{Differential privacy} has been extensively studied under many different sub-models of privacy. On one end of the spectrum lies the \textit{centralized model} where the users trust the server who holds a database of their data and is liable  to protect it. This server
 receives a (maybe interactive) sequence of queries to the database, and must make sure that the  published aggregated statistics respect the privacy constraints.
 On the other end of the spectrum lies the (strictly stronger) \textit{local model} of differentialy privacy (LDP), where each user privatizes its own data prior to sending it to the server.  In the middle between the local and the centralized models lies the \textit{shuffle model} \citep{bittau2017prochlo,cheu2019distributed,ghazi2019scalable,balle2019privacy,erlingsson2019amplification, erlingsson2020encode}. This model introduces a trusted shuffler (or \textit{shuffler} for short) that receives (encoded) messages from the users and permutes them (i.e., disassociates a message from its sender) before they are delivered to the server.\footnote{
For privacy analysis, we assume that the shuffle is perfectly secure, i.e., its output contains no information about which user generated each of the messages. This is traditionally achieved by the shuffler stripping implicit metadata from the messages (e.g., timestamps, routing information), and frequently forwarding this data to remove time and order information.} Intuitively, protocols in the shuffle model ensure that sufficiently many reports are collected and shuffled together so that any one report can hide in a shuffled batch.

Motivated by real-world applications, such as monitoring systems or search trends which output continual statistics,
%either periodically or as a function of new users, 
\citet{dwork2010differential} studied ``differential privacy under continual observation''. They focus on the fundamental problem of private summation under continual observation (\psco) in which the server  continuously reports an approximation of an accumulating sum  of bounded values in $[0,1]$, one from each new user.
They guarantee event-level privacy, which protects the content of a single user in a single interaction with the server.
Over $n$ users, \citet{dwork2010differential} and \citet{chan2011private} reduced the  trivial $O(\sqrt{n}/\eps)$ error achieved by advanced composition (and is optimal in LDP \citet{vadhan2017complexity}), to $O(polylog(n)/\eps)$  using a tree-based algorithm.  Briefly, they maintain  a binary tree with leaves $1,\ldots,n$, where the $t$'th leaf contains the value of the $t$'th user, and each \textit{internal node} (not a leaf or the root) contains the sum of the leaves in its subtree with added noise $Lap(\Tilde{\Theta}(1/\eps))$ to ensure privacy. The error bound follows since any partial sum is computed with $\leq \log n$ internal nodes.%\footnote{\citet{chan2011private} prove of how to save on memory in the section ``Implementing the Binary Mechanism with small memory'', but it supports only releases which are powers of 2, and the hybrid mechanism to support all releases is given later in the same paper.}$^,  
\footnote{Advanced composition of differential privacy: the combination of $k$ $(\eps, \delta)$-differentially private
algorithms is $O\left(k\eps^2+2\sqrt{k\log(1/\delta')}, k\delta+\delta'\right)$-differentially private for any $\delta'>0$ \citep{dwork2010boosting}.} Note that private (non-continual) summation has optimal $\tilde{\Theta}(1/\eps)$ \citep{dwork2006calibrating} and $\tilde{\Theta}(\sqrt{n}/\eps)$ \citep{beimel2008distributed,chan2012optimal} errors in the centralized and local models, respectively.\footnote{The $\tilde{\Theta}(\cdot)$ hides polylogarithmic terms in $n,\frac{1}{\delta},\frac{1}{\beta}$, etc.} In this paper, we apply the shuffle model in the continual observation setting, to get smaller errors. 

%However, it is not clear how to apply it, since shufflers act on groups and users come one by one. 

The first technicality that must be addressed is that, traditionally, shuffle model protocols assume only a single interaction with the shuffler. That is, most existing protocols assume that {\em all} of the users submit their (encoded) messages to the shuffler {\em simultaneously}. This is, of course, unsuitable for {\em online} or {\em interactive} settings in which the users arrive one by one in a sequential manner. 
To address this and to adapt the shuffle model to online learning settings, \citet{tenenbaum2021differentially}, \citet{cheu2021shuffle} and considered a model in which the arriving users are partitioned into consecutive batches, and each batch of users interacts with the shuffler separately. Specifically, in their model, when a user arrives it submits its (encoded) message to the shuffler, that stores all the (encoded) messages it receives until the current batch ends. When the batch ends, the shuffler reveals to the server a random permutation of all the (encoded) massages it received from the users of this batch, and the server post-processes this in order to estimate the desired statistics. We call this the \textit{sequential shuffle} model.
In this model, using the techniques above and selecting the optimal batch size for the \psco{} problem, gives an algorithm with an error of $\Tilde{O}(n^{1/3})$.
% \jay{They selected batches of $n^{3/5}$ for bandits, and got worse error. Fix everywhere $n^{1/3}$ and chowdhury citation. The batching technique is in previous papers. For counting we need $B=?$, and this is a private case of our general results in ...}. %, using $m=\Tilde{O}(n^{2/3})$ batches of size $\Tilde{O}(n^{1/3})$ each, and the summation mechanism of \cite{cheu2019distributed} with constant error per batch.

% In contrast to the centralized model, in the LDP model \citet{vadhan2017complexity} gives a lower bound of $\Omega(1/\sqrt{n})$ on this error. Therefore, the immediate question is .

\subsection{Our contributions}
%\begin{itemize}
%    \item 
$\bullet$  {\bf The concurrent shuffle model:}  We ask \enquote{can we leverage a small number of shufflers working concurrently to design an improved algorithm for the \psco{} problem (with error less than $\Tilde{O}(n^{1/3})$)?}
    We introduce a novel model of differential privacy under continual observation, which we call the \textit{Concurrent Shuffle} model.
    In this model, we have  $k\in \naturals$ different shufflers working concurrently. At each time, a user arrives and sends information to a subset of the $k$ shufflers. Once a shuffler is \textit{full}, i.e.,\  all its users have arrived, it sends its  data to the server in a random order.
    Then this shuffler is reset and  reused for the subsequent users.

%    \item
$\bullet$   {\bf Algorithm for \psco:}
A weakness of the sequential shuffle model of 
\citet{tenenbaum2021differentially,cheu2021shuffle} is that 
%The weakness of applying a single shuffler in batches is that 
the server does not get 
%an estimate 
any information about
%of the sum of 
the users in the current batch until it fills up. Thus, intuitively, the error increases with the size of the batch. On the other hand when we try to address this by reducing the size of the batch, then the number of batches increases. This intuitively also increases the error, because for each batch we must introduce sufficient noise in order to hide any single user from this batch, and hence the overall noise increases with the number of batches.
%Since we accumulate constant error (due to random privacy noise) per batch, the total error increases with square root of the number of batches. 
The $\tilde{O}(n^{1/3})$ bound for the \psco{} problem comes from balancing these two sources of error. 

We show how to improve this tradeoff using concurrent  shufflers. Even with $k=2$ concurrent shufflers we reduce the error significantly. The idea is to use two concurrent shufflers of different sizes. The larger one allows us to cover a substantial part of the already arrived prefix using a small number of batches, while the smaller one allows us reduce the number of users we are blind to because we still did not get the reports from the current shufflers. 
As $k$ increases we can better control this tradeoff. Consequently,
for \psco{} with $k$ concurrent shufflers we derive an {\em optimal} algorithm  with worst-case error $\Tilde{O}\left(k^{3/2}n^{\frac{1}{2k+1}}\right)$ with high probability. In particular, with $k=\log(n)$ concurrent  shufflers we recover the same $polylog(n)$ error achievable in the centralized model of DP. This is a significant improvement over the $\tilde{O}(n^{1/3})$ error achievable in the sequential shuffle model (which we show is optimal for the sequential shuffle model). 
One view can our algorithm as variation of the tree-based algorithm of \citet{dwork2010differential}, with different internal node degrees and a different method of approximating the running sum.

% We identify that in the concurrent shuffler model, using only $k=1$ shufflers, each which approximates the sum the users in the batch, the sum approximate of each time is done using the sum of approximated sums of all previous batches. There are two error sources - one since each batch sum gets constant error, hence the approximated prefix sum error is asymptotically $\Theta(\sqrt{\#batches})$, and on the other hand within a batch no information is communicated to the server, so we get error $\Theta(batch~size)$. Balancing these gives the optimal batch size and algorithm for the sequential shuffler model. For $k>1$ concurrent shufflers, our algorithm answers, within each such batch of constant size, how to divide the users to batches within the remaining $k-1$ shufflers such that within the batch we once again balance both  errors above with $k-1$ shufflers. 
    
 %   \item 
$\bullet$  {\bf Lower bound for binary summation:}   For  private summation of bounded values,  and specifically binary values, we show a lower bound which matches the performance of our algorithm {\em for any value of $k$}. That is, for any number $k$ of concurrent shufflers, we prove that no concurrent shuffle mechanism can have smaller error than what we achieve! This lower bound holds even for  algorithms that can choose the mechanisms (encoders and sizes) adaptively.
In particular our proof implies that the algorithm with $\tilde{O}(n^{1/3})$ error is optimal in the sequential shuffle model (in which $k=1$).

%supplied upper bounds for each constant number $k$ of concurrent shufflers. This proof is technically intensive, and involves 1. an inductive argument, 2. assuming by contradiction the existence of an accurate counting algorithm, and 3. finding the correct binary user distribution which incurs this error. In this proof, we partition the set of all possible transcripts via a unique partition, and apply claims for each error type from above for each such partition. We get these errors either through anti-chernoff-type arguments  applied to independent uncertain bits which we show we get due to privacy, together with independene claims as in \citet{vadhan2017complexity}, or through induction and definition of a new algorithm which is too accurate within a single batch to exist. This specifically proves that the algorithm of \citet{chowdhury2022shuffle} is optimal in the sequential shuffle setting.

%    \item 
$\bullet$ {\bf  Application to linear contextual bandits:} We use our private summation algorithm to devise a new algorithm for contextual linear bandits with arbitrary contexts (even adversarial) in the concurrent shuffle model. Our algorithm is based on LinUCB \citep{abbasi2011improved} and achieves regret $\Tilde{O}\left(k^{3/4}n^\frac{k+1}{2k+1}/\sqrt{\eps}\right)$ with $k$ concurrent shufflers. Specifically, for  $k=\log n$, this regret is $\Tilde{O}(\sqrt{n}/\sqrt{\eps})$, which clearly has optimal asymptotic dependence on $n$ even without privacy. This improves over the result of \citet{chowdhury2022shuffle} who obtained regret of $\Tilde{O}(n^{3/5}/\sqrt{\eps})$ in the sequential shuffle model. That is, we reduce the regret to optimal, from $\Tilde{O}(n^{3/5}/\sqrt{\eps})$ to $\Tilde{O}(\sqrt{n}/\sqrt{\eps})$, while essentially maintaining the same trust model (small number of concurrent shufflers instead of just one). This answers positively the open question of \citet{chowdhury2022shuffle}.

We believe that the concurrent shuffle model may allow to get improved utility (compared to the sequential shuffle model) for additional problems.
%which closes the regret gap between \citet{chowdhury2022shuffle} to the centralized model, and for $k$ concurrent shufflers, gives regret $\Tilde{O}\left(n^\frac{k+1}{2k+1}\right)$, which is a positive answer for their open question whether this is possible, and achieves regret $\Tilde{O}(\sqrt{n})$ when we have $k=\log n$ shufflers.

All missing proofs and algorithms appear in the Appendix.
% \end{itemize}

\subsection{Further related work}
% To adapt shufflers to continual observation, \citet{chowdhury2022shuffle} and \citet{tenenbaum2021differentially} used the sequential shuffler model, to divide users to consecutive batches, and reuse a single shuffler for each batch. In the concurrent shuffler model, we not only one, but $k>2$ shufflers concurrently, which each user can send information to, which are reused several times.
A large body of research has recently studied the \psco{} problem with horizon $n$ in several different settings. 
In Local DP (LDP), \citet{vadhan2017complexity} give an algorithm and a matching lower bound for error $\Tilde{O}(\sqrt{n}/\eps)$.
In the centralized model, 
%\citet{dwork2010differential} and \citet{chan2011private} achieve error $\Tilde{O}(polylog(n)/\eps)$ using the tree-based algorithm, and supply a logarithmic lower bound. 
\citet{henzinger2022constant} study fine-grained error bounds, and \citet{ghazi2022differentially} give upper and lower bounds for more general tree structures.
%In the sequential shuffler model, \citet{chowdhury2022shuffle} achieved error $\Tilde{O}(n^{1/3})$. In our work, we show improved algorithms and bounds in the concurrent shuffle model using $k>2$ shufflers.
Other works considered $\ell_p$ extensions, high dimensional variants, or improvements and applications of \psco{}.\footnote{\citet{bolot2013private,smith2017interaction,fichtenberger2021differentially,kairouz2021practical,upadhyay2021framework,upadhyay2021differentially,denissov2022improved,jain2021price,cardoso2022differentially,epasto2023differentially,henzinger2023almost}}

Several works have studied the private multi-armed bandit problem \citep{mishra2015nearly,tossou2017achieving,sajed2019optimal,ren2020multi,chen2020locally,zhou2021local,dubey2021no}, the private contextual linear bandit problem \citep{shariff2018differentially, zheng2020locally, han2020sequential, ren2020batched,garcelon2022privacy}, and the more general private reinforcement learning \citep{vietri2020private,garcelon2021local,chowdhury2022differentially} problem, in both local and centralized models of privacy. 
The regret gap between the two models (when the contexts are arbitrary, not stochastic \citep{han2021generalized}) has shrunk using the intermediate sequential shuffle model \citep{tenenbaum2021differentially,chowdhury2022shuffle,garcelon2022privacy}.
See Section~\ref{sec:contextualLinearBandits} for further discussion of these results for private contextual linear bandits.

If we are not in the continual observation privacy model then it is known that  several shuffle mechanisms 
can run in parallel using a single shuffler \citep{cheu2021shuffle,cheu2021differential}.
This is not true when the adversary continually observes reports of the algorithm. In fact we 
 show that using  $k$ concurrent shufflers substantially improves accuracy under continual observation.

\section{Background and preliminaries}\label{sec:prelim}
In this paper, we study the power of using concurrent shufflers for problems with continual reporting, focusing specifically on private summation under continual observation (\psco).
%We adapt the shuffle model of differential privacy to continual observation, through (possibly many simultaneous) batching sequences of consecutive users. 
We first 
define the problem of \psco, then present the tree-based mechanism that solves it with polylogarithmic error, and finally review
 the shuffle model of differential privacy.

\subsection{Private summation under continual observation}
\looseness=-1
Motivated by monitoring event occurrences over time, \citet{dwork2010differential} studied the problem of 
private continual reporting the running sum of 
bounded real values that arrive online.
Specifically, they considered binary values and reported the number of 1's that arrived by time $t$ while protecting the value reported at each particular time.
Recall the definition of \citet{dwork2010differential} which bounds the additive errors of the released sum estimates throughout all the times:
\begin{definition}
A randomized streaming algorithm yields an $(n, \alpha, \beta)$ \textbf{summation algorithm} if for every input $b=(b_1,\ldots,b_n)$, it holds that\\ 
$\Pr\left[\forall t=1,\ldots,n~\abs{\hat{S}_{t}-\Sigma_{i=1}^{t}b_i}\leq \alpha\right]\geq 1-\beta,
$
where $\hat{S}_{t}$ denotes the random estimate released by the algorithm after observing $b_1,\ldots,b_t$.
\end{definition}

\subsection{The tree-based mechanism}

\begin{figure}
\centering
\resizebox{0.25\textwidth}{!}{
\begin{tikzpicture}[->,>=stealth',level/.style={sibling distance = 3.2cm/#1,
  level distance = 0.9cm}] 
\node [small_arn_n] {\Large $v_{1}^{4}$}
    child{ node [small_arn_r] {\Large $v_{1}^{2}$} 
            child{ node [small_arn_n] {\Large $v_{1}^{1}$} 
            	child{ node [small_arn_n] {{\Large $b_1$}} edge from parent[draw=none]}
            }
            child{ node [small_arn_n] {\Large $v_{2}^{2}$}
							child{ node [small_arn_n] {\Large $b_2$} edge from parent[draw=none]}
            }                            
    }
    child{ node [small_arn_n] {\Large $v_{3}^{4}$}
            child{ node [small_arn_r] {\Large $v_{3}^{3}$} 
							child{ node [small_arn_s] {\Large $b_3$} edge from parent[draw=none]}
            }
            child{ node [small_arn_n] {\Large $v_{4}^{4}$}
                                child{ node [small_arn_n] {\Large $b_4$} edge from parent[draw=none]}
            }
    }
; 
\end{tikzpicture}
}
\caption{The tree-based mechanism. Each internal node $v_{i}^{j}$ contains an estimate $S_{v_{i}^{j}}$ of the sum $\sum_{l=i}^{j}b_l$.
At the 3'rd time (when $b_3$ participates), we approximate $\sum_{i=1}^{3}b_i$ by $S_{v_{1}^{2}}+S_{v_{3}^{3}}$. The approximation errors in each $S_{v_{i}^{j}}$ are iid $\eta\sim L(\frac{\log n}{\eps})$.}
\label{fig:treeBasedMechanism}
\end{figure}

For the task of private summation in the centralized model, \citet{dwork2010differential} devised a tree-based mechanism. They assumed for simplicity that $n = 2^i$, and let $T$ be a complete binary tree with its leaf nodes associated with the values $b_1,\ldots, b_n$. Each node $x \in T$ stores the sum of all the leaf nodes in the subtree rooted at $x$. They observed that one can compute any partial sum $\sum_{j=1}^{t}b_j$ using the sums in at most $p := \log(n)$ nodes of $T$, and that for any two neighboring data sequences $D$ and $D_0$, the  sums stored at no more than $p$ nodes in $T$ are different.
% Thus, if the sum in each node preserves $(\eps_0,\delta_0)$-differential privacy, using composition they get that the entire algorithm is $O\left(p\eps_0^2+\eps_0\sqrt{2p\log(1/\delta')}, p\delta_0+\delta'\right)$-differentially private. Alternatively, 
They also observed that the error at each time is a sum of at most $p=\log (n)$ noises.
Hence, to make sure that the entire
tree is $(\eps,\delta)$-DP, by simple composition it suffices to ensure that each node preserves $(\frac{\eps}{p},\frac{\delta}{p})$-DP. By applying a concentration inequality over the $p$ noises and a union bound over the horizon $n$, they conclude that this algorithm creates a summation algorithm with an error term $\alpha$ which grows like $\Tilde{O}\left(1/\eps\right)$. An illustration of the tree-based mechanism appears in Figure~\ref{fig:treeBasedMechanism}.

\subsection{Shuffle-model privacy}

In the well-studied shuffle model of privacy, there are $n$ users, each with data $x_i \in X$. Each user applies some encoder $E : X \to Y^*$ to their data and sends the messages $E(x_i)=(y_{i,1},...,y_{i,p}) $ to a shuffler $S$. The shuffler then shuffles all the messages $y_{i,j}$ from all the users, and outputs them in a uniformly random order to the server.
%, which analyzes them and estimates some function $f(x_1,..., x_n)$. 
Thus, the shuffle mechanism is defined by the pair  $(E, n)$.\footnote{Traditionally, the  algorithm which analyzes the shuffled output was part of the mechanism, but in this paper the server analyzes the outputs of many shufflers together.} We say that such a mechanism $M$ is \textit{$(\eps, \delta)$-shuffle differentially private} (or \textit{$(\eps, \delta)$-SDP} for short) if the shuffler's output is $(\eps, \delta)$-differentially private, or more formally:
A mechanism $M=(E,n)$ is $(\eps,\delta)$-SDP if for any pair of inputs $\{x_i\}_{i=1}^{n}$ and $\{x_i'\}_{i=1}^{n}$ which differ in at most one value, we have for all $B\subseteq Y^*$:
\[\Pr[S((E(x_i))_{i=1}^{n})\in B]\leq e^\eps\cdot \Pr[S((E(x_i'))_{i=1}^{n})\in B]+\delta,\]
where $(z_i)_{i=1}^{n}$ denotes concatenation of values $z_1,\ldots,z_n$.

\section{Concurrent shuffle differential 
privacy 
%under continual observation
}\label{sec:concurrentshuffler}
To adapt the shuffle model to adaptive algorithms (e.g., bandits, sum estimates etc.) under continual observation, \citet{tenenbaum2021differentially,cheu2021shuffle} and \citet{chowdhury2022shuffle} divide the users into continuous batches, and run a shuffle-DP (SDP) mechanism over each batch separately. 
%where each user participates in their unique mechanism executed over their batch.
When a new batch starts, the server  selects 
the next shuffle  mechanism (encoder and size), possibly as a function of the outputs of the previous shuffle mechanisms (i.e.,\ it may be adaptive).
We refer to this as the \textit{sequential shuffle} model.
 
We consider a generalized model of privacy under continual observation which has similar privacy guarantees, but  is more powerful and enables improved accuracy. In this model, 
a set of $k\in \naturals$ different shufflers are
executed concurrently throughout the run of the algorithm, and each user can participate in several different shuffle mechanisms. We can reuse each shuffler again and again, each time possibly with a different mechanism.

Each shuffler $i=1,\ldots,k$, 
at each time, can either be \textit{active} or \textit{inactive}.
The active shufflers are in the process of accumulating messages from  the users.
Once an active shuffler fills up (i.e.\ 
all its users have sent it their encoded data), it is \textit{executed} and its  data is sent to the server in a random order. Then it is marked \textit{inactive}, until the server decides to reuse it on a new batch. Note that in this model each user $t$ can participate in the (up to $k$) active shufflers at time $t$.
As a simplifying assumption, we assume that the server is deterministic, i.e., %while the shuffle mechanisms that it uses are randomized, 
the decisions to allocate new mechanisms and their sizes, and the statistic estimates are all deterministic functions of the outputs of the previously executed shuffle mechanisms. (The shuffle mechanisms are randomized though).
We formalize this in Algorithm~\ref{alg:framework} below.

\begin{algorithm}[htb]
\SetAlgoLined
\textbf{Input}:  $k$  shufflers.

%\smallskip
\textbf{Processing:} for each time $t=1,\ldots,n$:

\vspace*{-0.3cm}
\begin{enumerate}
\setlength{\itemsep}{1pt}
    \item The server activates zero or more in active shufflers. It assigns each of them a shuffle mechanism (encoder $E$ and size $m$) that may depend on the outputs of previously executed shuffle mechanisms.\;    
    \item For each active shuffle mechanism $M=(E,m)$, the $t$'th user sends the encoding $E(b_t)$ of its private value $b_t$ to the shuffler.\;
    \item Each shuffle mechanism that has become full, sends the server its shuffled received data, and is deactivated.\;
    \item The server outputs an estimate $\hat{y}_t$.\; 
\end{enumerate}
 \caption{Concurrent Shuffle Differential Privacy Under Continual Observation with \textit{k} Shufflers}
 \label{alg:framework}
\end{algorithm}

\vspace*{-0.1cm}
\noindent
{\bf Privacy.}
Recall that the private data of the $t$'th user is its value $b_t$. In the sequential shuffle model \citep{tenenbaum2021differentially,cheu2021shuffle,chowdhury2022shuffle}, since each user participates in exactly one shuffle mechanism, we ensure $(\eps,\delta)$ differential privacy 
of the entire algorithm by making
each executed shuffle mechanism  $(\eps,\delta)$-SDP.

%%of each user's value with respect to the server's view (the output of all executed shuffle mechanisms - the shuffled encoded data from the users across all shuffle mechanisms) it suffices to ensure the privacy of each user's value with respect to its shuffle mechanism output, that is, ensuring each executed shuffle mechanism is $(\eps,\delta)$-SDP. 

In our model,  a user can participate in several different shuffle mechanisms. Thus ensuring that each  shuffle mechanism is $(\eps,\delta)$-SDP is not enough to guarantee that the overall algorithm is $(\eps,\delta)$-DP.
%
%, and we stick with the definition ensuring differential privacy of each user's value with respect to the server's view. 
Specifically, for a given input $\{x_i\}_{i=1}^{n}$, consider the (adaptive) sequence of shuffle mechanisms $(E_j,m_j)$ and shufflers $S_j$, run on batches of users $B_j$ ($|B_j|=m_j$) by some algorithm $A$.
Let $A(\{x_i\}_{i=1}^{n})\in Y$ be the  vector $S_j((E_j(u))_{u \in B_j})$ generated by concatenating the outputs of all these shuffle mechanisms.\footnote{Note that output of the algorithm throughout all times is a post-processing of $A(\{x_i\}_{i=1}^{n})$.} We say that $A$ is \textit{$(\eps,\delta)$-concurrent shuffle differentially private (CSDP)} if  for any pair of inputs $\{x_i\}_{i=1}^{n}$ and $\{x_i'\}_{i=1}^{n}$ which differ in at most one value, we have for all $B\subseteq Y$:$$\Pr[A(\{x_i\}_{i=1}^{n})\in B]\leq e^\eps\cdot \Pr[A(\{x_i'\}_{i=1}^{n})\in B]+\delta.$$
% (composite) mechanism $M=(S_j\circ E_j)_{j}$ is $(\eps,\delta)$-DP.

The main trade-off we address in this paper is how the error (the smaller the better) depends on  the  number of shufflers $k$ that the algorithm uses.

\looseness=-1
We demonstrate the power of the concurrent shuffle model on the specific problem of private summation, where each user has a private bounded value, and we estimate the running sum.

\subsection{Private summation for concurrent shufflers}\label{sec:concurrentshufflerPrivateSum}
Consider the \psco{} problem, where the value of the $t$'th user is  $\in [0,1]$, and in every time $t$ we estimate the sum $\sum_{s=1}^{t}b_s$ while protecting the values of the users.

In the sequential shuffle model (one shuffler), using a standard summation mechanism in the shuffle model with $\Tilde{O}(n^{2/3})$ batches of  size $\Tilde{O}(n^{1/3})$ attains error $\Tilde{O}(n^{1/3})$ (see Section~\ref{sec:constrainedShufflers}). In Section~\ref{sec:lb}, we show  that this is optimal for the sequential shuffle model.

% \footnote{Question:Is there any trivial algorithm using $k=n$ shufflers with constant error?}
% Another possible solution is to use $k=n$ shufflers, where each shuffler $t$ runs only a single dedicated SDP mechanism for private summation mechanism on all the users $1,\ldots,t$. This solution has an error of $O(\sqrt{n})$ \footnote{Is this even true?} due to advanced composition since the SDP private summation mechanism has constant error but first user must participate in $n$ shuffle mechanisms. We can balance between optimizing the number of concurrent shufflers and the time-span for each user. To optimize the time-span to constant, we can implement this algorithm to instantiate all shufflers at the first user, and each user $t$ sends their data to all shufflers $t,\ldots,n$, but this solution requires $n$ concurrent shufflers. To optimize the number of concurrent shufflers to constant, we can implement this algorithm to instantiate a single shuffler after every user, and request the data from all previous users, but this solution requires that all users are available throughout the whole run of the algorithm.

We now give an algorithm in the concurrent shuffle model with lower (polylogarithmic) error, assuming we have $k=\Theta(\log n)$ shufflers. In Section~\ref{sec:constrainedShufflers}, we extend this algorithm 
for any $k \ll \log n$ 
shufflers and analyze its error.

\subsubsection{Tree-based algorithm in the concurrent shuffle model}\label{sec:treeBasedConcurrentShuffle}
In this section, we adapt the well known tree-based algorithm \citep{dwork2010differential,chan2011private}, originally built for private summation in the centralized model of differential privacy, to private summation in the concurrent shuffle model, to give an algorithm with additive error term $\alpha=\Tilde{O}\left(1/\eps\right)$.
%We adapt the tree-based algorithm \citep{dwork2010differential,chan2011private} to the concurrent shuffler model, 

We have a balanced binary tree in which the $t$'th leaf corresponds to the data of the user arriving at time $t$.
With each internal node, 
we associate 
a batch
that contains all users in its subtree. 
These batches are 
processed by $k=\log n - 1$ shufflers,
one for all nodes in a single level of the tree.
We compute a noisy sum of the batch associated with node $v$ using a shuffle-DP (SDP) summation mechanism $M_v^{sum}$. 
%For the privacy parameters $(\eps,\delta)$, 
We assume that by analyzing the output of $M_v^{sum}$, we can get an unbiased estimate of the sum with sub-gaussian additive error, 
 which does not depend on the input. 
\begin{definition}\label{def:vstartForBinaryTree}
    For $t=1,\ldots,n$, let $V^{\star}_t$ be the set of all the highest level internal nodes in the tree that all their inputs have arrived and the shuffler associated with them completed. That is, the set of all left siblings of internal nodes in the path from the root to the $(t+1)$'th leaf.
\end{definition} 
To estimate the sum at time $t$, we sum the estimates of the set of nodes $V^{\star}_t$.
Algorithm~\ref{alg:summation} gives a formal description, and  Figure~\ref{fig:treeBasedMechanismShufflers} illustrates it and the definition of $V^{\star}_t$.

\begin{figure}
\centering
\resizebox{0.4\textwidth}{!}{
\begin{tikzpicture}[->,>=stealth',level/.style={sibling distance = 4cm/#1,
  level distance = 0.9cm}] 
\node [small_arn_n] {}
    child{ node [small_arn_r] {{\large $v_{1}^{4}$}} 
            child{ node [small_arn_n] {{\large $v_{1}^{2}$}} 
            	child{ node [small_arn_n] {{\large $b_1$}}}
                    child{ node [small_arn_n] {{\large $b_2$}}}
            }
            child{ node [small_arn_n] {{\large $v_{3}^{4}$}}
							child{ node [small_arn_n] {\large $b_3$}}
                                child{ node [small_arn_n] {{\large $b_4$}}}
            }                            
    }
    child{ node [small_arn_n] {{\large $v_{5}^{8}$}}
            child{ node [small_arn_r] {{\large $v_{5}^{6}$}} 
							child{ node [small_arn_n] {\large $b_5$}}
                                child{ node [small_arn_n] {{\large $b_6$}}}
            }
            child{ node [small_arn_n] {{\large $v_{7}^{8}$}}
                                child{ node [small_arn_s] {\large $b_7$}}
                                child{ node [small_arn_n] {{\large $b_8$}}}
            }
    }
; 
\end{tikzpicture}
}
\caption{The tree-based mechanism for $k=2$ shufflers. Each internal node $v_{i}^{j}$ contains an estimate $S_{v_{i}^{j}}$ of the sum $\sum_{l=i}^{j}b_l$ using a shuffle mechanism for summation. Internal nodes of height 1 ($v_{1}^{2},v_{3}^{4},v_{5}^{6}$, and $v_{7}^{8}$) and of height 2 ($v_{1}^{4}$ and $v_{5}^{8}$) apply shufflers of size 2 and 4 respectively to the data at their leaves. In times 6 and 7, $V^{\star}_6,V^{\star}_7=\{v_1^4,v_5^6\}$, hence we give the sum approximate $S_{v_{1}^{4}} + S_{v_{5}^{6}}$, and in time 8, $V^{\star}_8=\{S_{v_{1}^{4}},S_{v_{5}^{8}}\}$.}
\label{fig:treeBasedMechanismShufflers}
\end{figure}

\begin{algorithm}[htb]
\SetAlgoLined

\textbf{Tree.} Instantiate a binary tree with leaves corresponding to  the numbers $1,\ldots,n$.

\textbf{Internal Nodes.} Each internal node (not the root or a leaf) $v$ in the binary tree, is associated with the set $L_v$ of its leaf  descendants.

\textbf{Processing.} For every time step $t\in \{1,\ldots,n\}$ corresponding to the $t$'th leaf in the tree:

\vspace{-0.2cm}
\begin{enumerate}
\setlength{\itemsep}{1pt}
\item For every internal node $v$ for which $t$ is its leftmost descendant, allocate a new shuffle mechanism $M^{sum}_{v}$ for private summation with privacy parameters $(\eps/\log n,\delta/\log n)$ and shuffle size    
$|L_v|$, to the shuffler associated with the level of $v$.
\item The leaf node $t$ receives value $b_t$.
\item Let the leaf node $t$ participate in all the shuffle mechanisms $M^{sum}_{v}$ such that $t\in L_v$.\;
\item Execute all the mechanisms of the full shufflers, i.e., the mechanisms $M^{sum}_{v}$ such that $t$ is the rightmost descendant of $v$, and mark the corresponding shufflers inactive. For each  such $v$, we analyze the  output of the corresponding shuffler to get an estimate $S_v$ of the sum of values of $L_v$.\;
\item Output $\sum_{v\in V^{\star}_{t}}S_v$, where $V^{\star}_{t}$ is the set of at most $\log n$ internal nodes defined in Definition~\ref{def:vstartForBinaryTree}.\;
\end{enumerate}
\caption{\CSTA (Concurrent Shuffle Tree-Based Algorithm)}
\label{alg:summation}
\end{algorithm}

\looseness=-1
We bound the error of Algorithm~\ref{alg:summation} in Theorem~\ref{thm:treealgorithm}.
The proof is similar to the proof of Theorem 4.1 in \citet{dwork2010differential}.
It uses the fact that the additive error of $M_v^{sum}$ has zero mean and is 
sub-gaussian  with variance at most $V_{n,\eps,\delta}$ over batches of size at most $n$ for privacy parameters $\eps$ and $\delta$. 

\begin{theorem}~\label{thm:treealgorithm}
Consider a run of Algorithm~\ref{alg:summation} with parameters $n\in \naturals$ and $\eps,\delta > 0$.
At each time $t$ the error of the sum estimate is sub-gaussian with variance at most $O\left(V_{n,\frac{\eps}{\log n},\frac{\delta}{\log n}}\log n\right)$. Moreover, the algorithm is $(\eps,\delta)$-CSDP and an $(n,\alpha, \beta)$ summation algorithm for $\alpha = O\left(\sqrt{V_{n,\frac{\eps}{\log n},\frac{\delta}{\log n}}}\cdot (\sqrt{\log (1/\beta)\log n}+\log n)\right)$ and for any $\beta \in (0,1)$.
\end{theorem}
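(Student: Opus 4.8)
The plan is to follow the structure of the proof of Theorem~4.1 in \citet{dwork2010differential}, adapting it to the concurrent-shuffle setting. I would organize the argument into three parts: privacy, the per-time sub-gaussian variance bound, and the high-probability uniform error bound.

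\emph{Privacy.} First I would argue the $(\eps,\delta)$-CSDP guarantee. Fix two neighboring inputs differing only in the value $b_{t_0}$ of user $t_0$. User $t_0$ participates in exactly the shuffle mechanisms $M_v^{sum}$ for internal nodes $v$ on the root-to-leaf path of leaf $t_0$; there are at most $\log n$ such nodes (one per level). Each such mechanism is instantiated with privacy parameters $(\eps/\log n, \delta/\log n)$ and is $(\eps/\log n,\delta/\log n)$-SDP by assumption, while all other executed mechanisms receive identical inputs under the two neighboring datasets and hence contribute no privacy loss. Since the vector $A(\{x_i\})$ is the concatenation of all these shuffler outputs, basic composition over the $\le \log n$ affected mechanisms yields that $A$ is $(\eps,\delta)$-CSDP. (One subtlety: the sizes and encoders are chosen adaptively, but the choices depend only on outputs of previously executed shufflers, so the standard adaptive-composition argument applies — I would spell this out.)

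\emph{Per-time variance.} Next, fix a time $t$. By Definition~\ref{def:vstartForBinaryTree}, the estimate output is $\sum_{v \in V_t^\star} S_v$, and $V_t^\star$ consists of the left siblings of the nodes on the root-to-leaf path to leaf $t+1$, so $|V_t^\star| \le \log n$. The true sum $\sum_{s=1}^t b_s$ is exactly the sum over $v \in V_t^\star$ of the true sums of $L_v$, because these subtrees partition $\{1,\dots,t\}$ (this is the standard dyadic-interval decomposition — I would note it explicitly). Hence the error is $\sum_{v \in V_t^\star}(S_v - \sum_{i \in L_v} b_i)$, a sum of at most $\log n$ independent, zero-mean, sub-gaussian random variables each with variance at most $V_{n,\eps/\log n,\delta/\log n}$ (here I use the stated property that $M_v^{sum}$'s additive error is zero-mean and sub-gaussian with that variance bound over batches of size $\le n$, independent of the input). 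Independence holds because distinct nodes use distinct shuffler executions with fresh randomness. Sums of independent sub-gaussians are sub-gaussian with variance parameter the sum of the individual ones, giving the claimed $O\!\left(V_{n,\eps/\log n,\delta/\log n}\log n\right)$ bound.

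\emph{Uniform error bound.} Finally, to get the $(n,\alpha,\beta)$ guarantee, I would apply a sub-gaussian tail bound to the error at each fixed $t$: with probability $\ge 1 - \beta/n$, its absolute value is $O\!\left(\sqrt{V_{n,\eps/\log n,\delta/\log n}\log n}\cdot\sqrt{\log(n/\beta)}\right)$. A union bound over $t = 1,\dots,n$ then gives that all $n$ errors are simultaneously bounded with probability $\ge 1-\beta$. Expanding $\sqrt{\log(n/\beta)} \le \sqrt{\log n} + \sqrt{\log(1/\beta)}$ and collecting terms yields $\alpha = O\!\left(\sqrt{V_{n,\eps/\log n,\delta/\log n}}\,(\sqrt{\log(1/\beta)\log n} + \log n)\right)$, as stated. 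I do not anticipate a serious obstacle here; the only point requiring care is the adaptivity in the privacy argument — ensuring that adaptive choice of encoders/sizes does not break the composition — and the bookkeeping to confirm that exactly the right set of nodes (at most $\log n$ of them, forming a partition) is involved at each step. Everything else is routine sub-gaussian concentration plus a union bound.
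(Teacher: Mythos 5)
Your proposal largely mirrors the paper's proof — privacy by simple (adaptive) composition over the $\le \log n$ mechanisms containing user $t_0$, then sub-gaussian concentration of the at most $\log n$ independent zero-mean node noises, then a union bound over $t$. However, there is a genuine gap in the per-time accuracy step. You assert that the subtrees $L_v$ for $v \in V_t^\star$ partition $\{1,\ldots,t\}$, so that the error is purely the sum of node noises. This is false for odd $t$. By design, Algorithm~\ref{alg:summation} never attaches a shuffle mechanism to a single leaf (it would be wasteful), so the smallest internal nodes cover two leaves; the mechanism containing leaf $t$ (for odd $t$) has not yet reported at time $t$. Concretely, with the definition of $V_t^\star$ as the left siblings along the path to leaf $t+1$, at time $t=7$ in Figure~\ref{fig:treeBasedMechanismShufflers} one has $V_7^\star = \{v_1^4, v_5^6\}$ whose leaves cover only $\{1,\ldots,6\}$, so the estimate approximates $\sum_{i=1}^{6} b_i$, not $\sum_{i=1}^{7} b_i$.

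The paper explicitly handles this: for odd $t$ the algorithm effectively reuses the estimate for time $t-1$, incurring an additional deterministic error $\abs{b_t} \le 1$, which is dominated by the sub-gaussian fluctuation and preserves the claimed variance and tail bounds (a bounded shift of a sub-gaussian variable is still sub-gaussian with the same variance proxy). Your proof needs to add this observation; as written, the sentence claiming an exact dyadic partition of $\{1,\ldots,t\}$ is incorrect and the zero-mean claim for the error fails at odd times. Everything else — the privacy argument, the independence of node noises across disjoint shuffler runs, the union bound, and the final arithmetic — matches the paper's reasoning.
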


\looseness=-1
We apply Theorem~\ref{thm:treealgorithm} to two specific instances of private summation in the shuffle model. The first is for private binary summation, 
with Algorithm 3 of \citet{tenenbaum2021differentially} as the shuffle mechanism, denoted by $M^{sum}_{bin}$. For any $\eps<1$ and $\delta>0$, it has unbiased sub-gaussian error with variance $V_{n,\eps,\delta} = O\left(\frac{\log (1/\delta)}{\eps^2}\right)=\Tilde{O}\left(\frac{1}{\eps^2}\right)$. The second is for privately summing whole vectors and matrices with bounded $l_2$ norms, with the mechanism from Appendix C of \citep{chowdhury2022shuffle} as the shuffle mechanism, denoted by $M^{sum}_{vec}$.  It relies on an efficient and accurate mechanism of \citet{cheu2021shuffle}, and for any $\eps<15$ and $\delta<1/2$ it gets unbiased sub-gaussian error with variance $V_{n,\eps,\delta} = O\left(\frac{(\log (d/\delta))^2}{\eps^2}\right)=\Tilde{O}\left(\frac{1}{\eps^2}\right)$ for each entry separately.

\begin{corollary}~\label{thm:treealgorithmSummation}
Consider a run of Algorithm~\ref{alg:summation} on a binary input with parameters $n\in \naturals$, $\eps<1$, $\delta < 1/2$ and using $M^{sum}_{bin}$. At each time $t$, the error is sub-gaussian with variance at most $\Tilde{O}\left(\frac{1}{\eps^2}\right)$. Moreover, the algorithm is $(\eps,\delta)$-CSDP and an $(n,\alpha, \beta)$ summation algorithm for $\alpha = \Tilde{O}\left(\frac{1}{\eps}\right)$. The same holds for bounded vector inputs using $M^{sum}_{vec}$, for each entry separately.
\end{corollary}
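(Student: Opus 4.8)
The plan is to obtain the corollary by specializing Theorem~\ref{thm:treealgorithm} to the two concrete shuffle-DP summation primitives described just above it. First I would check that instantiating Algorithm~\ref{alg:summation} with these primitives is legitimate. Each internal node $v$ invokes its mechanism $M^{sum}_v$ with privacy parameters $(\eps/\log n,\delta/\log n)$, and since $\eps<1$, $\delta<1/2$, and $\log n\ge 1$ (we may assume $n\ge 2$, so that the tree has internal nodes), we get $\eps/\log n<1<15$ and $\delta/\log n<1/2$, which are exactly the admissible parameter ranges of $M^{sum}_{bin}$ (Algorithm 3 of \citet{tenenbaum2021differentially}) and of $M^{sum}_{vec}$ (Appendix C of \citet{chowdhury2022shuffle}). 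Both primitives are, as stated, unbiased with sub-gaussian additive error, which is precisely the hypothesis under which Theorem~\ref{thm:treealgorithm} is proved, so the theorem applies with the corresponding variance function $V_{n,\eps,\delta}$.

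Next I would substitute the variance bounds. For binary inputs, $V_{n,\eps',\delta'}=O(\log(1/\delta')/\eps'^2)$, so taking $\eps'=\eps/\log n$ and $\delta'=\delta/\log n$ yields
\[
V_{n,\frac{\eps}{\log n},\frac{\delta}{\log n}}=O\!\left(\frac{\log^2 n\,\log(\log n/\delta)}{\eps^2}\right)=\Tilde{O}\!\left(\frac{1}{\eps^2}\right),
\]
where $\Tilde{O}$ absorbs the $\mathrm{polylog}(n)$ and $\log(1/\delta)$ factors. Theorem~\ref{thm:treealgorithm} then gives per-time error variance $O\!\big(V_{n,\eps/\log n,\delta/\log n}\log n\big)=\Tilde{O}(1/\eps^2)$, and
\[
\alpha=O\!\left(\sqrt{V_{n,\frac{\eps}{\log n},\frac{\delta}{\log n}}}\,\big(\sqrt{\log(1/\beta)\log n}+\log n\big)\right)=\Tilde{O}\!\left(\frac{1}{\eps}\right),
\]
now also absorbing the $\log(1/\beta)$ factor. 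The $(\eps,\delta)$-CSDP guarantee and the fact that this constitutes an $(n,\alpha,\beta)$ summation algorithm are inherited verbatim from Theorem~\ref{thm:treealgorithm}.

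For bounded vector and matrix inputs I would repeat the calculation with $V_{n,\eps',\delta'}=O((\log(d/\delta'))^2/\eps'^2)$; the same substitution gives $V_{n,\eps/\log n,\delta/\log n}=\Tilde{O}(1/\eps^2)$ (now with an extra $\mathrm{polylog}(d)$ factor hidden in $\Tilde{O}$), and since both Theorem~\ref{thm:treealgorithm} and $M^{sum}_{vec}$ operate coordinatewise, the identical variance and $\alpha$ bounds hold for each entry separately.

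There is no genuinely hard step here: essentially all the work lives in Theorem~\ref{thm:treealgorithm}. The only points requiring care are (i) verifying that the rescaled parameters $(\eps/\log n,\delta/\log n)$ still lie in the admissible ranges of the two primitives — which is exactly why the hypotheses $\eps<1$ and $\delta<1/2$ appear — and (ii) the bookkeeping of logarithmic factors, i.e.\ making explicit that the $\Tilde{O}$ in the statement must hide $\mathrm{polylog}(n)$, $\log(1/\delta)$, $\log(1/\beta)$, and (in the vector case) $\log d$.
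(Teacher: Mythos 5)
Your proposal is correct and matches the paper's own proof essentially step for step: both specialize Theorem~\ref{thm:treealgorithm} by plugging the stated variance bounds for $M^{sum}_{bin}$ and $M^{sum}_{vec}$ into $V_{n,\eps/\log n,\delta/\log n}$ and then absorbing the resulting $\mathrm{polylog}(n)$, $\log(1/\delta)$, $\log(1/\beta)$, and $\log d$ factors into $\Tilde{O}(\cdot)$. The only small addition you make beyond the paper's write-up is the explicit check that the rescaled parameters $(\eps/\log n,\delta/\log n)$ remain in the admissible ranges of the two primitives, which is implicit but correct.
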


\looseness=-1
From Figure~\ref{fig:treeBasedMechanismShufflers}, we can see that Algorithm~\ref{alg:summation} requires $k=\log n-1$ concurrent shufflers, one for each internal level in the tree. Note that we can optimize the memory of the algorithm to $O(\log n)$, by observing that there is no use in memorizing the sum estimate for two siblings, since the sum estimate of their parent replaces them and has lower error.

\subsection{Constrained number of shufflers}\label{sec:constrainedShufflers}
Recall that in order to get polylogarithmic error, Algorithm~\ref{alg:summation} 
used $k=\log n -1$ shufflers and
 every user participated in $k=\log n -1$ different shuffle mechanisms. A natural question is: ``What if we cannot support $\log n$ shufflers, but only $k \ll \log n$  shufflers?''. Can we still get better error guarantees than in the sequential shuffle model? 

We point out two natural modifications of Algorithm~\ref{alg:summation} to $k\ll \log n$ which give bad results. The first is to use only the $k$ lowest levels of the tree, i.e., shufflers of sizes $\{2^i\}_{i=1}^{k}$.  
This solution has a lager error since for many queries it has to aggregate results from  a large number of shuffle mechanisms each giving a noisy sum estimate with an independent noise. We call this kind of error an \textbf{inter-batch error}.
%
%Unfortunately, this solution is bad due to what we call a high \textbf{inter-batch error} from aggregating results from a larger number $s$ of shuffle mechanisms, each which returns a noisy sum estimate with independent noise, giving error which scales like $\sqrt{s}$. 
Specifically, the best estimate we get for the sum at each time is by summing the  estimates from all the previous roughly $n/2^k$ batches of size $2^k$, incurring an error proportional to $\sqrt{n/2^k}$.  (In general if we aggregate results from $s$ shuffle mechanisms we would get inter-batch error  $\Omega(\sqrt{s})$.)

The second natural modification is to take the $k$ highest levels of the tree, i.e., mechanisms of sizes $\{2^i\}_{i=\log n-k}^{\log n}$. Unfortunately, this solution is bad due to what we call a high \textbf{intra-batch error}. 
Specifically, since each mechanism is of size at least $2^{\log n-k}$, and its shuffler reports only when it fills up, the error after the $(2^{\log n-k}-1)$'th user can be $\Omega(2^{\log n-k})$ in the worst case.
In general, if the smallest shuffle mechanism is of size $b$ we can get $\Omega(b)$ intra-batch error.

To get small error for some  fixed $k\in \naturals$, 
we have to change the structure of the tree $T$ so that the number of children of any node is $d=\Theta(n^{\frac{2}{2k+1}})$ except for nodes at the lowest level (of internal nodes, which is level $1$) that are of degree $d_{low}=n/d^k=\Theta(n^{\frac{1}{2k+1}})$, to ensure that the tree has $d^kd_{low}=n$ leaves. To ease the presentation assume that $d_{low}$ and therefore $d$ are integers. Note that the root is not used since a shuffler for all users is useless. The major difference of this tree from the binary tree, %of \citet{dwork2010differential} is 
is in the way we estimate the sum in each time.

As a warm-up, consider the sequential shuffle model in which there is only a single shuffler. i.e. $k=1$. The resulting tree has $n$ leaves, $d$ internal nodes over $d_{low}=n/d$ leaves each, and a single root above all the internal nodes. For this tree, the shuffler is run only for the internal level, and the intra-batch error is $d_{low}-1=O(n/d)$, whereas the inter-batch error at the last time $n$, scales like $\sqrt{d}$, since the best estimate for the sum is to sum the outputs of all the previous $d$ executed mechanisms. We balance the two error terms by setting $d=\Theta(n^{2/3})$ and $d_{low}=\Theta(n^{1/3})$. This gives an optimal algorithm with error term $\alpha=\Theta(n^{1/3})$. 

We extend this analysis to an arbitrary $k\ll \log n$. 
For each $t=1,\ldots,n$ we evaluate
the sum by adding up the estimates associated with the nodes of $V^\star_t$ as defined in Definition~\ref{def:vstartForBinaryTree} (which holds for a general tree).
%
%Observe that for each $i=1,\ldots,n$ the optimal method to evaluate the sum of the first $i$ values is through the following greedy method. We return the sum of the approximated sums within
% all the highest level internal nodes in the tree that all their inputs have arrived and the shuffler associated with them completed. (That is internal nodes $v$ such that their subtree has arrived but the subtree of their parent contains inputs that have not arrived yet).
For the inter-batch error, note that
% the lowest height can contribute at most $d_{low}-1$  estimates, and 
each node of height $i=1,\ldots,k$ of the tree has at most $d-1$ siblings, so by the definition of $V^\star_t$, the height $i$ can contribute at most $d-1$ estimates. Therefore, each time $t$, we sum a total of at most $m=\sum_{i=1}^{k}(d-1)=k(d-1)=O\left(k\cdot n^{\frac{2}{2k+1}}\right)$
sum estimates.\jay{Is this fine? Wrong for the last node...} Since each user participates in at most $k$ mechanisms (matching the internal nodes that are its ancestors), it follows by simple composition that to preserve privacy, the internal nodes' mechanisms should be $(\eps/k,\delta/k)$-SDP. 
Assuming that these mechanisms have noise which is sub-gaussian with variance at most $V_{n, \frac{\eps}{k}, \frac{\delta}{k}}$ over batches of size at most $n$ and privacy parameters $\eps$ and $\delta$, we get that at each time $t$, the  error of the output is sub-gaussian with variance at most $O\left(V_{n, \frac{\eps}{k}, \frac{\delta}{k}}\cdot k\cdot n^{\frac{2}{2k+1}}\right)$. However, within the lowest level of internal nodes, the batch size is $d_{low}$, so the intra-batch error is at most $d_{low}-1=O\left(n^{\frac{1}{2k+1}}\right)$. Hence the total error in each time is sub-gaussian with variance at most $O\left(V_{n, \frac{\eps}{k}, \frac{\delta}{k}}\cdot k\cdot n^{\frac{2}{2k+1}}\right)$, and with expectation at most $O\left(n^{\frac{1}{2k+1}}\right)$.
The full algorithm appears in the Appendix (Algorithm~\ref{alg:summationv3})  for which we prove the following results.
\begin{theorem}~\label{thm:treealgorithmKShufflers}
For $n\in \naturals$, and $\eps,\delta>0$ and any number of shufflers $k$, for the algorithm above, in each time $t$ the sum estimate error is sub-gaussian with variance at most $O\left(V_{n,\frac{\eps}{k},\frac{\delta}{k}}\cdot k\cdot n^{\frac{2}{2k+1}} \right)$. Moreover the algorithm is $(\eps,\delta)$-CSDP and an $(n,\alpha, \beta)$ summation algorithm for $\alpha = O\left(\sqrt{V_{n,\frac{\eps}{k},\frac{\delta}{k}}}\cdot\sqrt{k}n^{\frac{1}{2k+1}}\cdot \sqrt{\log (1/\beta)+\log n}\right)$ and for any $\beta \in (0,1)$.
\end{theorem}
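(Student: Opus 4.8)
The plan is to follow the three-part template of the proof of Theorem~\ref{thm:treealgorithm} (itself modeled on Theorem~4.1 of \citet{dwork2010differential}): a privacy argument by basic composition, a per-time sub-gaussian error bound obtained from a clean decomposition of the prefix sum, and a union bound over the $n$ time steps to get the $(n,\alpha,\beta)$ guarantee.

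\emph{Privacy.} The tree shape (the degrees $d$ and $d_{low}$, hence all batches $L_v$) is fixed in advance. Each user $t$ participates only in the mechanisms $M^{sum}_{v}$ of the (at most $k$) internal-node ancestors $v$ of leaf $t$; every other executed mechanism has an input that does not contain $b_t$, so its output is independent of $b_t$. Thus changing one user's value changes the inputs of at most $k$ of the executed shuffle mechanisms, each in a single coordinate, and each of these is $(\eps/k,\delta/k)$-SDP by construction. Basic composition of differential privacy, applied to the concatenation $A(\{x_i\}_{i=1}^{n})$ of all shuffler outputs (the unaffected mechanisms being treated as side information / post-processing), then gives $(\eps,\delta)$-CSDP; the adaptive version of the model follows identically from adaptive composition, viewing the transcript as generated mechanism-by-mechanism.

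\emph{Per-time error.} Fix $t<n$ and let $u_1,\dots,u_k$ be the height-$1,\dots,$height-$k$ ancestors of leaf $t+1$. By Definition~\ref{def:vstartForBinaryTree}, $V^{\star}_t$ is the set of left siblings of $u_1,\dots,u_k$, and since every internal node has at most $d-1$ siblings we get $|V^{\star}_t|\le k(d-1)=O(k\,n^{2/(2k+1)})$ (the boundary case $t=n$ gives $|V^{\star}_t|=d$, still $O(k\,n^{2/(2k+1)})$). Moreover $\{1,\dots,t\}$ is the disjoint union of $\bigcup_{v\in V^{\star}_t}L_v$ with a residual set $R_t\subseteq L_{u_1}$ of leaves lying in the still-filling bottom-level batch, and $|R_t|\le d_{low}-1=O(n^{1/(2k+1)})$ since leaf $t+1\in L_{u_1}$ is not among them. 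Hence
\[
\hat S_t-\sum_{s=1}^t b_s \;=\; \sum_{v\in V^{\star}_t}\Bigl(S_v-\sum_{s\in L_v}b_s\Bigr)\;-\;\sum_{s\in R_t}b_s .
\]
The first term is a sum of at most $k(d-1)$ independent, zero-mean, sub-gaussian random variables, each with variance proxy at most $V_{n,\eps/k,\delta/k}$ (batch sizes are $\le n$ and the mechanisms are $(\eps/k,\delta/k)$-SDP), so it is sub-gaussian with variance proxy $O(V_{n,\eps/k,\delta/k}\cdot k\,n^{2/(2k+1)})$; the second term is a deterministic quantity of absolute value at most $d_{low}-1$ that only shifts the mean. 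This is the claimed per-time bound.

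\emph{Uniform bound and the obstacle.} For each $t$, applying the sub-gaussian tail inequality with deviation $\Theta(\sqrt{V_{n,\eps/k,\delta/k}\,k\,n^{2/(2k+1)}\log(n/\beta)})$ makes the per-time failure probability at most $\beta/n$; a union bound over $t=1,\dots,n$, absorbing the deterministic $O(n^{1/(2k+1)})$ shift (dominated by the stochastic term since $V_{n,\eps/k,\delta/k}=\Omega(1)$) and using $\sqrt{\log(n/\beta)}=\Theta(\sqrt{\log(1/\beta)+\log n})$, yields $\alpha=O(\sqrt{V_{n,\eps/k,\delta/k}}\cdot\sqrt{k}\,n^{1/(2k+1)}\cdot\sqrt{\log(1/\beta)+\log n})$. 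The only non-routine step is the combinatorial bookkeeping in the per-time part: one must check, uniformly over $t$, that the estimate aggregates only $O(k\,n^{2/(2k+1)})$ independent noisy sums and that the portion of the prefix not yet reported by any completed shuffler is confined to a single bottom-level batch of size $<d_{low}$. Given that, the choice $d=\Theta(n^{2/(2k+1)})$, $d_{low}=\Theta(n^{1/(2k+1)})$ is exactly what balances the inter-batch error $\sim\sqrt{k\,d}$ against the intra-batch error $\sim d_{low}$, and the remaining concentration argument is identical to that of Theorem~\ref{thm:treealgorithm}.
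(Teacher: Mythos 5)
Your proof is correct and follows essentially the same route as the paper's: privacy by basic composition over the $\le k$ mechanisms an individual user touches, a per-time error split into an inter-batch sub-gaussian term (a sum of $|V^\star_t|\le k(d-1)$ independent mechanism noises) and an intra-batch deterministic shift of magnitude $\le d_{low}-1$, and a sub-gaussian tail bound plus union bound over all $n$ times. The one cosmetic difference is that you make the decomposition of the prefix explicit — $\{1,\dots,t\}=\bigl(\bigcup_{v\in V^\star_t}L_v\bigr)\sqcup R_t$ with $R_t\subseteq L_{u_1}$ and $|R_t|\le d_{low}-1$ — whereas the paper phrases the same fact as "at time $t$ not divisible by $d_{low}$, reuse the estimate from the preceding multiple of $d_{low}$ and add the at-most-$d_{low}-1$ unreported values"; these are two descriptions of the identical argument, and you also handle the $t=n$ boundary ($|V^\star_n|=d$) in the same way the paper does in a footnote.
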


Analogously to Corollary~\ref{thm:treealgorithmSummation}, we conclude:
\begin{corollary}~\label{thm:treealgorithmSummationKShufflers}
For any number of shufflers $k\in \naturals$, the algorithm above run on a binary input with parameters $n\in \naturals$, $\eps<1$, $\delta < 1/2$ and using $M^{sum}_{bin}$, in each time $t$ the sum estimate error is sub-gaussian with variance at most $\Tilde{O}\left(\frac{k^3}{\eps^2}\cdot n^{\frac{2}{2k+1}}\right)$. Moreover, the algorithm is $(\eps,\delta)$-CSDP and an $(n,\alpha, \beta)$ summation algorithm for $\alpha = \Tilde{O}\left(\frac{k^{3/2}}{\eps}\cdot n^{\frac{1}{2k+1}}\right)$. The same holds for bounded vector inputs using $M^{sum}_{vec}$, for each entry separately.
\end{corollary}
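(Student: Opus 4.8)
The plan is to derive Corollary~\ref{thm:treealgorithmSummationKShufflers} as a direct specialization of Theorem~\ref{thm:treealgorithmKShufflers}, by plugging in the concrete variance bound of the binary summation shuffle mechanism $M^{sum}_{bin}$. Recall from Theorem~\ref{thm:treealgorithmKShufflers} that the algorithm above is $(\eps,\delta)$-CSDP, that the per-time error is sub-gaussian with variance at most $O\!\left(V_{n,\frac{\eps}{k},\frac{\delta}{k}}\cdot k\cdot n^{\frac{2}{2k+1}}\right)$, and that it is an $(n,\alpha,\beta)$ summation algorithm with $\alpha = O\!\left(\sqrt{V_{n,\frac{\eps}{k},\frac{\delta}{k}}}\cdot\sqrt{k}\,n^{\frac{1}{2k+1}}\cdot\sqrt{\log(1/\beta)+\log n}\right)$. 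The only quantity that depends on the chosen shuffle mechanism is $V_{n,\eps',\delta'}$, the upper bound on the variance of the unbiased sub-gaussian error of that mechanism on batches of size at most $n$ with privacy parameters $\eps',\delta'$.

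First I would invoke the stated property of $M^{sum}_{bin}$ (Algorithm 3 of \citet{tenenbaum2021differentially}): for any $\eps'<1$ and $\delta'>0$ its additive error is unbiased, sub-gaussian, with variance $V_{n,\eps',\delta'} = O\!\left(\frac{\log(1/\delta')}{\eps'^2}\right)$. In our application each internal-node mechanism is run with privacy parameters $(\eps/k,\delta/k)$, so I would substitute $\eps' = \eps/k$ and $\delta' = \delta/k$, which gives $V_{n,\frac{\eps}{k},\frac{\delta}{k}} = O\!\left(\frac{\log(k/\delta)}{(\eps/k)^2}\right) = O\!\left(\frac{k^2\log(k/\delta)}{\eps^2}\right) = \Tilde{O}\!\left(\frac{k^2}{\eps^2}\right)$, where the $\Tilde{O}$ absorbs the $\log(k/\delta)$ factor (polylogarithmic in $n$ and $1/\delta$, since $k\le\log n$). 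I should also check that the hypothesis $\eps/k<1$ holds, which follows from the assumption $\eps<1$ and $k\ge 1$, and that $\delta/k<1/2$ follows from $\delta<1/2$; these are exactly the regimes in which the cited mechanism's guarantee applies, so the substitution is legitimate.

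Next I would propagate this into the two conclusions of Theorem~\ref{thm:treealgorithmKShufflers}. For the variance: $O\!\left(V_{n,\frac{\eps}{k},\frac{\delta}{k}}\cdot k\cdot n^{\frac{2}{2k+1}}\right) = O\!\left(\frac{k^2\log(k/\delta)}{\eps^2}\cdot k\cdot n^{\frac{2}{2k+1}}\right) = \Tilde{O}\!\left(\frac{k^3}{\eps^2}\,n^{\frac{2}{2k+1}}\right)$, which is the first claim. For the high-probability error bound: $\alpha = O\!\left(\sqrt{V_{n,\frac{\eps}{k},\frac{\delta}{k}}}\cdot\sqrt{k}\,n^{\frac{1}{2k+1}}\cdot\sqrt{\log(1/\beta)+\log n}\right) = O\!\left(\frac{k\sqrt{\log(k/\delta)}}{\eps}\cdot\sqrt{k}\,n^{\frac{1}{2k+1}}\cdot\sqrt{\log(1/\beta)+\log n}\right) = \Tilde{O}\!\left(\frac{k^{3/2}}{\eps}\,n^{\frac{1}{2k+1}}\right)$, absorbing the $\sqrt{\log(k/\delta)}$, $\sqrt{\log(1/\beta)}$, and $\sqrt{\log n}$ factors into the $\Tilde{O}$. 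The $(\eps,\delta)$-CSDP guarantee carries over verbatim from Theorem~\ref{thm:treealgorithmKShufflers} with no further work. Finally, for the vector case I would repeat the identical substitution using $M^{sum}_{vec}$ (Appendix C of \citet{chowdhury2022shuffle}), which has $V_{n,\eps',\delta'} = O\!\left(\frac{(\log(d/\delta'))^2}{\eps'^2}\right)$ per entry for $\eps'<15$, $\delta'<1/2$; setting $\eps'=\eps/k$, $\delta'=\delta/k$ yields $V_{n,\frac{\eps}{k},\frac{\delta}{k}} = \Tilde{O}(k^2/\eps^2)$ per coordinate as well, so the same asymptotic bounds hold entrywise.

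There is no real obstacle here — the statement is a plug-in corollary. The only points that require mild care are (i) confirming the privacy-parameter rescaling keeps $\eps/k$ and $\delta/k$ inside the valid range of the cited shuffle mechanisms (immediate from $\eps<1$, $\delta<1/2$, $k\ge1$), and (ii) being honest about which logarithmic factors — $\log(k/\delta)$, $\log n$, $\log(1/\beta)$ — are swept into $\Tilde{O}(\cdot)$, recalling that $\Tilde{O}$ hides polylogarithmic terms in $n$, $1/\delta$, $1/\beta$ and (since $k=O(\log n)$) also in $k$. Everything else is a one-line substitution into the already-proved Theorem~\ref{thm:treealgorithmKShufflers}.
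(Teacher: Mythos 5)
Your proposal is correct and follows essentially the same route as the paper: both treat the corollary as a plug-in specialization of Theorem~\ref{thm:treealgorithmKShufflers}, substituting the mechanism-specific variance $V_{n,\frac{\eps}{k},\frac{\delta}{k}} = O\!\left(\frac{k^2\log(k/\delta)}{\eps^2}\right)$ for $M^{sum}_{bin}$ (and the analogous $O\!\left(\frac{k^2\log^2(dk/\delta)}{\eps^2}\right)$ per entry for $M^{sum}_{vec}$) and absorbing the polylogarithmic factors into $\Tilde{O}(\cdot)$. Your additional sanity check that $\eps/k<1$ and $\delta/k<1/2$ keep the cited mechanisms within their valid regimes is a small but welcome bit of explicitness that the paper leaves implicit.
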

% Observe that in the setting of Algorithm~\ref{alg:summation}, where we have a constant number of $k=\log_2(n/2)-1/2$ shufflers, we get that $d=n^{\frac{2}{2k+1}}=n^{\frac{2}{2\cdot(\log_2(n/2)-1/2)+1}}=n^{\frac{1}{\log_2(n)}}=(2^{\log_2(n)})^{\frac{1}{\log_2(n)}}=2^{{\log_2(n)}\cdot \frac{1}{{\log_2(n)}}}=2^1=2$, which gives a binary tree as in Algorithm~\ref{alg:summation}.\footnote{This is equivalent to Algorithm~\ref{alg:summation} except it doesn't use single user shuffle sizes. Notice however that it gets exactly the same asymptotic error, since the intra-batch error over a constant batch of size 2 is constant.}

\section{Lower bounds}\label{sec:lb}
We consider the specific problem of binary \psco{} where the inputs $b_t\in \{0,1\}$ are binary. In this section we show that all algorithms in the concurrent shuffle model which are $(n,\alpha,\beta)$ binary summation algorithms have  error $\alpha=\tilde{\Omega}(n^{1/(2k+1)})$. This specifically means that in the sequential shuffle model ($k=1$), all algorithms have error $\alpha=\tilde{\Omega}(n^{1/3})$, and that for any number $k$ of concurrent shufflers, our algorithm from Section~\ref{sec:constrainedShufflers} has optimal asymptotic dependence on $n$.

Our proof relies on some components of the $\Omega(\sqrt{n})$ lower  bound of \citet{vadhan2017complexity} for private summation in LDP. Specifically, we also use the fact that conditioned on any specific transcript $t$ of a DP communication protocol (between  parties with independent randomness and private data), the following two facts hold. (1) by privacy, the bit of each user has constant variance, and (2) these bits are independent (Lemma~\ref{lem:independence} in the Appendix).
%
%they assume by contradiction that a given mechanism achieves better error, and fix a distribution of user values (each user $Ber(1/2)$). Then, conditioning on any specific transcript $t$, 1. by privacy, the bit of each user has constant variance, and 2. these bits are independent due to Lemma~\ref{lem:independence} in the appendix. This claim intuitively says that given a communication transcript of $n$ parties, each with their independent randomness and private data, and which communicate publicly depending on their data, randomness and previous communication, the private data of the parties is still independent. Hence, given $t$, the sum of bits of all users is a sum of $n$ independent variables with constant variance. 
As in \citet{vadhan2017complexity} 
these facts allow  us to use an anti-concentration bound (Lemma~\ref{lem:matouvsek} in the Appendix) which shows that the Hoeffding inequality is tight. Specifically, it shows that the sum of $m$ bounded independent random variables, each with variance $\sigma^2$, is \textbf{not} concentrated within any region of length $O(\sigma\sqrt{m})$ with high probability. 
%They conclude the $\Omega(\sqrt{n})$ lower bound, since $M$ deterministically maps each transcript $t$ to a single sum approximate.

To adapt this proof to our privacy model with $k$ concurrent shufflers we face multiple difficulties. For our proof, we cannot just use random bits as in \citet{vadhan2017complexity},  and we need to identify a hard and subtle family of distributions over user values, that allows us to use induction on $k$. We denote this family by $\Delta_{n,k}$. Let $c_\eps: = \frac{e^{\eps}}{e^{2\eps}+1}$. We define $\Delta_{n,k}$ to be the set of all distributions of binary allocations, which start with a sequence of at most $rep_{n,k}:=\frac{1}{2}n^{\frac{1}{2k+1}}\cdot c_\eps^{\frac{2k}{2k+1}}$ $0$'s or $1$'s, and continue with an alternation between a single $Ber(1/2)$ bit, and  $rep_{n,k}$ copies of another $Ber(1/2)$ bit, until length $n$ is reached.

We prove the theorem by induction on $k=0,1,\ldots$. For the induction step, we assume by contradiction the existence of an algorithm $M$ with error $o(rep_{n,k})$. We consider a distribution $\Delta\in \Delta_{n,k}$ and the resulting transcript random variable $T$ obtained by running $M$ on a vector $D$ of random user values  sampled from $\Delta$. Note that $T$ is determined by the sampled $D\sim \Delta$, and the internal randomness of the algorithm (in all shuffle mechanisms which were executed).

Then we split according to whether or not we allocated a mechanism over a large batch of size at least $Big_{n,k}:=n^{\frac{2k-1}{2k+1}}\cdot c_\eps^{\frac{2}{2k+1}}$  at any time. 
Intuitively, if we did, during the lifetime of this batch, we can use $M$ to get an algorithm for binary \psco{} within this  batch by computing differences of estimated sums of $M$. Since $M$ is assumed to be very accurate, the resulting mechanism contradicts the inductive argument for $Big_{n,k}$ users, $k-1$ shufflers and a distribution in $\Delta_{Big_{n,k},k-1}$. 

Assuming we didn't run a mechanism over a large batch (covered by Lemma~\ref{lem:NoConditionDprime} in the Appendix), let $Bits_{n,k}$ be the single (non-repeated) $Ber(1/2)$ bits of $D\sim \Delta$ which are evenly spaced at distance $Big_{n,k}$ of one another. For analysis, we consider an extended mechanism $M'$ which adds all the random bits of $D\sim \Delta$ which aren't of $Bits_{n,k}$ to the transcript as well, but returns the exact same value as $M$, hence has the same error. We show that the output of $M'$ is private with respect to the bits $Bits_{n,k}$ (Lemma~\ref{lem:privateOutput} in the Appendix), and this means (Lemma~\ref{lem:urioverdist} in the Appendix) that over the distribution $D\sim\Delta$, with high probability, conditioned on the transcript, each of the $O(n/Big_{n,k})$ bits in $Bits_{n,k}$ has constant variance. By defining a communication protocol which produces the transcript of $M'$, we conclude that the bits in $Bits_{n,k}$ are independent conditioned on the transcript (Lemma~\ref{lem:independence}).
Hence, by anti-concentration bounds (Lemma~\ref{lem:matouvsek}), we get an error lower bound of $\Tilde{\Omega}(\sqrt{\abs{Bits_{n,k}}})=\Tilde{\Omega}(\sqrt{n/Big_{n,k}})=\Tilde{\Omega}(rep_{n,k})$ in the sum approximation.

To split on the two cases above, we consider a partition of the transcript space, and apply the law of total probability to conclude the lower bound. 

\begin{theorem}\label{thm:lb}
Let $\eps$ and $\delta$ such that $\delta<\frac{\min(\eps,1)}{40n(k+1)}$, and let an $(\eps,\delta)$-CSDP $n$-user $k$-shuffler algorithm $M$ for binary \psco{} with adaptive mechanisms (encoders and sizes).  Consider a distribution $\Delta\in \Delta_{n,k}$ over user binary values. Then, $M$ has additive error $\alpha = \Omega(rep_{n,k})$ on some of its reported sum estimates with constant probability 
over the randomness of $D\sim \Delta$ and $M$.
\end{theorem}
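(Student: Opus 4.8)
The plan is to proceed by induction on $k$. The base case $k=0$ is immediate: with no shufflers the server is oblivious, so the estimates $\hat S_t$ form a fixed sequence independent of the data, while every $\Delta\in\Delta_{n,0}$ ends with a block of $\Omega(rep_{n,0})=\Omega(n)$ copies of a single $Ber(1/2)$ bit whose value the server cannot predict; hence $|\hat S_n-\sum_i b_i|=\Omega(rep_{n,0})$ with constant probability. For the inductive step, fix $k\ge 1$, assume the statement for $k-1$, and suppose towards a contradiction that the $(\eps,\delta)$-CSDP $k$-shuffler algorithm $M$ is an $(n,\alpha,\beta)$ summation algorithm with $\alpha=o(rep_{n,k})$ and $\beta$ a small constant. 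I would run $M$ on $D\sim\Delta$ for a carefully chosen $\Delta\in\Delta_{n,k}$, let $T$ denote the resulting transcript (the concatenation of the outputs of all executed shuffle mechanisms), and partition the transcript space into $\mathcal T_{\mathrm{big}}$ (some executed mechanism had a batch of size $\ge Big_{n,k}$) and its complement $\mathcal T_{\mathrm{small}}$. By the law of total probability it then suffices to show that, conditioned on whichever of these two events has non-negligible mass, $M$ errs by $\Omega(rep_{n,k})$ with constant conditional probability.

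\textbf{The case $\mathcal T_{\mathrm{small}}$} follows the blueprint of \citet{vadhan2017complexity} and is isolated in Lemma~\ref{lem:NoConditionDprime}. Let $Bits_{n,k}$ be the single (non-repeated) $Ber(1/2)$ coordinates of $D$ taken $Big_{n,k}$ apart, so $|Bits_{n,k}|=\Theta(n/Big_{n,k})$. I would pass to the mechanism $M'$ that additionally reveals every coordinate of $D$ outside $Bits_{n,k}$; it has the same estimates, hence the same error, as $M$, and since publishing other users' values cannot harm the privacy of a $Bits_{n,k}$ user, $M'$ is still $(\eps,\delta)$-DP with respect to each coordinate of $Bits_{n,k}$ (Lemma~\ref{lem:privateOutput}). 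Two facts then hold conditioned on a typical transcript $\tau$ of $M'$: (i) by Lemma~\ref{lem:urioverdist} each $Bits_{n,k}$ coordinate still has $\Theta(1)$ variance (this is where the hypothesis $\delta<\min(\eps,1)/(40n(k+1))$ enters); and (ii) because in $\mathcal T_{\mathrm{small}}$ every batch has size $<Big_{n,k}$, no two $Bits_{n,k}$ coordinates ever enter a common shuffle mechanism, so re-packaging the experiment as a DP communication protocol with one party per $Bits_{n,k}$ coordinate and invoking Lemma~\ref{lem:independence} shows these coordinates are mutually independent given $\tau$. Since $\hat S_n$ is a deterministic function of $\tau$ and $\hat S_n-\sum_{i\notin Bits_{n,k}}b_i$ estimates $\sum_{i\in Bits_{n,k}}b_i$ within $\alpha$, the anti-concentration bound (Lemma~\ref{lem:matouvsek}) says this sum escapes every interval of length $O(\sqrt{|Bits_{n,k}|})=O(\sqrt{n/Big_{n,k}})=O(rep_{n,k})$ with constant probability, contradicting accuracy once $\alpha=o(rep_{n,k})$ is small enough; here $rep_{n,k}=\Theta(\sqrt{n/Big_{n,k}})$ is exactly the scaling identity $rep_{n,k}^2\cdot Big_{n,k}=\Theta(n\,c_\eps^2)$.

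\textbf{The case $\mathcal T_{\mathrm{big}}$} reduces to the inductive hypothesis with $k-1$ shufflers. Condition on a transcript in which a batch $B$ with $|B|\ge Big_{n,k}$ is allocated, beginning at time $a$. I would cut out a sub-window $[a',a'+Big')\subseteq B$ with $a'$ the first block boundary of $\Delta_{n,k}$ at or after $a$ and $Big'=Big_{n,k}-rep_{n,k}-1=(1-o(1))Big_{n,k}$. Throughout this window the shuffler carrying $B$ has not yet reported, so the portion of $M$'s transcript visible during the window is produced by the remaining $\le k-1$ shufflers; therefore, feeding fresh inner values into a simulation of $M$ — generating fresh messages for the already-revealed outside coordinates, and sending only dummy messages to the shuffler of $B$ and to any straddling batch completing after the window — yields a $(k-1)$-shuffler algorithm that is $(\eps,\delta)$-CSDP on the inner coordinates (its visible transcript being a coordinate-projection of $M$'s transcript, i.e.\ post-processing of an $(\eps,\delta)$-CSDP mechanism). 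Its estimates $\hat S_t-\hat S_{a-1}$ for $t\in[a',a'+Big')$ approximate the inner running sums within $2\alpha$. Because $a'$ is a block boundary and $\Delta_{n,k}$ is independent across block boundaries, the inner coordinates are distributed according to some $\Delta'\in\Delta_{Big',k-1}$; and by the key identity $rep_{Big_{n,k},k-1}=rep_{n,k}$ (so $rep_{Big',k-1}=(1-o(1))rep_{n,k}$), the bound $2\alpha=o(rep_{n,k})$ contradicts the inductive hypothesis for $k-1$ shufflers on $Big'$ users — whose own $\delta$-hypothesis is implied by ours since $Big'\,k\le n(k+1)$.

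\textbf{Main obstacle.} The delicate step is making the $\mathcal T_{\mathrm{big}}$ reduction rigorous: one must verify that conditioning on ``$B$ is allocated at the (adaptively chosen) time $a$'' does not distort the distribution of the inner coordinates away from $\Delta_{Big',k-1}$ despite the correlations that the repeated bits build into $\Delta_{n,k}$, and that the simulated inner algorithm is a genuine $(k-1)$-shuffler CSDP mechanism even though the other shufflers may run batches straddling the window's endpoints whose allocation depends on pre-$a$ outputs. The design of $\Delta_{n,k}$ (a free prefix of up to $rep_{n,k}$ fixed bits followed by alternating single/repeated $Ber(1/2)$ blocks, so that block boundaries are ``fresh'') is precisely what is tailored to absorb these issues, and it is also where the slack in $\delta<\min(\eps,1)/(40n(k+1))$ is spent; the remaining ingredients — the privacy-to-posterior-variance step, the communication-protocol repackaging for independence, and the anti-concentration constants — are routine given the cited lemmas.
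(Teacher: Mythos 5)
Your proposal matches the paper's proof in structure and in every essential ingredient: induction on $k$, the base case via the final $\Omega(n)$-length repeated random block, the partition into ``all batches small'' versus ``some big batch,'' the auxiliary mechanism $M'$ that additionally publishes the non-$Bits_{n,k}$ coordinates, the use of Lemma~\ref{lem:privateOutput} plus Lemma~\ref{lem:urioverdist} to get posterior $\Theta(1)$ variance on a high-probability set of transcripts, Lemma~\ref{lem:independence} for posterior independence, Lemma~\ref{lem:matouvsek} for anti-concentration, and the reduction of the big-batch event to a $(k-1)$-shuffler instance running on the window that the slow shuffler is blind to. The only real divergence is cosmetic: you realign the inner window to the next block boundary and work with $Big'=(1-o(1))Big_{n,k}$, whereas the paper keeps the window anchored at $\tau$ and instead conditions on the realized prefix $D_p$, which turns the partially-completed repeated segment into the ``constant prefix of length $\le rep_{n,k}$'' that the family $\Delta_{\cdot,k-1}$ is built to tolerate — the same slack spent in two equivalent ways. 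Likewise, the paper refines your coarse partition $\{\mathcal T_{\mathrm{big}},\mathcal T_{\mathrm{small}}\}$ into $T'_{\mathrm{small}}\cap E'$, $T'_{\mathrm{small}}\cap\overline{E'}$, and one cell per prefix up to the first big batch, which is exactly the bookkeeping needed to make your ``conditioned on a typical transcript'' and ``condition on a transcript in which $B$ is allocated at time $a$'' rigorous and to carry the success probability $P_{n,k}=\frac{\beta_0}{2}(1-\frac{2n\delta k}{\min(\eps,1)})$ through the induction; the obstacle you flag is the one the paper resolves by fixing $D_p$ and $S_p$ within each cell before invoking the inductive hypothesis.
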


\section{Contextual linear bandits}\label{sec:contextualLinearBandits}
In the linear contextual bandit problem \citep{auer2002using, chu2011contextual}, at each time $t=1,\ldots,n$ ($n$ is called the \textit{horizon}), a new user with an arbitrary (and possibly adversarial) private context $c_t\in C$ is recommended an action.
The server's goal is to recommend the user an action $a_t$ which maximizes the resulting expected reward. The reward function is characterized by the unknown parameter vector $\theta^* \in \reals^d$ and a known mapping $\phi:C\times A \to \reals^d$, from context-action pairs to a $d$-dimensional feature vector. Specifically, the reward is sampled as $\ip{\theta^*}{\phi(c_t,a_t)}+\eta_t$, where $\eta_t$ is a zero-mean sub-gaussian noise with variance at most a constant $\sigma^2$. The actions A and contexts C are arbitrary and can vary with time. We measure the utility of an algorithm for horizon $n$ by the cumulative pseudo-regret defined to be 
$$Reg(n)=\sum_{t=1}^{n}\left[max_{a\in A}\ip{\theta^*}{\phi(c_t,a)} - \ip{\theta^*}{\phi(c_t,a_t)}\right],$$
which quantifies the expected loss due to not knowing the optimal action at each time, since the parameter vector $\theta^*$ is unknown.

At each time  $t=1,\ldots,n$, the interaction between the server and the $t$'th user is according to the following steps:\\
(1) The server sends the user information (effectively an action selection rule) which deterministically maps any context $c$ to action $a$.\\
(2) The user applies the selection rule to its context $c_t$  to select the action $a_t\in A$ and gets the resulting vector $x_t=\phi(c_t,a_t)$.\\
(3) The user receives a stochastic reward $y_t=\ip{\theta^*}{x_t}+\eta_t$, where $\eta_t$ is zero-mean sub-gaussian noise with variance at most a constant $\sigma^2$. \\
(4) The user sends information to the server (in our case, encoded information of the vector $x_ty_t$ and matrix $x_tx_t^T$ through the shufflers to the server).\footnote{Since $a_t$ and $x_t$ are functions of the private information $c_t$, and $y_t$ is private information as well, they all include user-private information.}\\
 (5) The server updates the action selection rule.

The Linear Upper Confidence Bound (LinUCB) algorithm \citep{abbasi2011improved} is a well-studied  algorithm for the linear contextual bandits problem that has optimal regret. It maintains a confidence ellipsoid in which $\theta^*$ resides with high probability.
This ellipsoid at time $t$ is defined by the matrix $\sum_{s\leq t}x_sx_s^T$ and the vector $\sum_{s\leq t}x_sy_s$.
In LinUCB, the action decision rule sent to the users selects the action for each user optimisitically: It chooses the action which maximizes the inner product of the action with any point of the confidence ellipsoid. 

Contextual linear bandits are applicable in internet advertisement selection, recommendation systems and many more fields. This motivated a line of work \citep{shariff2018differentially, zheng2020locally, chowdhury2022shuffle, garcelon2022privacy} studying linear contextual bandit problems under the lens of differential privacy, to guarantee that the users’ private information cannot be inferred by an adversary during this learning process. Since LinUCB reduces the problem to accurately maintaining the matrix $\sum_{s\leq t}x_sx_s^T$ and the vector $\sum_{s\leq t}x_sy_s$, it suffices to ensure privacy with respect to the variables $x_t$ and $y_t$. \citet{shariff2018differentially} achieved this by injecting noise into these cumulative sums. 
Since the server only uses noisy versions of $\sum_{s\leq t}x_sx_s^T$ and 
$\sum_{s\leq t}x_sy_s$, they enlarged the confidence ellipsoid to ensure it contains $\theta^*$
with high probability.
Let $\rho_{max}$ be the largest magnitude of noise of any cumulative sum estimate in the sequence, then the adapted LinUCB algorithm of \citet{shariff2018differentially} ensures regret which scales like $\Tilde{O}(\sqrt{n}\cdot \sqrt{\rho_{max}})$.

For centralized DP, the error of a summation algorithm for horizon $n$ is $\rho_{max}=O(polylog(n))=\Tilde{O}(1)$, so \citet{shariff2018differentially} obtained a LinUCB-based centralized DP algorithm with regret $\Tilde{O}(\sqrt{n}\cdot \sqrt{1})=\Tilde{O}(\sqrt{n})$. 
For Local Differential Privacy (LDP), the error of a summation algorithm for horizon $n$ is $\rho_{max}=\Tilde{O}(\sqrt{n})$, so \citet{zheng2020locally} used the methodology of \citet{shariff2018differentially} to obtain a LinUCB-based LDP algorithm with regret $\Tilde{O}(\sqrt{n}\cdot \sqrt{\sqrt{n}})=\Tilde{O}(n^{3/4})$.
\citet{chowdhury2022shuffle} considered Shuffle Differential Privacy (SDP), grouping the users into sequential batches of constant size $B$. The error of their summation algorithm for horizon $n$ scales like $\rho_{max}=\Tilde{O}(\sqrt{n/B})$, so by extending the analysis of \citet{shariff2018differentially} to batches, they obtained an algorithm with regret $\Tilde{O}(B+\sqrt{n}\cdot \sqrt{\sqrt{n/B}})=\Tilde{O}(B+n^{3/4}/B^{1/4})$. 
Setting $B=n^{3/5}$, they got $\Tilde{O}(n^{3/5})$ regret and posed 
 the open question of whether regret $\Tilde{O}(\sqrt{n})$ can be achieved under any notion of privacy stronger than the centralized model. 

We give a positive answer to this open question in our Concurrent Shuffle DP (CSDP) model. Corollary~\ref{thm:treealgorithmSummation} 
% With $k=\log n$ shufflers,  
gives a summation algorithm for horizon $n$ with error $\rho_{max}=\Tilde{O}\left(\frac{k^{3/2}}{\eps}\cdot n^{\frac{1}{2k+1}}\right)$ for $k$ concurrent shufflers. Hence using the framework of \citet{shariff2018differentially}, we get an algorithm for contextual linear bandits with regret which scales like $\Tilde{O}\left(\sqrt{n}\cdot \sqrt{\frac{k^{3/2}}{\eps}\cdot n^{\frac{1}{2k+1}}}\right)=\Tilde{O}\left(\frac{k^{3/4}}{\sqrt{\eps}}n^{\frac{k+1}{2k+1}}\right)$. This is formalized in the following theorem.
\begin{theorem}\label{thm:contextualLinearKShuffles}
    Fix a horizon $n\in \naturals$, a number of shufflers $k\in \naturals$, and privacy budgets $\eps<1$ and $\delta < 1/2$.  
    Then the algorithm as described above (and detailed in Appendix~\ref{sec:concShufflerLinUCBAppendix}) is $(\eps,\delta)$-CSDP and has regret $Reg(n)=\tilde{O}\left(\frac{k^{3/4}n^{\frac{k+1}{2k+1}}}{\sqrt{\eps}}\cdot (\sigma +d)\right)$.
\end{theorem}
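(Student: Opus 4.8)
The plan is to follow the template of \citet{shariff2018differentially}: run a LinUCB-style algorithm \citep{abbasi2011improved} in which the only statistics the server needs are the Gram matrix $G_t=\sum_{s\le t}x_s x_s^T$ and the response vector $u_t=\sum_{s\le t}x_s y_s$, and to maintain \emph{private, noisy} running estimates $\widetilde G_t,\widetilde u_t$ of these two quantities by invoking our concurrent-shuffle tree-based summation algorithm (the one analyzed in Corollary~\ref{thm:treealgorithmSummationKShuffles}) with the bounded vector/matrix mechanism $M^{sum}_{vec}$. Concretely, before sending anything the $t$'th user clips its reward to $\widehat y_t=\mathrm{clip}(y_t,[-L,L])$ with $L=\Theta(\sigma\sqrt{\log n})$ and feeds the (flattened) pair $(x_t\widehat y_t,\ x_tx_t^T)$ into the shufflers; the server forms an \emph{enlarged} confidence ellipsoid around the ridge estimator built from $\widetilde G_t,\widetilde u_t$ (inflated by the worst-case noise magnitude so that $\theta^\star$ stays inside with high probability) and plays optimistically, exactly as in \citet{shariff2018differentially}.

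For privacy, note that the entire interaction with the users is mediated by the $k$ concurrent shufflers used by the summation sub-routine: both coordinates of $(x_t\widehat y_t,\ x_tx_t^T)$ can be packed into the single message that user $t$ already sends for each internal node of the tree on its root-to-leaf path, so user $t$ participates in at most $k$ shuffle mechanisms, each of which we make $(\eps/k,\delta/k)$-SDP. Since the action-selection rules the server broadcasts and all reported estimates are deterministic post-processing of the concatenated shuffler outputs, Theorem~\ref{thm:treealgorithmKShufflers} (the CSDP guarantee of the summation algorithm, together with simple composition over the $\le k$ mechanisms a given user joins) immediately yields that the bandit algorithm is $(\eps,\delta)$-CSDP. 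Boundedness of the per-user contributions — needed to instantiate $M^{sum}_{vec}$ — follows from the standard normalization $\|\phi(\cdot,\cdot)\|_2\le 1$ and the clipping of $y_t$.

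For the regret, Corollary~\ref{thm:treealgorithmSummationKShuffles} gives that, at every time $t$ and in every coordinate, the additive error of $\widetilde G_t$ and $\widetilde u_t$ is zero-mean sub-gaussian with parameter $\tilde O\!\left(k^{3/2}n^{1/(2k+1)}/\eps\right)$. A union bound over the $n$ times and the $O(d^2)$ coordinates, followed by matrix/vector concentration to pass from coordinate-wise to operator-norm (for the matrix) and $\ell_2$-norm (for the vector) control, bounds $\rho_{\max}$ — the largest magnitude of noise in any cumulative-sum estimate along the run — by $\tilde O\!\left(\mathrm{poly}(d)\cdot k^{3/2}n^{1/(2k+1)}/\eps\right)$ up to the clipping scale $L=\tilde O(\sigma)$. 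Plugging this $\rho_{\max}$ into the noisy-LinUCB regret guarantee of the form $Reg(n)=\tilde O\!\big(\sqrt{n}\,(\,d+\sigma+\sqrt{\rho_{\max}}\,)\big)$ from \citet{shariff2018differentially}, and using $\sqrt{n}\cdot n^{1/(2(2k+1))}=n^{(k+1)/(2k+1)}$, collapses the dominant term to $\tilde O\!\left(k^{3/4}n^{(k+1)/(2k+1)}/\sqrt{\eps}\cdot(\sigma+d)\right)$; the clipping step contributes no extra regret because $|y_t|\le L$ with probability $1-n^{-\omega(1)}$.

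I expect the main obstacle to be the utility half: one must verify that the \citet{shariff2018differentially} confidence-ellipsoid machinery still goes through when the perturbation of $G_t$ is not a single fixed-distribution shift but the \emph{time-varying} aggregate of tree-node noises (whose structure changes as shufflers fill up), and in particular that $\widetilde G_t$ can be regularized (shifted by a multiple of $\rho_{\max}I$) so that it remains positive definite and dominates $G_t$, keeping $\theta^\star$ in the inflated ellipsoid for \emph{all} $t$ simultaneously. The sub-gaussianity supplied by Corollary~\ref{thm:treealgorithmSummationKShuffles} is exactly what makes this uniform control possible, but tracking the $\mathrm{poly}(d)$ and $\mathrm{polylog}(n)$ factors through the concentration and the ellipsoid-width computation — so that the final bound reads $(\sigma+d)$ rather than a worse power of $d$ — is the delicate part. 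I would handle it by reusing, essentially verbatim, the batched-LinUCB reduction of \citet{chowdhury2022shuffle}, substituting our sharper $\rho_{\max}$ in place of their $\tilde O(\sqrt{n/B})$.
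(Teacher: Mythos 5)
Your proposal matches the paper's proof in all essentials: it runs the LinUCB scheme of \citet{shariff2018differentially}, feeds the per-user statistics $(x_t y_t, x_t x_t^T)$ through the concurrent-shuffle tree-based summation algorithm of Corollary~\ref{thm:treealgorithmSummationKShufflers} with $M^{sum}_{vec}$, derives $\rho_{\max}$ by a union bound over the $n$ times and $O(d^2)$ coordinates, and plugs that into the noisy-LinUCB regret guarantee — explicitly reusing the batched-LinUCB reduction of \citet{chowdhury2022shuffle}, which is exactly what the paper does. The one cosmetic divergence is that you clip $y_t$ to $[-L,L]$ to enforce boundedness, whereas the paper instead imposes the normalization Assumption~\ref{assumption:norm} ($y_t\in[0,1]$, $\|\theta^*\|_2\le1$, $\|\phi\|_2\le1$) directly; both serve the same role, and your worry about the $\mathrm{poly}(d)$ factor is resolved in the paper by setting $\lambda=2\sigma'd$, which gives $\|H_t\|,\|h_t\|\le\sigma'd$, $\gamma=\sqrt{\sigma'd}$, and hence the clean $(\sigma+d)$ factor after plugging into Theorem~\ref{ss18}.
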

In particular, for $k=\log n$ shufflers we get the following corollary.
\begin{corollary}\label{cor:contextualLinearLogShuffles}
    Fix a horizon $n\in \naturals$, and privacy budgets $\eps<1$ and $\delta < 1/2$.  
    Then the algorithm as described above (and detailed in Appendix~\ref{sec:concShufflerLinUCBAppendix}) using $k=\log n$  shufflers is $(\eps,\delta)$-CSDP and has regret $Reg(n)=\tilde{O}\left(\frac{\sqrt{n}}{\sqrt{\eps}}\cdot (\sigma +d)\right)$.\footnote{Actually, taking $k=\frac{1}{4}\log n -\frac{1}{2}$ is large enough to ensure the same asymptotic regret.} 
\end{corollary}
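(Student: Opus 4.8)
The plan is to obtain Corollary~\ref{cor:contextualLinearLogShuffles} by specializing Theorem~\ref{thm:contextualLinearKShuffles} to $k=\log n$; the entire argument reduces to an elementary estimate of the exponent $\frac{k+1}{2k+1}$ at this particular value of $k$, together with some $\tilde O$-bookkeeping.

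First I would apply Theorem~\ref{thm:contextualLinearKShuffles} with $k=\log n$. Nothing in that theorem, nor in the summation guarantee of Corollary~\ref{thm:treealgorithmSummationKShufflers} it builds on, restricts $k$ beyond $k\in\naturals$ and the standing assumptions $\eps<1$, $\delta<1/2$, so this is a legitimate instantiation and it directly yields an $(\eps,\delta)$-CSDP algorithm with
\[
Reg(n)=\tilde O\!\left(\frac{(\log n)^{3/4}\,n^{\frac{\log n+1}{2\log n+1}}}{\sqrt\eps}\cdot(\sigma+d)\right).
\]
The privacy claim of the corollary is then immediate, being exactly the privacy claim of the theorem for this choice of $k$.

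Next I would simplify the right-hand side. The factor $(\log n)^{3/4}$ is polylogarithmic in $n$ and is absorbed into $\tilde O(\cdot)$. For the power of $n$, rewrite the exponent as
\[
\frac{\log n+1}{2\log n+1}=\frac12+\frac{1}{2(2\log n+1)},
\]
so that $n^{\frac{\log n+1}{2\log n+1}}=\sqrt n\cdot n^{\frac{1}{2(2\log n+1)}}$. Since $\frac{1}{2(2\log n+1)}\le\frac{1}{4\log n}$ and $n^{1/\log n}$ equals the constant $2$ (taking logarithms base $2$), we get $n^{\frac{1}{2(2\log n+1)}}\le 2^{1/4}=O(1)$, hence $n^{\frac{\log n+1}{2\log n+1}}=O(\sqrt n)$. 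Substituting back gives $Reg(n)=\tilde O\!\left(\frac{\sqrt n}{\sqrt\eps}\cdot(\sigma+d)\right)$, as claimed. Running the same computation with $k=\tfrac14\log n-\tfrac12$ yields $2k+1=\tfrac12\log n$, whence $n^{\frac{1}{2(2k+1)}}=n^{1/\log n}=2$ already suffices, which justifies the footnote; this smaller $k$ also makes the tree degrees $d=\Theta(n^{2/(2k+1)})$ and $d_{low}=\Theta(n^{1/(2k+1)})$ integral, which is convenient.

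There is no real obstacle here — the step that needs the most care is simply the $\tilde O$-bookkeeping: one should confirm that every polylog-in-$n$ factor hidden by Theorem~\ref{thm:contextualLinearKShuffles}, once $k$ is set to the (itself polylogarithmic) value $\log n$, together with the newly introduced factors $(\log n)^{3/4}$ and $2^{1/4}$, stays polylogarithmic in $n$, so that the final bound is genuinely $\tilde O\!\left(\sqrt n/\sqrt\eps\cdot(\sigma+d)\right)$.
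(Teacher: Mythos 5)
Your proposal is correct and follows exactly the route the paper takes: instantiate Theorem~\ref{thm:contextualLinearKShuffles} at $k=\log n$ and absorb the resulting polylogarithmic factors and the bounded $n^{1/(2(2\log n+1))}\le 2^{1/4}$ blowup into the $\tilde O(\cdot)$. The paper's own proof is the one-line remark that the corollary follows by applying the theorem for $k=\log n$; you simply make the elementary exponent arithmetic explicit.
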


\section{Concluding remarks}
In this paper, we introduced and analyzed the novel concurrent shuffle model of differential privacy under continual observation. We demonstrated the effectiveness of the concurrent shuffle model in the problem of private summation, to get improved and tight algorithms, and in the contextual linear bandit problem, showing improved error bounds and similar privacy guarantees compared to the sequential shuffle model. 
Specifically, for $k=\log n$ shufflers, we use a variation of the tree-based algorithm of \citet{dwork2010differential} to get an algorithm for \psco{} with polylogarithmic error, and for contextual linear bandits, we close the gap of \citet{chowdhury2022shuffle} to the centralized model and achieve regret with asymptotic dependence on $n$ of $\Tilde{O}(\sqrt{n})$.

There are several directions for future work. One possibility is to explore other problems and applications where the concurrent shuffle model can be applied to achieve improved error bounds and strong privacy guarantees.

\section*{Disclosure of Funding}
This work is partially supported by Israel Science Foundation (grants 993/17,1595/19,1871/19), Len Blavatnik and the Blavatnik Family Foundation, the European Research Council (ERC) under the European Union’s Horizon 2020 research and innovation program (grant agreement 882396),  the Yandex Initiative for Machine Learning at Tel Aviv University.

% In the unusual situation where you want a paper to appear in the
% references without citing it in the main text, use \nocite
% \nocite{langley00}

\bibliography{references}
\bibliographystyle{icml2023}

%%%%%%%%%%%%%%%%%%%%%%%%%%%%%%%%%%%%%%%%%%%%%%%%%%%%%%%%%%%%%%%%%%%%%%%%%%%%%%%
%%%%%%%%%%%%%%%%%%%%%%%%%%%%%%%%%%%%%%%%%%%%%%%%%%%%%%%%%%%%%%%%%%%%%%%%%%%%%%%
% APPENDIX
%%%%%%%%%%%%%%%%%%%%%%%%%%%%%%%%%%%%%%%%%%%%%%%%%%%%%%%%%%%%%%%%%%%%%%%%%%%%%%%
%%%%%%%%%%%%%%%%%%%%%%%%%%%%%%%%%%%%%%%%%%%%%%%%%%%%%%%%%%%%%%%%%%%%%%%%%%%%%%%
\newpage
\appendix
\onecolumn
\section{Missing proofs from Section~\ref{sec:concurrentshuffler}}

\subsection{Missing proofs from Section~\ref{sec:concurrentshufflerPrivateSum}}

\begin{proof}[Proof of Theorem~\ref{thm:treealgorithm}]
Observe that the major differences between our algorithm and the interpretation of the tree-based algorithm of \citet{dwork2010differential} as presented in the Introduction (see Figure~\ref{fig:treeBasedMechanism}) are (1) our noise for each internal node is slightly different due to the shuffle mechanism, (2) since we do not allocate a shuffler for a single user (which would be wasteful), we have no internal node directly above each leaf so we produce the sum estimates using a different set of internal nodes.

The privacy is trivial by simple composition, since in each batch we use an $(\eps/\log n,\delta/\log n)$-SDP mechanism, and for each user there are at most $\log n$ mechanisms in which they participate, matching the internal nodes that are its ancestors.\footnote{For simplicity of presentation, we used simple composition instead of advance composition, which would shave another $\sqrt{\log n}$ from the final counter error $\alpha$ and another $\log n$ factor from the sub-gaussian variance in Corollary~\ref{thm:treealgorithmSummation}.}

To analyze the accuracy, observe that for each time $t$, we get a sum estimate by adding at most $\abs{V^\star_t}\leq \log n$ independent noises (one from each sum estimate), each one has zero mean and is sub-gaussian with variance $V_{n, \frac{\eps}{\log n}, \frac{\delta}{\log n}}$. Therefore, the sum of noises at each time $t$ is sub-gaussian with variance at most $
V_{n, \frac{\eps}{\log n}, \frac{\delta}{\log n}}\cdot \abs{V^\star_t}=V_{n, \frac{\eps}{\log n}, \frac{\delta}{\log n}}\log n$. By Hoeffding's inequality, with probability at least $1-\beta/n$, the magnitude of the sum of noises is at most $O\left(\sqrt{\log(1/\beta) + \log n} \cdot \sqrt{V_{n, \frac{\eps}{\log n}, \frac{\delta}{\log n}}\log n }\right) = O\left(\sqrt{V_{n, \frac{\eps}{\log n}, \frac{\delta}{\log n}}}\cdot (\sqrt{\log (1/\beta)\log n}+\log n)\right)$. 
By a union bound over the horizon $n$ of the algorithm, with probability at least $1-\beta$, in all $t=1,\ldots,n$ simultaneously, this sum of noises has magnitude $O\left(\sqrt{V_{n, \frac{\eps}{\log n}, \frac{\delta}{\log n}}}\cdot (\sqrt{\log (1/\beta)\log n}+\log n)\right)$.

For any even time $t$, since a shuffle mechanism has just ended at time $t$, the sum estimate is unbiased and the error is precisely the sum of noises bounded above. For any odd time $t$, we approximate its sum using the estimate of time $t-1$, so we accumulate an additional error $\abs{b_t}\leq 1$, which is dominated by the error above due to the sum of noises.
\end{proof}

\begin{proof}[Proof of Corollary~\ref{thm:treealgorithmSummation}]
    We first consider the binary summation setting.  \citet{tenenbaum2021differentially} showed that the error of $M^{sum}_{bin}$ on each single batch of size at most $n$ is sub-gaussian with variance at most $V_{n,\eps,\delta} = O\left(\frac{\log (1/\delta)}{\eps^2}\right)$, which for parameters $(n,\frac{\eps}{\log n},\frac{\delta}{\log n})$ gives variance $V_{n,\frac{\eps}{\log n},\frac{\delta}{\log n}}=O\left(\frac{\log (\log n/\delta)\log^{2} n}{\eps^2}\right)$.
    Applying Theorem~\ref{thm:treealgorithm} yields an $(n,\alpha, \beta)$ summation algorithm for $\alpha = O\left(\sqrt{V_{n, \frac{\eps}{\log n}, \frac{\delta}{\log n}}}\cdot (\sqrt{\log (1/\beta)\log n}+\log n)\right) = O\left(\sqrt{\frac{\log (\log n/\delta)\log^{2} n}{\eps^2}}\cdot (\sqrt{\log (1/\beta)\log n}+\log n)\right)=O\left(\frac{\sqrt{\log (\log n/\delta)}\left(\sqrt{\log (1/\beta)}\log^{1.5}n+\log^{2} n\right)}{\eps}\right)$, and in each time $t$ the sum estimate error is sub-gaussian with variance at most $O\left(\frac{\log (\log n/\delta)\log^{3} n}{\eps^2}\right)$.

    For the bounded vector setting, in the proof of Theorem C.1 of \citet{chowdhury2022shuffle}, they use the error bounds from Lemma 3.1 of \citet{cheu2021shuffle}, to 
    % show that over a batch size of $B$, if one chooses 
    % $$g=\max\{2\sqrt{B},d,4\},~~b=\frac{24\cdot10^4\cdot g^2\cdot \left(\log \left(\frac{4\cdot (d^2+1)}{\delta}\right)\right)^2}{\eps^2 B},~~p=1/4,$$
    % then their resulting 
    yield a mechanism $P_{vec}$, which we denote by $M^{sum}_{vec}$, which is $(\eps,\delta)$-SDP, and gives an unbiased estimate of the sum with noise which is sub-gaussian with variance $O\left(\frac{(\log (d/\delta))^2}{\eps^2}\right)$ within each entry.
    For parameters $(n,\frac{\eps}{\log n},\frac{\delta}{\log n})$, this gives variance $V_{n,\frac{\eps}{\log n},\frac{\delta}{\log n}}=O\left(\frac{(\log (d\log n /\delta))^2\log^2n}{\eps^2}\right)$.
    Applying Theorem~\ref{thm:treealgorithm} for each entry, this yields an $(n,\alpha, \beta)$ summation algorithm for $\alpha = O\left(\sqrt{V_{n, \frac{\eps}{\log n}, \frac{\delta}{\log n}}}\cdot (\sqrt{\log (1/\beta)\log n}+\log n)\right) = O\left(\sqrt{\frac{(\log (d\log n /\delta))^2\log^2n}{\eps^2}}\cdot (\sqrt{\log (1/\beta)\log n}+\log n)\right)=O\left(\frac{\log (d\log n/\delta)\cdot \left(\sqrt{\log (1/\beta)}\log ^{1.5}n + \log^{2} n\right)}{\eps}\right)$, and in each time $t$ the sum estimate error is sub-gaussian with variance at most $O\left(\frac{(\log (d\log n/\delta))^2}{\eps^2}\log^3 n\right)$ for each entry.
\end{proof}

\subsection{Missing parts from Section~\ref{sec:constrainedShufflers}}
We first give the missing algorithm from Section~\ref{sec:constrainedShufflers}, for \psco{} using $k\ll \log n$ concurrent shufflers, and then turn to the missing proofs. This algorithm is a modification of Algorithm~\ref{alg:summation}, using different degrees for different levels.
\begin{algorithm}[H]
\SetAlgoLined
\textbf{Tree.} Instantiate a tree with leaves as the numbers $1,\ldots,n$, where each level $i$ has degree $d=\Theta(n^{\frac{2}{2k+1}})$ except the lowest level (directly above the leaves) which has degree $d_{low}=\Theta(n^{\frac{1}{2k+1}})$.

\textbf{Internal Nodes.} Each internal node $v$ in the tree, is associated with the set of leaf nodes $L_v$ that are descendants of $v$.

\textbf{Processing.} For every time period $t\in \{1,\ldots,n\}$ corresponding to the $t$'th leaf in the tree:

\begin{enumerate}
\item For every internal node $v$ for which $t$ is its leftmost descendant, allocate a new shuffle mechanism $M^{sum}_{v}$ for private summation with privacy parameters $(\eps/k,\delta/k)$ and shuffle size $\abs{L_v}$, to the shuffler corresponding to the level of $v$.\;
\item The leaf node $t$ receives value $b_t$.\;
\item The leaf node $t$ participates in all the shuffle mechanisms $M^{sum}_{v}$ such that $t\in L_v$.\;
\item Execute all the mechanisms of the full shufflers, i.e., the mechanisms $M^{sum}_{v}$ such that $t$ is the rightmost descendant of $v$, and mark their corresponding shufflers inactive. For each such $v$, we analyze the shuffler output to get an estimate $S_v$ of the sum of values of $L_v$.\;
\item Output $\sum_{v\in V^{\star}_{t}}S_v$, where $V^{\star}_{t}$ is defined as in Definition~\ref{def:vstartForBinaryTree} (which holds for general trees).\;
\end{enumerate}
\caption{\CSTA (Concurrent Shuffle Tree-Based Algorithm for $k$ concurrent shufflers)}
\label{alg:summationv3}
\end{algorithm}

\begin{proof}[Proof of Theorem~\ref{thm:treealgorithmKShufflers}]
Observe that the difference from Theorem~\ref{thm:treealgorithm} is that now we have two sources of errors -- the inter-batch and intra-batch error.

The privacy is trivial by simple composition, since in each batch we use an $(\eps/k,\delta/k)$-SDP mechanism, and for each user there are at most $k$ mechanisms in which they participate, matching the internal nodes that are its ancestors.\footnote{For simplicity of presentation, we used simple composition instead of advance composition, which would shave another $\sqrt{k}$ from the final counter error $\alpha$ and another $k$ factor from the sub-gaussian variance in Corollary~\ref{thm:treealgorithmSummationKShufflers}.}

To analyze the accuracy, observe that for each time $t$ we get a sum estimate by adding at most $\abs{V^\star_t}=k(d-1)= O\left(k\cdot n^{\frac{2}{2k+1}}\right)$ independent noises (one from each sum estimate), each one has zero mean and is sub-gaussian with variance $V_{n,\frac{\eps}{k},\frac{\delta}{k}}$. This is since by the definition of $V^\star_t$, and since each node of height $1,\ldots,k$ of the tree has at most $d-1$ siblings, each such height $i$ can contribute at most $d-1$ internal node sums.\footnote{Note that an exception to this is the single last time $t=n$, in which $V^\star_t$ is the set of $d$ internal nodes which are children of the root. However, for this case $V^\star_t$ is still of size $d=O(dk)$, so the argument holds for it too.} Hence, similarly to the proof of Theorem~\ref{thm:treealgorithm}, the sum of noises at time $t$ is sub-gaussian with zero mean variance at most $V_{n,\frac{\eps}{k},\frac{\delta}{k}}\cdot s=O(V_{n,\frac{\eps}{k},\frac{\delta}{k}}\cdot k\cdot n^{\frac{2}{2k+1}})$, and with probability at least $1-\beta$, in all $t=1,\ldots,n$ simultaneously, the sum approximation error has expectation $O\left(\sqrt{\log(1/\beta) + \log n} \cdot \sqrt{V_{n,\frac{\eps}{k},\frac{\delta}{k}}\cdot (1+(k-1)(d-1)) } \right)= O\left(\sqrt{V_{n,\frac{\eps}{k},\frac{\delta}{k}}}\cdot \sqrt{1+k\cdot n^{\frac{2}{2k+1}}}\cdot \sqrt{\log (1/\beta)+\log n}\right)=O\left(\sqrt{V_{n,\frac{\eps}{k},\frac{\delta}{k}}}\cdot\sqrt{k}n^{\frac{1}{2k+1}}\cdot \sqrt{\log (1/\beta)+\log n}\right)$.

For any time $t$ which is divisible by $d_{low}$, since a shuffle mechanism has just ended at time $t$, the sum estimate is unbiased and the error is precisely the sum of noises bounded above. For any other time $t$ which is not divisible by $d_{low}$, we approximate its sum using the same sum estimate from the largest multiple of $d_{low}$ smaller than $t$. Hence, we accumulate an additional error of the sum of all the values $b_i$ for $i$ ranging from the last multiple of $d_{low}$ to $t$. Since each value $b_i$ is bounded in $[0,1]$, this error is at most $d_{low}-1=O\left(n^{\frac{1}{2k+1}}\right)$ which is dominated by the error above due to the sum of noises.
\end{proof}

\begin{proof}[Proof of Corollary~\ref{thm:treealgorithmSummationKShufflers}]
    The proof follows from an identical argument as the proof of Corollary~\ref{thm:treealgorithmSummation}, where for binary summation we get $V_{n,\frac{\eps}{k},\frac{\delta}{k}}=O\left(\frac{\log (k/\delta) k^2}{\eps^2}\right)$, so applying Theorem~\ref{thm:treealgorithmKShufflers} gives that the error in each time is sub-gaussian with variance at most $O\left(\frac{\log (k/\delta) k^2}{\eps^2}\cdot k\cdot n^{\frac{2}{2k+1}} \right)$, and the algorithm is an $(n,\alpha, \beta)$ summation algorithm for $\alpha = O\left(\sqrt{\frac{\log (k/\delta) k^2}{\eps^2}}\cdot\sqrt{k}n^{\frac{1}{2k+1}}\cdot \sqrt{\log (1/\beta)+\log n}\right)$.
    
    For vector summation we get $V_{n,\frac{\eps}{k},\frac{\delta}{k}}=O\left(\frac{\log^2 (dk/\delta) k^2}{\eps^2}\right)$, so applying Theorem~\ref{thm:treealgorithmKShufflers} gives that the error in each time is sub-gaussian with variance at most $O\left(\frac{\log^2 (dk/\delta) k^2}{\eps^2}\cdot k\cdot n^{\frac{2}{2k+1}} \right)$, and the algorithm is an $(n,\alpha, \beta)$ summation algorithm for $\alpha = O\left(\sqrt{\frac{\log^2 (dk/\delta) k^2}{\eps^2}}\cdot\sqrt{k}n^{\frac{1}{2k+1}}\cdot \sqrt{\log (1/\beta)+\log n}\right)$.
\end{proof}

\section{Missing parts from Section~\ref{sec:lb}}
In this section, we prove a lower bound for \psco{} (specifically for binary values), for any number $k$ of shufflers, even for mechanisms that can choose the shuffle mechanisms (encoders and sizes) adaptively.
The proof is by induction on the concurrency level $k$ and is based upon two subtle ideas: First, identifying the hard distribution on inputs that would make any mechanism fail. Second, a clever partition of the possible transcripts of the algorithm that allows us to show that whatever batch sizes the algorithm chooses it accumulates a large error with high probability. 
Specifically we show that if the algorithm uses a mechanism with a large batch then  while the shuffler running the mechanism is active the concurrency level reduces and the large error follows by induction. Otherwise, all batches are small. This implies that many input bits are independent even given the transcript of the algorithm (see Lemma~\ref{lem:independence}). Furthermore, these bits also have large variance (see Lemma~\ref{lem:urioverdist}) due to privacy so an anti-concentration bound (see Lemma~\ref{lem:matouvsek}) implies that we must accumulate a large error.

We now present and prove the intermediate general results for the proof presented above. 
First, we prove Lemma~\ref{lem:independence} which shows that conditioned on the transcript, input bits which were independent continue to be independent. 
\begin{lemma}[Independence given transcript, based on Lemma 2 of \citet{chan2012optimal}]\label{lem:independence}
    Let a set of $n$ parties participating in a communication protocol where the $i$'th party depends on a random variable  $X_i$ that he gets as input. The parties communicate publicly, where in each round a single party publishes a value depending on their input data, their internal private randomness, and the previous messages. 
    Then if $X_1,\ldots,X_n$ are independent, then they are still independent, even conditioned on all the communicated messages of the protocol.
\end{lemma}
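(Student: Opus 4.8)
The plan is to prove the statement by induction on the number of rounds of communication, peeling off one message at a time. The base case (zero rounds) is trivial: conditioning on nothing leaves $X_1,\ldots,X_n$ independent by hypothesis. For the inductive step, suppose the first $r-1$ messages $m_1,\ldots,m_{r-1}$ have been published and that, conditioned on $M_{<r}:=(m_1,\ldots,m_{r-1})$, the variables $X_1,\ldots,X_n$ are still independent (but now with possibly altered marginals). It suffices to show that conditioning additionally on the $r$-th message $m_r$ preserves independence. Say party $j$ speaks in round $r$, so $m_r = f(X_j, R_j, M_{<r})$ for some function $f$, where $R_j$ is $j$'s private randomness, independent of everything else.

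The key step is the following observation about conditioning on a message that depends only on one party's data. Conditioned on $M_{<r}$, the tuple $(X_j, R_j)$ is independent of $(X_i)_{i\neq j}$, because the $X_i$'s are mutually independent given $M_{<r}$ by the inductive hypothesis and $R_j$ is fresh private randomness. Hence the pair $(X_j,R_j)$ and the tuple $(X_i)_{i\neq j}$ form two independent blocks. Since $m_r$ is a (deterministic) function of the first block alone, conditioning on the event $\{M_r = m_r\}$ — which lies in the $\sigma$-algebra generated by $(X_j,R_j)$ — does not affect the conditional law of the second block $(X_i)_{i\neq j}$, and it leaves that block's internal independence structure intact. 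More precisely, for any measurable sets $A_i$,
\[
\Pr\!\big[\textstyle\bigcap_{i\neq j} \{X_i\in A_i\} \,\big|\, M_{<r}, M_r=m_r\big] = \prod_{i\neq j}\Pr\!\big[X_i\in A_i \,\big|\, M_{<r}\big],
\]
using that the conditioning on $M_r=m_r$ is measurable w.r.t.\ $(X_j,R_j)$ which is independent of $(X_i)_{i\neq j}$ given $M_{<r}$; and separately $X_j$ given $(M_{<r},M_r=m_r)$ has some conditional law, still independent of the other block. Multiplying the two blocks' laws shows $X_1,\ldots,X_n$ are independent given $(M_{<r}, M_r)$. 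Iterating over all rounds (a transcript has finitely many, or one takes a limit / works with the generated $\sigma$-algebra) yields independence conditioned on the full transcript.

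The main obstacle is bookkeeping the measure-theoretic conditioning cleanly when the $X_i$ are general random variables rather than discrete bits: one must be careful that "conditioned on $M_{<r}$" refers to a regular conditional distribution and that the factorization identity above holds for a.e.\ value of $M_{<r}$. In the discrete case (which is all that is needed for the application, where the $X_i$ are $\mathrm{Ber}(1/2)$ bits) this is elementary — all conditional probabilities are ratios of positive-probability events and the argument is just algebra on joint pmfs. I would therefore present the discrete version in full and remark that the general case follows by the same argument applied to regular conditional distributions, exactly as in Lemma 2 of \citet{chan2012optimal}. One should also note explicitly that the parties' private randomness $R_1,\ldots,R_n$ are mutually independent and independent of $(X_1,\ldots,X_n)$, since this is what makes the "two independent blocks" decomposition valid at every round.
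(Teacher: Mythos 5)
Your proof is correct and takes essentially the same approach as the paper's: induction on rounds, using that the current message is a function only of the speaker's input, private randomness, and the history, so that conditioning on it leaves the remaining parties' inputs untouched. Your phrasing is a bit more explicit than the paper's about the need to preserve the \emph{internal} independence of the non-speaking block (the paper derives this implicitly in the line $\Pr[X_{-i}=b \mid m_j, m_{<j}] = \Pr[X_{-i}=b \mid m_{<j}]$), and about the measure-theoretic caveats, but the argument is the same.
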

\begin{proof}[Proof of Lemma~\ref{lem:independence}]

 We prove the  statement by induction on the number of rounds of messages. For the base case, after zero rounds the values are independent since the communication protocol didn't start yet. For the step, consider the $j$'th message $m_j$ sent by the party who has input $X_i$, and suppose $X_{-i}$ is the joint input of all other parties and $m_{<j}$ is the vector of all the previous messages $(m_1,\ldots,m_{j-1})$. Observe that by the induction hypothesis, $X_i$ is independent of $X_{-i}$ given $m_{<j}$, and therefore, since $m_j$ is a function of $X_i$, the randomness of the $i$'th user and $m_{<j}$, we conclude that $(X_i,m_j)$ is independent of $X_{-i}$ given $m_{<j}$.  We have that, 
 \begin{align*}
  \Pr[X_i=a,~X_{-i}=b \mid m_j=c,~m_{<j}=d]&=\frac{\Pr[X_i=a,~X_{-i}=b,~ m_j=c \mid m_{<j}=d]}{\Pr[m_j=c\mid m_{<j}=d]}\\
  &=\frac{\Pr[X_i=a,~ m_j=c \mid m_{<j}=d]\Pr[X_{-i}=b\mid m_{<j}=d]}{\Pr[m_j=c\mid m_{<j}=d]}\\
  &=\frac{\Pr[X_i=a,~ m_j=c \mid m_{<j}=d]}{\Pr[m_j=c\mid m_{<j}=d]}\cdot \Pr[X_{-i}=b\mid m_{<j}=d]\\
  &=\Pr[X_i=a\mid m_j=c,~m_{<j}=d]\cdot \Pr[X_{-i}=b \mid m_j=c,~m_{<j}=d],
 \end{align*}
 where the first step follows by the definition of conditional probability, the second step follows by the observation above, and the last step follows by the definition of conditional probability and since $m_j$ is independent of $X_{-i}$ given $m_{<j}$. We conclude that $X_i$ and $X_{-i}$ are independent conditioning also on $m_j$.
\end{proof}

Next, we prove Lemma~\ref{lem:urioverdist} which shows that if a mechanism is $(\eps,\delta)$-DP, then over a distribution of inputs and changing a specific bit consistently, has a bounded influence over the output distribution. As a warm-up we first present the following folklore lemma. It says that of a mechanism is $(\eps,\delta)$-DP then on a large subset of the outputs it is pure $2\eps$ private.
\begin{lemma}\label{lem:uri}
    Let $M:X\to Y$ be an $(\eps,\delta)$-DP mechanism. Then for any two neighboring datasets $D,D'\subseteq X$, there is a subset $E(D,D')\subseteq Y$ such that $\Pr[M(D)\in E],\Pr[M(D')\in E]\geq 1-O(\frac{\delta}{\eps})$ and $\forall t\in E$, $\frac{\Pr_M[M(D)=t]}{\Pr_M[M(D')=t ]}\in [e^{-2\eps},e^{2\eps}]$.
\end{lemma}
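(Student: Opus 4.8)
The plan is to take $E$ to be the complement of the set of outputs where the likelihood ratio is too extreme in \emph{either} direction, and to show this ``bad'' set carries small mass under both $M(D)$ and $M(D')$. Write $p(t)=\Pr[M(D)=t]$ and $q(t)=\Pr[M(D')=t]$ (treating these as densities if $Y$ is continuous), and set
\[
B_1=\{t\in Y: p(t)>e^{2\eps}q(t)\},\qquad B_2=\{t\in Y: q(t)>e^{2\eps}p(t)\},\qquad E=Y\setminus(B_1\cup B_2).
\]
By construction every $t\in E$ satisfies $p(t)/q(t)\in[e^{-2\eps},e^{2\eps}]$ (any outputs with $p(t)=q(t)=0$ can be removed from $E$ without changing any probability), so it remains only to lower bound $\Pr[M(D)\in E]$ and $\Pr[M(D')\in E]$.

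First I would bound $\Pr[M(D)\in B_1]$. On $B_1$ we have $q(t)<e^{-2\eps}p(t)$, so summing (integrating) over $B_1$ gives $\Pr[M(D')\in B_1]<e^{-2\eps}\Pr[M(D)\in B_1]$. Plugging this into the $(\eps,\delta)$-DP inequality $\Pr[M(D)\in B_1]\le e^{\eps}\Pr[M(D')\in B_1]+\delta$ yields $\Pr[M(D)\in B_1]<e^{-\eps}\Pr[M(D)\in B_1]+\delta$, hence $\Pr[M(D)\in B_1]<\delta/(1-e^{-\eps})=O(\delta/\eps)$, using $1-e^{-\eps}=\Omega(\eps)$ for bounded $\eps$. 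The same argument with $D$ and $D'$ interchanged gives $\Pr[M(D')\in B_2]=O(\delta/\eps)$. Finally, since on $B_2$ we have $p(t)<e^{-2\eps}q(t)\le q(t)$, it follows that $\Pr[M(D)\in B_2]\le\Pr[M(D')\in B_2]=O(\delta/\eps)$, and symmetrically $\Pr[M(D')\in B_1]\le\Pr[M(D)\in B_1]=O(\delta/\eps)$. A union bound over $B_1$ and $B_2$ then gives $\Pr[M(D)\in E]\ge 1-O(\delta/\eps)$ and $\Pr[M(D')\in E]\ge 1-O(\delta/\eps)$, as desired.

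The only point that needs care is applying the DP guarantee in the correct direction on each bad set, so that the expansion factor $e^{\eps}$ is beaten by the factor $e^{-2\eps}$ coming from the definition of the set, producing a contraction $e^{-\eps}<1$ that lets us solve for the probability; and remembering to control each bad set under \emph{both} output distributions before the final union bound, rather than only under the distribution that naturally appears in each estimate.
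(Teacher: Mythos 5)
Your proof is correct and follows essentially the same route as the paper's: you define the same two bad sets by the likelihood ratio, use the same bootstrapping of the $(\eps,\delta)$-DP inequality against the set definition to obtain the contraction factor $e^{-\eps}<1$, and take $E$ to be the complement. The only cosmetic difference is in the final step: you transfer the mass bound on each bad set from one distribution to the other via the pointwise dominance $p(t)\le q(t)$ on $B_2$ (and vice versa on $B_1$), whereas the paper simply invokes the $(\eps,\delta)$-DP inequality once more on the combined set $Bad$; both yield the same $O(\delta/\eps)$ bound (with the implicit caveat, which you correctly note, that $1-e^{-\eps}=\Omega(\eps)$ needs $\eps$ bounded, matching the paper's $\min(\eps,1)$ in the generalization).
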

\begin{proof}[Proof of Lemma~\ref{lem:uri}]
    Let $D,D'$ be neighboring datasets, and consider the set of outputs $Bad_1=\{y\in Y \mid \frac{\Pr_M[M(D)=y]}{\Pr_M[M(D')=y]}<\exp(-2\eps)\}\subseteq Y$ and $Bad_2=\{y\in Y \mid \frac{\Pr_M[M(D)=y]}{\Pr_M[M(D')=y]}>\exp(2\eps)\}\subseteq Y$. By approximate-DP, $\Pr_M[M(D')\in Bad_1]\leq e^\eps\cdot \Pr_M[M(D)\in Bad_1]+\delta$ and 
    therefore $\Pr_M[M(D)\in Bad_1]=\sum_{y\in Bad_1}\Pr_M[M(D)=y]<\sum_{y\in Bad_1}exp(-2\eps)\Pr_M[M(D')=y]=\exp(-2\eps)\Pr_M[M(D')\in Bad_1]\leq \exp(-2\eps)\cdot \left(\exp(\eps)\cdot \Pr_M[M(D)\in Bad_1]+\delta\right)=\exp(-\eps)\cdot \Pr_M[M(D)\in Bad_1]+\delta \exp(-2\eps)$. Thus, $\Pr_M[M(D)\in Bad_1]<\frac{\delta \exp(-2\eps)}{1-\exp(-\eps)}$. A similar analysis shows that $\Pr_M[M(D)\in Bad_2]<\frac{\delta \exp(\eps)}{\exp(\eps)-1}$.

    Let $Bad=Bad_1\cup Bad_2$. Since $Bad_1$ and $Bad_2$ are disjoint, we have that $\Pr_M[M(D)\in Bad] = \Pr_M[M(D)\in Bad_1]+\Pr_M[M(D)\in Bad_2] < \frac{2\delta \exp(\eps)}{1-\exp(-\eps)} = O(\delta/\eps)$. From approximate-DP, we therefore get that $\Pr_M[M(D')\in Bad]\leq \exp(\eps)\cdot \frac{2\delta \exp(\eps)}{1-\exp(-\eps)} + \delta = O(\delta/\eps)$.

    Let $E(D,D') = Y\setminus Bad$. It follows that $\Pr_M[M(D)\in E(D,D')]>1-O(\delta/\eps)$, $\Pr_M[M(D')\in E(D,D')]>1-O(\delta/\eps)$, and most importantly by the definition of $Bad$, for any $t\in E(D,D')$ it holds that $t\notin Bad_1$ and $t\notin Bad_1$, and therefore $\frac{\Pr_M[M(D)=t]}{\Pr_M[M(D')=t ]}\in [e^{-2\eps},e^{2\eps}]$.    
\end{proof}

We now extend Lemma~\ref{lem:uri} to distributions.

\begin{lemma}[Extended over distribution]\label{lem:urioverdist}
    Let $M:X^n\to Y$ be an $(\eps,\delta)$-DP mechanism, and a distribution $A$ of datasets of size $n$. For $i \in [n]$, let $D_{i\to a}$ denote the dataset $D$ with the $i$'th element set to $a$. Then, for any two distinct elements $x_1,x_2\in X$, and index $i$, there is a subset $E(i,x_1,x_2,A)\subseteq Y$ such that:
    \begin{enumerate}
        \item $\Pr_{M,D\sim A}[M(D_{i\to x_1})\in E],\Pr_{M,D\sim A}[M(D_{i\to x_2})\in E]\geq 1-\frac{2\delta}{\min(\eps,1)}$, and
        \item $\forall t\in E$, $\frac{\Pr_{M,D\sim A}[M(D_{i\to x_1})=t]}{\Pr_{M,D\sim A}[M(D_{i\to x_2})=t]}\in [e^{-2\eps},e^{2\eps}]$.
    \end{enumerate}
\end{lemma}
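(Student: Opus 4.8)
The plan is to reduce Lemma~\ref{lem:urioverdist} to Lemma~\ref{lem:uri} by viewing the randomized draw $D \sim A$ together with the internal randomness of $M$ as one combined mechanism, and then apply the previous lemma to two fixed ``neighboring'' inputs of this combined mechanism. Concretely, I would define an auxiliary mechanism $\widehat{M} : X \to Y$ that, on input $x \in X$, samples $D \sim A$, overwrites its $i$'th coordinate with $x$ (i.e.\ forms $D_{i\to x}$), and then outputs $M(D_{i\to x})$. The key observation is that $\widehat{M}$ is $(\eps,\delta)$-DP when viewed as a mechanism whose input is a single element of $X$: for any two $x_1, x_2 \in X$ and any event $B \subseteq Y$, conditioning on the sampled $D$ we have $\Pr_{\widehat{M}}[\widehat{M}(x_1) \in B] = \mathbb{E}_{D \sim A}\big[\Pr_M[M(D_{i\to x_1}) \in B]\big]$, and since $D_{i\to x_1}$ and $D_{i\to x_2}$ are neighboring datasets for every fixed $D$, the $(\eps,\delta)$-DP guarantee of $M$ gives $\Pr_M[M(D_{i\to x_1}) \in B] \le e^\eps \Pr_M[M(D_{i\to x_2}) \in B] + \delta$ pointwise in $D$; taking expectation over $D \sim A$ preserves the inequality. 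So $\widehat{M}$ satisfies $(\eps,\delta)$-DP on the ``datasets'' $x_1$ and $x_2$ (which are trivially neighboring, being single elements).

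Having established that, I would apply Lemma~\ref{lem:uri} to $\widehat{M}$ with the two neighboring inputs $x_1$ and $x_2$ to obtain a set $E := E(x_1, x_2) \subseteq Y$ with $\Pr[\widehat{M}(x_1) \in E], \Pr[\widehat{M}(x_2) \in E] \ge 1 - O(\delta/\eps)$ and, for every $t \in E$, the ratio $\Pr_{\widehat{M}}[\widehat{M}(x_1) = t] / \Pr_{\widehat{M}}[\widehat{M}(x_2) = t] \in [e^{-2\eps}, e^{2\eps}]$. Unwinding the definition of $\widehat{M}$, $\Pr_{\widehat{M}}[\widehat{M}(x_j) = t]$ is exactly $\Pr_{M, D\sim A}[M(D_{i\to x_j}) = t]$, which is the quantity appearing in the lemma statement. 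Setting $E(i, x_1, x_2, A) := E$ immediately gives both conclusions; one just has to check the constant in front of $\delta$, tracking the $\frac{2\delta\exp(\eps)}{1-\exp(-\eps)}$ bound from the proof of Lemma~\ref{lem:uri} and bounding it by $\frac{2\delta}{\min(\eps,1)}$ using $1 - e^{-\eps} \ge \eps/2$ for $\eps \le 1$ (and $\ge$ a constant for $\eps > 1$), together with $e^\eps \ge 1$ absorbed appropriately — this is the only genuinely arithmetic part.

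The main (mild) obstacle is making sure the reduction is airtight when $X$ or $Y$ is not discrete: the proof of Lemma~\ref{lem:uri} as written manipulates pointwise probabilities $\Pr_M[M(D) = t]$, so strictly speaking everything should be read in terms of densities / Radon–Nikodym derivatives with respect to a dominating measure, or one restricts attention to the countable setting (which suffices for all applications in this paper, where transcripts are sequences of finitely many shuffled message-lists). Assuming that measure-theoretic framing — or just that we are in a discrete setting — the argument is essentially a one-line ``collapse the randomness of $A$ into the mechanism'' move, and there is no real difficulty beyond bookkeeping of constants. I would present it in exactly this order: (1) define $\widehat{M}$; (2) verify $(\eps,\delta)$-DP of $\widehat{M}$ by conditioning on $D$ and taking expectations; (3) invoke Lemma~\ref{lem:uri}; (4) translate back and tighten the constant to $\frac{2\delta}{\min(\eps,1)}$.
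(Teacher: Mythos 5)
Your reduction is clean and, at its core, the same argument the paper makes, just repackaged: the paper's proof of Lemma~\ref{lem:urioverdist} redoes the computation of Lemma~\ref{lem:uri} verbatim but with $\Pr_{M,D\sim A}$ in place of $\Pr_M$, and the one new ingredient it needs is the law of total probability over $D\sim A$ applied inside the $Bad_1$ bound (the step that passes from $\Pr_{M,D\sim A}[M(D_{i\to x_2})\in Bad_1]$ to $\sum_D \Pr_M[\cdot]\Pr_A[D]$ and uses per-$D$ DP). Your ``collapse the randomness of $A$ into the mechanism'' step — defining $\widehat{M}(x) := M(D_{i\to x})$ with $D\sim A$, and observing that per-$D$ DP plus taking expectation over $D$ makes $\widehat{M}$ itself $(\eps,\delta)$-DP — is exactly that same convexity move, factored out so that Lemma~\ref{lem:uri} can then be invoked as a black box. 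Structurally this is arguably nicer than the paper's inlined re-derivation.

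Where your writeup does have a real gap is in the constant-tracking, and it would not reproduce the stated $\frac{2\delta}{\min(\eps,1)}$ as sketched. The intermediate quantity $\frac{2\delta e^{\eps}}{1-e^{-\eps}}$ you quote from Lemma~\ref{lem:uri}'s proof is a loose overshoot (and does not satisfy $\frac{2\delta e^{\eps}}{1-e^{-\eps}} \le \frac{2\delta}{\min(\eps,1)}$ — already at $\eps=1$ the left side is about $8.6\delta$ against $2\delta$ on the right); moreover $1-e^{-\eps}\ge\eps/2$ only gets you to $4\delta e^{\eps}/\eps$, not $2\delta/\eps$. What the paper actually uses in Lemma~\ref{lem:urioverdist} is the un-weakened sum
$\frac{\delta e^{-2\eps}}{1-e^{-\eps}} + \frac{\delta e^{\eps}}{e^{\eps}-1} = \frac{\delta\left(e^{-\eps}+e^{\eps}\right)}{e^{\eps}-1}$,
and a short calculus check that this is at most $\frac{2\delta}{\min(\eps,1)}$ (for $\eps\le 1$ show $\eps(e^{-\eps}+e^{\eps})\le 2(e^{\eps}-1)$ via $f(0)=0$ and $f'(\eps)=(1-\eps)(e^{\eps}-e^{-\eps})\ge0$; for $\eps\ge1$ it is decreasing and at most the value at $\eps=1$, which is below $2$). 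A second, smaller point: Lemma~\ref{lem:uri} as stated gets the bound for $D'$ by applying DP once more to the bound for $D$, which costs an extra $e^{\eps}$ and would be dead on arrival for large $\eps$ if you needed the explicit constant; Lemma~\ref{lem:urioverdist} sidesteps this by noting that $Bad=Bad_1\cup Bad_2$ is symmetric in $x_1\leftrightarrow x_2$ and so the same computation bounds $\Pr_{M,D\sim A}[M(D_{i\to x_2})\in Bad]$ directly. So to make your reduction produce the exact statement, you should either first tighten Lemma~\ref{lem:uri} to carry the constant $\frac{2\delta}{\min(\eps,1)}$ (using the sharper sum and the symmetry trick), or inline the $Bad_1,Bad_2$ calculation inside the $\widehat{M}$ framing. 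Either fix is routine; without one of them, the argument as written proves only the $O(\delta/\eps)$ version of the lemma, not the precise constant the downstream union bound in the proof of Theorem~\ref{thm:lb} is written against.
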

\begin{proof}[Proof of Lemma~\ref{lem:urioverdist}]
    Let $i\in [n]$ and $x_1,x_2\in X$. consider the set of outputs $Bad_1=\{y\in Y \mid \frac{\Pr_{M,D\sim A}[M(D_{i\to x_1})=y]}{\Pr_{M,D\sim A}[M(D_{i\to x_2})=y]}<\exp(-2\eps)\}\subseteq Y$ and $Bad_2=\{y\in Y \mid \frac{\Pr_{M,D\sim A}[M(D_{i\to x_1})=y]}{\Pr_{M,D\sim A}[M(D_{i\to x_2})=y]}>\exp(2\eps)\}\subseteq Y$. Observe that
    \begin{align*}
        \Pr_{M,D\sim A}[M(D_{i\to x_1})\in Bad_1]&=\sum_{y\in Bad_1}\Pr_{M,D\sim A}[M(D_{i\to x_1})=y]\\
        &<\sum_{y\in Bad_1}\exp(-2\eps)\Pr_{M,D\sim A}[M(D_{i\to x_2})=y]\\
        &= \exp(-2\eps)\Pr_{M,D\sim A}[M(D_{i\to x_2})\in Bad_1]\\
        &= \exp(-2\eps)\sum_{D\in X^n}\Pr_{M}[M(D_{i\to x_2})\in Bad_1]\cdot \Pr_A[D]\\
        &\leq \exp(-2\eps)\sum_{D\in X^n} \left(\exp(\eps)\cdot \Pr_M[M(D_{i\to x_1})\in Bad_1]+\delta\right)\cdot \Pr_A[D]\\
        &= \exp(-2\eps)\delta+\exp(-\eps)\cdot \sum_{D\in X^n}\Pr_M[M(D_{i\to x_1})\in Bad_1]\cdot \Pr_A[D]\\
        &= \exp(-2\eps)\delta+\exp(-\eps)\cdot \Pr_{M,D\sim A}[M(D_{i\to x_1})\in Bad_1],
    \end{align*}
    where the first step follows by summing over $Bad_1$, the second step follows by the definition of $Bad_1$, the fourth step follows by the law of total probability over $D\sim A$, the fifth step follows by approximate-DP, where for any dataset $D \in X^n$, the datasets $D_{i\to x_1}$ and $D_{i\to x_2}$ are neighbors, and the seventh step follows by the law of total probability over $D\sim A$.

    Thus, $\Pr_{M,D\sim A}[M(D_{i\to x_1})\in Bad_1]<\frac{\delta \exp(-2\eps)}{1-\exp(-\eps)}$. A similar analysis shows that $\Pr_{M,D\sim A}[M(D_{i\to x_1})\in Bad_2]<\frac{\delta \exp(\eps)}{\exp(\eps)-1}$.

    Let $Bad=Bad_1\cup Bad_2$. Since $Bad_1$ and $Bad_2$ are disjoint, we have that 
    \begin{align*}
        \Pr_{M,D\sim A}[M(D_{i\to x_1})\in Bad] &= \Pr_{M,D\sim A}[M(D_{i\to x_1})\in Bad_1]+\Pr_{M,D\sim A}[M(D_{i\to x_1})\in Bad_2]\\
        &< \frac{\delta \exp(-2\eps)}{1-\exp(-\eps)} + \frac{\delta \exp(\eps)}{\exp(\eps)-1}\leq \frac{2\delta}{\min(\eps,1)}
        % \leq 2\delta/\eps,
    \end{align*}

    A symmetric defining argument shows that $\Pr_{M,D\sim A}[M(D_{i\to x_2})\in Bad] \leq \frac{2\delta}{\min(\eps,1)}$.

    Let $E(i,x_1,x_2,A) = Y\setminus Bad$. Therefore, $\Pr_{M,D\sim A}[M(D_{i\to x_1})\in E(i,x_1,x_2,A)]>1-\frac{2\delta}{\min(\eps,1)}$, $\Pr_{M,D\sim A}[M(D_{i\to x_2})\in E(i,x_1,x_2,A)]>1-\frac{2\delta}{\min(\eps,1)}$, and most importantly by the definition of $Bad$, for any $t\in E(i,x_1,x_2,A)$ it holds that $t\notin Bad_1$ and $t\notin Bad_2$, and therefore $\frac{\Pr_{M,D\sim A}[M(D_{i\to x_1})=t]}{\Pr_{M,D\sim A}[M(D_{i\to x_2})=t ]}\in [e^{-2\eps},e^{2\eps}]$.
\end{proof}

Lemma \ref{lem:matouvsek} specifies the anti-concentration bound which we use.
\begin{lemma}[\citet{matouvsek2001lower}]\label{lem:matouvsek}
There exist constants $a,b,c>0$ s.t.\ the following holds. Let $X$ be a sum of independent random variables, each attaining values in $[0,1]$, and assume $Var[X]\geq 200^2$. Then for all $a\geq \beta\geq2^{-bVar[X]}$ and every interval $I\in \reals$ of length $\abs{I}\leq c\cdot \sqrt{Var[X]\cdot \log (\frac{a}{\beta})}$ we have $\Pr[X\notin I]\geq \beta$.
\end{lemma}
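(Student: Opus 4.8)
The plan is to reduce the lemma to a two-sided \emph{reverse} tail bound (a reverse Bernstein / reverse large-deviation estimate) for $X$, and then to establish that bound by exponential tilting. Write $V=\mathrm{Var}[X]$ and $\mu=\mathbb{E}[X]$, and for $p\in(0,1)$ let $q^+_p=\inf\{x:\Pr[X>x]\le p\}$ and $q^-_p=\sup\{x:\Pr[X<x]\le p\}$ be the upper and lower $p$-quantiles. First I would observe that it suffices to show $q^+_\beta-q^-_\beta\ge |I|$: if $J$ is any interval with $|J|<q^+_\beta-q^-_\beta$, then either $\sup J<q^+_\beta$, in which case $\Pr[X\notin J]\ge\Pr[X>\sup J]>\beta$ by the definition of $q^+_\beta$, or $\sup J\ge q^+_\beta$, which forces $\inf J>q^+_\beta-|J|>q^-_\beta$ and hence $\Pr[X\notin J]\ge\Pr[X<\inf J]>\beta$. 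Since for $V\ge 200^2$ such a sum places at least a constant fraction of its mass on each side of its mean (so $q^+_\beta\ge\mu$ and $q^-_\beta\le\mu$ once $\beta\le a$ with $a$ a small constant), we have $q^+_\beta-q^-_\beta\ge (q^+_\beta-\mu)\vee(\mu-q^-_\beta)$, and therefore the whole statement reduces to the one-sided bound $\Pr[X\ge\mu+t]\ge\beta$ for $t=c\sqrt{V\log(a/\beta)}$; the symmetric lower-tail bound follows by replacing each $X_i$ with $1-X_i\in[0,1]$, which leaves $V$ unchanged.

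The technical core is then the following reverse deviation inequality: there are absolute constants $\kappa,C,\gamma>0$ such that whenever $X$ is a sum of independent $[0,1]$-valued random variables with $V=\mathrm{Var}[X]$ large and $0\le t\le\gamma V$, one has
\[
\Pr[X\ge\mu+t]\ \ge\ \kappa\,\exp\!\bigl(-C t^2/V\bigr).
\]
Granting this, I would pin down the constants of the lemma as follows. Choose $c$ small enough that $Cc^2\le\tfrac12$, and $a$ small enough (in particular $a<1/4$); then $\kappa\exp(-Ct^2/V)=\kappa(\beta/a)^{Cc^2}\ge\beta$ for every $\beta\in(0,a]$, using $Cc^2\le\tfrac12$, $\beta\le a\le1$, and that $\kappa$ is an absolute constant. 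Choose $b$ small enough that $\log(a/\beta)\le\log a+bV\ln 2\le(\gamma/c)^2V$ for all $\beta\ge 2^{-bV}$, which guarantees $t=c\sqrt{V\log(a/\beta)}\le\gamma V$; this is exactly where the hypothesis $\beta\ge 2^{-bV}$ is used. The hypothesis $V\ge200^2$ makes ``$V$ large'' in the reverse inequality legitimate and, since each summand has variance at most $\tfrac14$, it guarantees there are at least $4V\ge 4\cdot200^2$ summands, a fact used in the proof of the reverse inequality.

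To prove the reverse inequality I would use a change of measure. Let $\psi(\lambda)=\log\mathbb{E}[e^{\lambda X}]$, finite and smooth for all $\lambda\ge0$ since $X$ is bounded, with $\psi'(0)=\mu$, $\psi''(\lambda)=\mathrm{Var}_{Q_\lambda}[X]$ where $dQ_\lambda\propto e^{\lambda X}\,dP$, and $\psi'$ increasing. Pick $\lambda\ge0$ with $\psi'(\lambda)=\mu+t$ (possible since $t\le\gamma V$ stays below the maximal achievable mean shift when $\gamma$ is small). Then
\[
\Pr[X\ge\mu+t]=\mathbb{E}_{Q_\lambda}\!\bigl[e^{-\lambda X+\psi(\lambda)}\mathbf{1}\{X\ge\psi'(\lambda)\}\bigr]\ \ge\ e^{-\lambda(\mu+t)-\lambda\sqrt{\psi''(\lambda)}+\psi(\lambda)}\cdot\Pr_{Q_\lambda}\!\bigl[\psi'(\lambda)\le X\le\psi'(\lambda)+\sqrt{\psi''(\lambda)}\bigr].
\]
Under $Q_\lambda$, $X$ is again a sum of independent $[0,1]$-valued variables with mean $\psi'(\lambda)$ and variance $\psi''(\lambda)$; since the tilt is mild ($\lambda\lesssim t/V$, so $\psi''(\lambda)=\Theta(V)$ is large), a Berry--Esseen bound makes the last probability at least an absolute constant. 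For the exponent, $-\lambda(\mu+t)+\psi(\lambda)$ is minus the Legendre rate function $I(\mu+t)$, which for $[0,1]$-valued summands and $t\le\gamma V$ satisfies $I(\mu+t)\le C' t^2/V$, while $\lambda\sqrt{\psi''(\lambda)}\lesssim(t/V)\sqrt{V}=t/\sqrt{V}$, which is dominated by a slightly larger multiple of $t^2/V$ as soon as $t\gtrsim1$ (and for $t=O(1)$ the inequality is trivial). Combining gives the claimed $\kappa\exp(-Ct^2/V)$.

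I expect the main obstacle to be precisely this last paragraph: making the estimates on the rate function $I(\mu+t)$, on $\psi''(\lambda)$, and on the ``local-CLT'' probability under $Q_\lambda$ genuinely uniform over the (possibly highly skewed) individual laws of the $X_i$. This is where boundedness of the summands and the hypothesis $V\ge200^2$ — hence a large number of summands — do the real work, and where the constant $\gamma$ (and thus, through the chain above, the constants $a,b,c$ of the lemma) gets fixed.
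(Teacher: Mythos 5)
The paper never proves this lemma---it is imported as a black box from \citet{matouvsek2001lower} and used only as an external anti-concentration tool---so there is no in-paper argument to compare against; what you have supplied is a self-contained proof of the cited result. Your route is the standard one for reverse Chernoff bounds, and it is sound. The reduction to the one-sided statement $\Pr[X\ge\mu+t]\ge\beta$ is correct: by Berry--Esseen with $V\ge 200^2$ (and third absolute central moments bounded by variances, since the summands lie in $[0,1]$), the sum puts nearly half its mass on each side of $\mu$, so both $\beta$-quantiles straddle $\mu$ once $\beta\le a$ for a small constant $a$, and the quantile-gap argument then works modulo routine strict-versus-weak inequality bookkeeping (absorbed by shrinking $c$). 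The tilting step also goes through uniformly over skewed summand laws, which is the risk you flag: for $X_i\in[0,1]$ and $0\le s\le 1$ the tilted law of each summand has density within a factor $e^{\pm s}$ of the original, so $\psi''(s)\in[e^{-1}V,\,eV]$ on $[0,\lambda]$, giving $\lambda\le et/V$, the rate-function bound $I(\mu+t)=\int_0^\lambda(\psi'(\lambda)-\psi'(s))\,ds\le \tfrac{\lambda^2}{2}\sup\psi''=O(t^2/V)$, and a Berry--Esseen error of $O(1/\sqrt{V})$ under $Q_\lambda$; the target level $\mu+t$ is attainable because $V\le \operatorname{ess\,sup}X-\mu$ for $[0,1]$-valued summands, so $\gamma<1$ suffices. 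The constants then chain exactly as you describe: $Cc^2\le\tfrac12$ and $a$ small give $\kappa(\beta/a)^{Cc^2}\ge\beta$, and the hypothesis $\beta\ge 2^{-bV}$ is what keeps $t\le\gamma V$.

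One minor slip: the correction term $\lambda\sqrt{\psi''(\lambda)}=O(t/\sqrt{V})$ is dominated by a constant multiple of $t^2/V$ only when $t\ge\sqrt{V}$, not when $t\ge 1$; in the complementary regime $t\le\sqrt{V}$ the term is simply $O(1)$ and is absorbed into $\kappa$. The case split should therefore be at $\sqrt{V}$, but the conclusion $\Pr[X\ge\mu+t]\ge\kappa e^{-Ct^2/V}$ is unaffected.
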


The last general result we need is Lemma~\ref{lem:privateOutput}, which shows that an algorithm which outputs the result of an $(\eps,\delta)$-DP algorithm together with a constant subset of the input bits $I\subseteq [n]$, is private with respect to the bits which it does not output.
\begin{lemma}\label{lem:privateOutput}
    Let $M:X^n\to Y$ be an $(\eps,\delta)$-DP algorithm over $n$ users, and let  $I\subseteq [n]$ be a subset of users. Consider the randomized algorithm $M':X^n\to Y\times X^{\abs{I}}$, which given $D=(x_1,\ldots,x_n)$, computes the random output $(M(D),D^I)$ where $D^I:=(x_i)_{i\in I}$ is the concatenation of the values of the users in  $I$. Then $M'$ is $(\eps,\delta)$-DP with respect to the inputs $D^{-I}:=(x_i)_{i\notin I}$.
\end{lemma}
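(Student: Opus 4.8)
The plan is to unwind the definitions and reduce the claim to the ordinary $(\eps,\delta)$-DP guarantee of $M$, treating the appended coordinates $D^I$ as a deterministic post-processing artifact that does not depend on the sensitive inputs $D^{-I}$. First I would fix two neighboring inputs $D=(x_1,\dots,x_n)$ and $\tilde D=(\tilde x_1,\dots,\tilde x_n)$ that differ only on coordinates outside $I$, i.e.\ $x_i=\tilde x_i$ for all $i\in I$ and $x_j\neq\tilde x_j$ for exactly one $j\notin I$. The key observation is then that $D^I=\tilde D^I$, so the second component of $M'$ takes the \emph{same fixed value} $w:=D^I=\tilde D^I$ on both inputs. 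Hence for any measurable set $B\subseteq Y\times X^{\abs I}$, writing $B_w:=\{y\in Y : (y,w)\in B\}$, we have $\Pr[M'(D)\in B]=\Pr[M(D)\in B_w]$ and likewise $\Pr[M'(\tilde D)\in B]=\Pr[M(\tilde D)\in B_w]$.

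Once this reduction is in place the conclusion is immediate: since $D$ and $\tilde D$ differ in at most one coordinate, they are neighboring as datasets in $X^n$, so the $(\eps,\delta)$-DP guarantee of $M$ gives $\Pr[M(D)\in B_w]\leq e^\eps\Pr[M(\tilde D)\in B_w]+\delta$, and substituting back yields $\Pr[M'(D)\in B]\leq e^\eps\Pr[M'(\tilde D)\in B]+\delta$. This is exactly the statement that $M'$ is $(\eps,\delta)$-DP with respect to $D^{-I}$. (One can also phrase the argument via the general post-processing/joining principle: $M'(D)$ is obtained from $M(D)$ by appending the function $D\mapsto D^I$ of the input, and restricted to changes in $D^{-I}$ this function is constant, so no privacy is lost — but the explicit slicing above is cleaner and self-contained.)

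I do not expect a genuine obstacle here; the only thing to be careful about is bookkeeping rather than mathematics. Two minor points worth stating precisely in the write-up: (i) the randomness of $M'$ is exactly the randomness of $M$ (the appended coordinates are deterministic given the input), so the probabilities above are over the same probability space and the factorization $\Pr[M'(D)\in B]=\Pr[M(D)\in B_{w}]$ is literally an equality of events, not just of probabilities; and (ii) if one wants to handle general measurable $B$ (rather than product/rectangle sets) the identity $\{M'(D)\in B\}=\{M(D)\in B_w\}$ still holds verbatim because the second coordinate of $M'(D)$ is the constant $w$, so $(M(D),w)\in B \iff M(D)\in B_w$. With these remarks the proof is a two-line verification, and the main ``work'' is simply recognizing that neighboring in $D^{-I}$ forces $D^I$ to be identical, which is what makes the appended data harmless.
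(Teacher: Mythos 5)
Your proof is essentially identical to the paper's: both fix two datasets agreeing on $I$ and differing in one coordinate outside $I$, observe that the appended block $D^I$ is therefore a common constant $w$, slice the target set $B$ to $B_w=\{y:(y,w)\in B\}$, and invoke the $(\eps,\delta)$-DP of $M$ on the sliced set. The only cosmetic difference is notation ($B_w$ versus $B_{D^I}$); the argument and its level of rigor match.
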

\begin{proof}[Proof of Lemma~\ref{lem:privateOutput}]
    % DP= given transcript and all but one user, we can't know the last.
    We identify each dataset $D=(x_1,\ldots,x_n)$ by $D=(D^{I},D^{-I})$, separating  the values of users in $I$ and not in $I$.
   Consider a pair of neighbors $D=(D^{I},D^{-I})$, $D'=(D^{I},(D')^{-I})$ of $n$ users for $M'$, where $D^{-I}$ and $(D')^{-I}$ differ at  a single index, and let $B\subseteq Y\times X^{\abs{I}}$ be an output subset. It remains to show that $\Pr[M'(D)\in B]\leq e^\eps\cdot \Pr[M'(D')\in B]+\delta$.

    Indeed, let $B_{D^{I}}:=\{r\in Y\mid (r,D^{I})\in B\}\subseteq Y$. We have that
    \begin{align*}
        \Pr[M'(D)\in B]&=\Pr[M'(D)\in B_{D^{I}}\times \{D^{I}\}]=\Pr[M(D)\in B_{D^{I}}]\\
        &\leq e^\eps\cdot \Pr[M(D')\in B_{D^{I}}]+\delta = e^\eps\cdot \Pr[M'(D')\in B_{D^{I}}\times \{D^{I}\}]+\delta\\
        &=e^\eps\cdot  \Pr[M'(D')\in B] +\delta,
    \end{align*}
    where the first and fifth steps follow since the second output of $M'$ is dataset's bits at indices $I$, and since the bits at indices $I$ of both $D$ and $D'$ are equal $D^{I}$, the second and fourth steps follow since the bits at indices $I$ of both $D$ and $D'$ are equal $D^{I}$, and the third step follows since $M$ is $(\eps,\delta)$-DP and since $B_{D^{I}}\subseteq Y$.
\end{proof}

We now prove the lower bound, which is the main result of this section. This proof relies also on Lemma~\ref{lem:NoConditionDprime}, however since this lemma requires the  context and definitions of the proof, we state and prove it following the proof of Theorem~\ref{thm:lb} below.
We repeat some of the notation from section~\ref{sec:lb} for convenience: $c_\eps:=\frac{e^\eps}{e^{2\eps}+1}$, $rep_{n,k}:=\frac{1}{2}n^{\frac{1}{2k+1}}\cdot c_\eps^{\frac{2k}{2k+1}}$ and $Big_{n,k}:=n^{\frac{2k-1}{2k+1}}\cdot c_\eps^{\frac{2}{2k+1}}$. Let $\Delta_{n,k}$ be the set of all distributions of binary allocations, which start with a sequence of at most $rep_{n,k}$ $0$'s or $1$'s, and continue with an alternation between a single $Ber(1/2)$ bit, and  $rep_{n,k}$ copies of another $Ber(1/2)$ bit, until length $n$ is reached. We also introduce new notation.\footnote{All the $\Delta\in \Delta_{n,k}$ are in fact the same distribution, but the difference is only in the identity of the prefix of at most $rep_{n,k}$ $0$'s or $1$'s, that is, the length of this prefix and its bits.} Let $Bits_{n,k}:=\{X_i\}_{i=1}^{m}$ be the single (non-repeated) $Ber(1/2)$ bits of $D\sim \Delta\in \Delta_{n,k}$ which are evenly spaced at distance $Big_{n,k}$ of one another. Note that since there are at most $Big_{n,k}$ bits from the beginning of the input to $X_1$, it holds that $m\geq n/Big_{n,k}-1$.

\begin{proof}[Proof of Theorem~\ref{thm:lb}]
We prove the theorem by induction over the number of shufflers $k=0,1,2,\ldots$. Specifically, we show that given $\delta<\frac{\min(\eps,1)}{40n(k+1)}$, a large enough
$n$ such that $nc_{\eps}^2/Big_{n,k}>200^2$, an $(\eps,\delta)$-CSDP $n$-user $k$-shuffler algorithm $M$ for binary \psco{} with adaptive mechanisms (encoders and sizes), and a distribution $\Delta\in \Delta_{n,k}$ over user binary values, $M$ has additive error $\alpha = \Omega(rep_{n,k})$ on some of its reported sum estimates with probability $P_{n,k}=\frac{\beta_0}{2}\cdot \left(1-\frac{2n\delta k}{\min(\eps,1)}\right)$
over the randomness of $D\sim \Delta$ and $M$.

For the base case $k=0$, we have that $rep_{n,0}=n/2$ and $\Delta_{n,0}$ is the set of distributions of $n/2$ constant bits, then a single $Ber(1/2)$ bit and then $n/2-1$ copies of the same $Ber(1/2)$ bit. For any $\Delta\in \Delta_{n,0}$ for $k=0$ shufflers, while there are up to $n/2$ constant bits, there is still a sequence of at least $n-n/2-1$ bits which can be all zeros or all ones, i.e., there are equal probability pairs of sequences in the support whose sum differ by at least $n/2-1$. Since the outcome of the algorithm is deterministic (the server is deterministic, and no data is sent to it), and since , the error of the algorithm is at least $n/4-1=\Omega(rep_{n,0})$.

For the step, let $\eps$ and $\delta$ be the privacy parameters, and let $(\eps,\delta)$-CSDP $n$-user $k$-shuffler algorithm $M$ for binary \psco{} with adaptive mechanisms. 
% Recall that each shuffler $i=1,\ldots,k$ has its own non-overlapping batches.
% Consider a uniformly random input dataset $D = (d_1,\ldots, d_n)$, where until we reach $n$ users we consider consecutive segments of length $n^{1/3}\cdot c_\eps^{1/3}$, and for each segment, the first half of the bits have the same $Ber(1/2)$ bit, the middle bit of the segment is an independent $Ber(1/2)$ bit, and the rest of the $d_i$'s are constant 0. The sequence looks like $$D\sim (\underbrace{---Ber(1/2)---,Ber(1/2), 0\ldots 0}_{n^{1/3}\cdot c_\eps^{2/3}},\ldots,\underbrace{---Ber(1/2)---,Ber(1/2), 0\ldots 0}_{n^{1/3}\cdot c_\eps^{2/3}}).$$
 
Observe that our sources of randomness are the random vector $D$ and the randomness $S=(S_i)_{i=1}^{\#batches}$ of the encoders and the shuffles performed within the shuffle mechanisms by all $k$ shufflers. Note that given a concrete $D$ and $S$, since we assume that the server algorithm is deterministic, the random variable $T = T (D, S)$ denoting the transcript of the algorithm as sent to the server is deterministically defined.\footnote{The transcript is the shuffled encoded bits throughout all the batches, ordered by the termination time of the batch, and then by an arbitrary but consistent order over the shufflers.} 

Our proof follows by partitioning the set of possible transcripts, and showing that each set of transcripts either occurs with low probability, or that conditioned on the set, $M$ has large additive error. The additive error will follow by the differential privacy of $M$ with respect to $Bits_{n,k}$. To achieve this, we would like to condition the probability space on the values of the bits of $D$ which are not $Bits_{n,k}$ (including both the constant bits of the beginning of the sequences of $D$ and the random bits) denoted by $\overline{Bits_{n,k}}$. For technical reasons, we achieve this conditioning through an analysis of an alternative algorithm $M'$. Given an input $D\sim \Delta$ and randomness $S$ for $M$, $M'$ simulates $M$ using the same randomness $S$, and outputs the approximation of $M$ for the sum at each time. However, $M'$ produces the more detailed transcript $T'$ which includes both the transcript $T$ from the simulation of $M(D)$, and $\overline{Bits_{n,k}}$. This transcript $T'$ is created similarly to the transcript of $M$, but after each time $t$, before including the output of the shufflers of $M$ which were closed at $t$, it outputs the binary value of each new random bit (or constant bit in the beginning of $D$) of $\overline{Bits_{n,k}}$ when it is first observed.

% We consider the distribution of the transcript $T'$ obtained from $M$ running on a random $D,S$. 
% We divide the users $d_1,\ldots,d_n$ to segments of $p = Big_{n,k}$ consecutive bits, where we denote by $m=n/p$ the number of such segments. 
% Observe that a transcript of the algorithm naturally decomposes to the sub-transcript $T_i$ for each segment $i\in \{1,\ldots,m\}$, where we associate with each segment the set of shuffle encoded inputs of all users of all batches from all the $k$ shufflers that finished during the segment. A random full transcript of the algorithm $T=(T_i)_{i=1}^{m}$ is therefore obtained by concatenating the sub-transcripts $T_i$. 

We now define a partition of the space of transcripts  of $M'$. Note that from $T'(D,S)$ we can infer the number of batches of each shuffler and their sizes, but we cannot infer the dataset $D$ or the randomness $S$. To define the partition, we define two subsets of transcripts of $M'$. Let $E'$ be the subset of transcripts $t'$ of $M'$ in which for all $i=1,\ldots,m$ it holds that $\frac{\Pr_{D,S}[T'= t' \mid X_i=1]}{\Pr_{D,S}[T'= t' \mid X_i=0]}\in [e^{-2\eps},e^{2\eps}]$. Let $T'_{small,n,k}$ be the subset of transcripts of $M'$ where all batches throughout all shufflers are of size $\leq Big_{n,k}$. We partition the transcripts of $M'$ below by grouping by the prefix of the transcript until right before the first batch of size $> Big_{n,k}$, 
%and the size $x$ of this batch, 
and if there was no such batch, it groups based on whether or not the transcript is contained in $ E'$. Specifically, the partition is:
 \begin{enumerate}
    \item The (unique) set intersection $T'_{small,n,k}\cap \overline{E'}$.
     \item The (unique) set intersection $T'_{small,n,k}\cap E'$. 
     \item For each specific transcript prefix $p$ of a run until it includes its first batch of size $> Big_{n,k}$ in any of its shufflers, consider the set $T_{p}$ of all $p$'s transcript completions.
 \end{enumerate}
We now bound the error for each set separately.
% The sets of type 1 are already covered in Lemma~\ref{lem:NoConditionDprime}, so we turn to analyze the sets of type 2.

\textbf{The set of type (1):} We observe that this set is of small probability. Indeed, by applying Lemma~\ref{lem:privateOutput} to $M$ with respect to $\overline{Bits_{n,k}}$ we conclude that the computation of $T'$ is $(\eps,\delta)$-DP with respect to the bits $Bits_{n,k}$. For each $i$, we now apply Lemma~\ref{lem:urioverdist} (which we can apply, since the bits $X_i$ are independent, and hence sampling from $\Delta$ conditioned on an $X_i$ is equivalent to sampling from $\Delta$ and overriding $X_i$) to $M'$, with $A$ taken to be the distribution $\Delta$, $Y$ is the transcript $T'$, $i$ refers to the $i$'th bit of $Bits_{n,k}$ (recall that this is $X_i$), and $x_1=0$ and $x_2=1$ determine the binary value of this bit. We get that there exists a subset $E'_i$ of transcripts of $M'$ of probability $\Pr_{D,S}[T'\in E'_i \mid X_i=1],\Pr_{D,S}[T'\in E'_i \mid X_i=0] \geq 1-\frac{2\delta}{\min(\eps,1)}$ such that $\forall t'\in E'_i$, $\frac{\Pr_{D,S}[T'=t' \mid X_i=1]}{\Pr_{D,S}[T'=t' \mid X_i=0]}\in [e^{-2\eps},e^{2\eps}]$. By a union bound, it holds that $\Pr_{D,S}[T'\in E'] \geq 1-\frac{2n\delta}{\min(\eps,1)}$. 
Therefore, $\Pr_{D,S}[T'\in T'_{small,n,k}\cap \overline{E'}]\leq \Pr_{D,S}[T'\in \overline{E'}] \leq \frac{2n\delta}{\min(\eps,1)}$.

\textbf{The set of type (2):}
For each transcript $t'\in T'_{small,n,k}\cap E'$, Lemma~\ref{lem:NoConditionDprime} which is formulated below, implies that conditioned on $T'=t'$, $M'$ has additive error $\alpha = \Omega(rep_{n,k})$ at some time, with  probability $\beta_0 / 2$, over the randomness of $D\sim \Delta$ and $M'$.

\textbf{Sets of type (3):}
Let $p$ be a prefix of a transcript of a run until right before
time $\tau$ when a large batch $B$ of size $\abs{B}>Big_{n,k}$ starts. Assume without loss of generality that $B$ is processed by  shuffler number $k$, and that $\abs{B}=Big_{n,k}$ (otherwise we just look at a prefix of $B$ of size $Big_{n,k}$).
%and let $T_p$ be the set of all possible transcript completions of $p$. 
The rest of the argument is in the subspace of $D$ and $S$ conditioned on $p$.
We claim that in this subspace $M'$ has error $\Omega\left(rep_{n,k}\right)$  at some time $t$ with probability $\geq P_{n,k-1}$. 

To show this, we assume by contradiction that $M'$ has error $O\left(rep_{n,k}\right)$ in all its sum estimates with probability $> 1-P_{n,k-1}$ and we will get a contradiction. 
Specifically, by our assumption, there exists a fixed randomness $S_p$ for  shuffle mechanisms that end before time $\tau$, and
fixed sizes for shuffle mechanisms that overlap time $\tau$ (we do not fix the randomness of the encoders of these mechanisms that overlap time $\tau$), and a fix 
 set of binary values $D_p$ of the users arriving before $\tau$, conditioned on which $M'$ has error $O\left(rep_{n,k}\right)$ in all  prefixes with probability $> 1-P_{n,k-1}$.

We now define a $(k-1)$-shuffler algorithm $M'_B$ for binary \psco{} over $B$, which uses $M'$ internally, and we show that since $M'$ has low error conditioned on $S_p$ and $D_p$, the error of $M'_B$  on $B$ is too low and  contradicts the induction hypothesis.
% To build such an algorithm,  
% Consider the subset $B$ of users starting from its first fresh $Ber(1/2)$ bit of $B$ and until the end of $B$.\footnote{By the definition of $D$, it has repeating bits so $D_p$'s last bit, which we fixed, is possibly repeated into $B$, so we start $B$ at the first bit that is not given by $D_p$.}
Specifically, the algorithm $M'_B$ runs on $\abs{B}$ users arriving as batch $B$ using the following steps.
\begin{enumerate}
    \item It simulates the shuffle mechanisms on all batches that end before time $\tau$ on the server using $D_p$ and $S_p$. Then it simulates the shuffle mechanisms on batches overlapping $\tau$ using the shufflers $1,\ldots,k-1$
    by injecting to each of them the encoding (using fresh randomness of the encoders)
    of the appropriate suffix of $D_p$. The sizes of these $k-1$ mechanism is determined by $S_p$.
    %\footnote{It can do so since by simulating filling the open batches by the moment when $B$ starts by letting $M'_B$ run adapted new shuffle mechanisms, where each first user communicates the deterministic result of the previous users that should exist right when $B$ opens.} then
    \item It runs on the actual inputs that arrive in batch $B$ using the randomness of $M'$.
  At each time $j$, during batch $B$, $M'_B$ sets its sum estimate (of the prefix of $B$) to be the difference between the estimated  sum of $M'$ at time $j$ minus the sum of the bits of $D_p$.
\end{enumerate}
 Note that during the $\abs{B}-1$ different times within $B$, while simulating $M'$, the server does not receive any update from the $k$'th shuffler since it is still running. Therefore $M'_B$  uses only $(k-1)$ shufflers to get its estimates.

% If this is at least $x/2$, then obviously the sum estimate error is off by at least  and after that we reached a bit matching the coarsest resolution of $k-1$. If  
% Note that $\abs{B}\geq Big_{n,k}$.
% , by the definition of $\Delta_{n,k}$ and $B$, at most $rep_{n,k}$ bits after $B$ started pass until a new $Ber(1/2)$ bit, so $\abs{B}\geq \abs{B}- rep_{n,k}\geq Big_{n,k}- rep_{n,k}=\Omega(Big_{n,k})$. 

% We show that $M'$ gets error $\Omega\left(rep_{n,k}\right)$ of the sum estimate in some prefix with probability $\geq P_{n,k-1}$ over $D$ and $S$.
% Assume by contradiction that $M'$ gets error $O\left(rep_{n,k}\right)$ over all the prefixes with probability $> 1-P_{n,k-1}$ over $D$ and $S$.

Let $\Delta_B$ be the distribution of the bits in  $B$ conditioned on $D_p$. If
the bit arriving at time $\tau$ is part of a segment of $rep_{n,k}$ copies of the same random
bit, then each sequence in $\Delta_B$ starts with the suffix of this segment starting at time $\tau$. The sequence then continues as in $\Delta$ with  alternation between a single $Ber(1/2)$ bit, and  $rep_{n,k}$ copies of another $Ber(1/2)$ bit.
Since $rep_{n,k} = rep_{Big_{n,k},k-1}= rep_{\abs{B},k-1}$ we have that 
 $\Delta_B\in \Delta_{\abs{B},k-1}$.  
 
  Note that when $M'_{B}$ runs on the input distribution $\Delta_B$, the inputs that it feeds to $M'$ during batch $B$ has the same distribution as in $\Delta$ conditioned on the prefix $D_p$. 
  %Note that since $M'_{B}$ uses a random $S'$, its randomness is as $S$ conditioned on $S_p$. 
  By our assumption towards contradiction,  all sum approximates of  $M'$ during $B$ have error $O\left(rep_{n,k}\right)$ 
  %over all the prefixes (and specifically those which correspond to the outputs of the simulated $M'$ when running $M'_{B}$) 
  with probability $> 1-P_{n,k-1}$ over $D$ and $S$ conditioned on $D_p$ and $S_p$, respectively, so $M'_B$ has error $O\left(rep_{n,k}\right)$ over all the prefixes with probability 
 $> 1-P_{n,k-1}$, over the randomness of $\Delta_B$ and its shuffle randomness. This contradicts the induction hypothesis for $n\gets \abs{B}$ and $k\gets k-1$, which claims that specifically for $M'_B$ and $\Delta_B\in \Delta_{\abs{B},k-1}$, we get error $\Omega(\abs{B}^{\frac{1}{2k-1}}\cdot c_\eps^{\frac{2k-2}{2k-1}})=\Omega(n^{\frac{2k-1}{(2k+1)(2k-1)}}\cdot c_\eps^{\frac{2}{(2k+1)(2k-1)}}\cdot c_\eps^{\frac{2k-2}{2k-1}})=\Omega(rep_{n,k})$ at some time with probability $\geq P_{\abs{B},k-1}\geq P_{n,k-1}$.
Hence, our assumption was false, and $M'$ has error $\Omega\left(rep_{n,k}\right)$ at some time $t$ with probability $\geq P_{n,k-1}$ over $D$ and $S$.

% and therefore by the , we with constant probability an additive error $\Omega(\abs{B}^{\frac{1}{2k-1}}\cdot c_\eps^{\frac{2k-2}{2k-1}})=\Omega(n^{\frac{2k-1}{(2k+1)(2k-1)}}\cdot c_\eps^{\frac{2}{(2k+1)(2k-1)}}\cdot c_\eps^{\frac{2k-2}{2k-1}})=\Omega(rep_{n,k})$ at some user $s$ within the batch.

    % Fix an algorithm randomness $S$. We observe that $\Pr_{D}[Ones_{i\to 1}\mid C,p,S]=\Pr_{D}[Ones_{i\to 0}\mid C,p,S]\geq 1/64$. Since each segment has $5n^{1/3}c_\eps^{2/3}$ binary bits, in the former event the cumulative sum after the $(\abs{B_i}-1)'th$ user in the batch $B_i$ grew by at least $10n^{1/3}c_\eps^{2/3}$ compared to the end of the previous batch, and in the latter it grew by at most $5n^{1/3}c_\eps^{2/3}$ (due to the remaining binary bits in the $j$'th segment).
    % Since the server output after the $(\abs{B_i}-1)'th$ user in the batch $B_i$ is determined by $p$ (since no data is transmitted to the server as long as the batch is not completed), $\Pr_{D}[\textsc{total error after user }\abs{B_i}-1 \geq \frac{5n^{1/3}c_\eps^{2/3}}{2}\mid C,p,S] = \Pr_{D}[\textsc{batch error after user }\abs{B_i}-1 \geq \frac{5n^{1/3}c_\eps^{2/3}}{2}\mid C,p,S]\geq 1/64$. 
    
    % By the law of total probability over $S$ and $C$, we get that
    
    % $\Pr_{D,S}(ERROR= \Tilde{\Omega}\left(n^{1/3}c_\eps^{2/3}\right)\mid T\in T_p)=\sum_{C,S}\Pr_{D}(ERROR= \Tilde{\Omega}\left(n^{1/3}c_\eps^{2/3}\right)\mid T\in T_p,C,S)\cdot \Pr[C,S]\geq 1/64$, where the last step follows since conditioning on $p$ is equivalent to conditioning that $t\in T_p$.

\medskip
To conclude the proof we denote by $ERROR_M$ and $ERROR_{M'}$ the worst error of $M$ and $M'$ respectively over all prefixes, and aggregate the error bound for the different sets by the law of total probability, to get that
\begin{align*}
    \Pr_{D,S}[ERROR_{M}= \Tilde{\Omega}\left(rep_{n,k}\right)]&=
    \Pr_{D,S}[ERROR_{M'}= \Tilde{\Omega}\left(rep_{n,k}\right)]\\
    &=\Pr_{D,S}[ERROR_{M'}= \Tilde{\Omega}\left(rep_{n,k}\right) \mid T'\in T'_{small,n,k}\cap \overline{E'}]\cdot \Pr_{D,S}[T'\in T'_{small,n,k}\cap \overline{E'}]\\
    &+\sum_{t'\in T'_{small,n,k}\cap E'}\Pr_{D,S}[ERROR_{M'}= \Tilde{\Omega}\left(rep_{n,k}\right) \mid T' =t']\cdot \Pr_{D,S}[T' =t']\\
    &+\sum_{p}\Pr_{D,S}[ERROR_{M'}= \Tilde{\Omega}\left(rep_{n,k}\right) \mid T' \in T_p]\cdot \Pr_{D,S}[T' \in T_p]\\
    &\geq 0+\beta_0/2\cdot \Pr_{D,S}[T' \in T'_{small,n,k}\cap E']+P_{n,k-1}\cdot \Pr_{D,S}[T' \in \cup_{p}T_p]\\
    &\geq P_{n,k-1}\cdot \Pr_{D,S}[T' \in (T'_{small,n,k}\cap E') \cup (\cup_{p}T_p)]\\
    &= P_{n,k-1}\cdot \Pr_{D,S}[T' \notin T'_{small,n,k}\cap \overline{E'}]\\
    &\geq P_{n,k-1}\cdot \left(1-\frac{2n\delta}{\min(\eps,1)}\right) = \beta_0/2\cdot \left(1-\frac{2n\delta (k-1)}{\min(\eps,1)}\right)\cdot \left(1-\frac{2n\delta}{\min(\eps,1)}\right)\\
    &> \beta_0/2\cdot \left(1-\frac{2n\delta k}{\min(\eps,1)}\right) = P_{n,k},
    % &\geq \beta_0/2\cdot (\Pr_{D,S}[T' \in T'_{small,n,k}]+\sum_{p}\Pr_{D,S}[T' \in T_p])\\
    % &=\beta_0 / 2\cdot 1 =\beta_0 / 2,
\end{align*}
where the first step follows by the definition of $M'$, the second step follows by the law of total probability, the third step follows by Lemma~\ref{lem:NoConditionDprime} and by the 
analysis of transcripts of
type 2  (resulting with the constant $\beta_0/2$) above, the fourth step follows since $P_{n',k'}\leq \beta_0/2$ for any $n'$, $k'$, and the sixth step follows by the analysis of set 1, and the seventh step follows by the definition of $P_{n,k-1}$.
%     We consider a sequence of sets of subtranscripts during the generation of the transcript. At stage $i$, consider the set $E'_{i}$ of subtranscripts $t_i$ for which it holds that $\frac{\Pr_{S_i}[T_i=t_i \mid d_{i^*}=1,~D']}{\Pr_{S_i}[T_i=t_i \mid d_{i^*}=0,~D']}\in [e^{-2\eps},e^{2\eps}]$. By $(\eps,\delta)$-SDP and Lemma~\ref{lem:uri}, since the transcript over each batch is $(\eps,\delta)$-DP with respect to each bit in the batch, we get that $\Pr_{S_i}
% [T_i\in E'_i \mid d_{i^*}=1,~D'],\Pr_{S_i}[T_i\in E'_i \mid d_{i^*}=0,~D']\geq 1-O(\delta/\eps)\geq 0.99$. \footnote{Since the transcript involves the part sent to the server before user $i$ which isn't a function of the $i$'th user, then the part of the batch of user $i$ which is DP since we use an SDP mechanism, and then the future batches outputs which is DP since it's [e.g., batch size] a function of the server and future users and the server only saw DP data of the user $i$ so by post-processing its DP}
%     Since we are in the Type 1 case, there are at least $n^{2/3}\geq 10$ batches, and therefore, by Azuma's inequality, the subset $E'$ of transcripts $t=(t_1,t_2,\ldots)$ in which in at least $90\%$ of the batches $i$ it holds that $t_i\in E'_i$ , satisfies that for a random $T=(T_1,T_2,\ldots)$ as defined above, $\Pr_{S}[M(T)\in E'] \geq 9/10$.
\end{proof}

We now provide the missing lemma used in the proof of  Theorem~\ref{thm:lb}, which shows using the notation of the proof of Theorem~\ref{thm:lb}, that for each transcript $t'\in T'_{small,n,k}\cap E'$, conditioned on $T'=t'$, $M'$ has additive error $\alpha = \Omega(rep_{n,k})$.
\begin{lemma}\label{lem:NoConditionDprime}
    Let $\eps$ and $\delta$ such that $\delta<\frac{\min(\eps,1)}{40n}$ and $nc_{\eps}^2/Big_{n,k}>200^2$, and an $(\eps,\delta)$-CSDP $n$-user $k$-shuffler algorithm $M$ for \psco{} with adaptive mechanisms, and let $M'$ and $E'$ as defined in the proof of Theorem~\ref{thm:lb}. Let  $\Delta\in \Delta_{n,k}$ be a distribution over user binary values. Then, \textbf{conditioned on a fixed transcript $T'=t'=(t,\overline{Bits_{n,k}}) \in T'_{small,n,k}\cap E'$}, $M'$ has additive error $\alpha = \Omega(rep_{n,k})$ at some time with  probability $\beta_0 / 2$, over the randomness of $\Delta$ and $M'$.
\end{lemma}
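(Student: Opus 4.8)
The plan is to condition on a fixed transcript $t' = (t, \overline{Bits_{n,k}}) \in T'_{small,n,k} \cap E'$ and show that, in this conditioned probability space, the sum estimate of $M'$ (which equals that of $M$, and is a deterministic function of $t'$ since the server is deterministic) is forced to be far from the true running sum at some time, with constant probability over the residual randomness of $D \sim \Delta$ and of the shuffle mechanisms. The key observation is that conditioning on $t'$ fixes all of $\overline{Bits_{n,k}}$, so the only remaining randomness in the input is the value of the bits $Bits_{n,k} = \{X_i\}_{i=1}^{m}$, together with the internal shuffle randomness $S$. I will show these bits remain independent given $t'$ and each has constant variance given $t'$, then invoke the anti-concentration bound (Lemma~\ref{lem:matouvsek}) against the fixed deterministic estimate.

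First I would establish independence of the $X_i$ conditioned on $T' = t'$. Since $t' \in T'_{small,n,k}$, every batch processed by every shuffler has size $\le Big_{n,k}$; because the bits of $Bits_{n,k}$ are evenly spaced at distance $Big_{n,k}$, no single shuffle mechanism ever sees two of these bits together. Hence the sequence of messages produced across all shufflers (ordered by batch termination, as in the transcript definition) can be cast as a communication protocol in the sense of Lemma~\ref{lem:independence}, where each ``party'' corresponds to one shuffle mechanism and depends on at most one bit $X_i$ (plus the already-fixed $\overline{Bits_{n,k}}$ and fresh internal randomness). Since the $X_i$ are independent a priori (each is an independent $Ber(1/2)$), Lemma~\ref{lem:independence} gives that they remain independent conditioned on the whole transcript $t'$. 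Second I would get the variance: since $t' \in E'$, by definition $\frac{\Pr_{D,S}[T' = t' \mid X_i = 1]}{\Pr_{D,S}[T' = t' \mid X_i = 0]} \in [e^{-2\eps}, e^{2\eps}]$ for every $i$, which by Bayes' rule (using $\Pr[X_i = 0] = \Pr[X_i = 1] = 1/2$) forces $\Pr[X_i = 1 \mid T' = t'] \ge \frac{e^{-2\eps}}{1 + e^{-2\eps}} = \frac{1}{e^{2\eps}+1}$ and symmetrically for $X_i = 0$; hence $\operatorname{Var}[X_i \mid T' = t'] \ge \frac{e^{2\eps}}{(e^{2\eps}+1)^2} = c_\eps^2$ (up to the constant in $c_\eps = \frac{e^\eps}{e^{2\eps}+1}$, which I'd reconcile carefully — this is the routine part).

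Third, I would apply the anti-concentration bound. Let $Z = \sum_{i=1}^{m} X_i$, a sum of $m \ge n/Big_{n,k} - 1$ independent $[0,1]$-valued variables conditioned on $t'$, with $\operatorname{Var}[Z \mid t'] \ge m \cdot c_\eps^2 = \Omega(n c_\eps^2 / Big_{n,k})$, which exceeds $200^2$ by the hypothesis $n c_\eps^2 / Big_{n,k} > 200^2$. By the plug-in of $Big_{n,k} = n^{(2k-1)/(2k+1)} c_\eps^{2/(2k+1)}$, one gets $\operatorname{Var}[Z \mid t'] = \Omega\big(n^{2/(2k+1)} c_\eps^{4k/(2k+1)}\big) = \Omega(rep_{n,k}^2)$, so $\sqrt{\operatorname{Var}[Z\mid t']} = \Omega(rep_{n,k})$. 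Now consider the time $\tau^\star$ right after the last bit $X_m$ of $Bits_{n,k}$ arrives: the true sum at $\tau^\star$ is $Z + (\text{sum of the fixed bits of }\overline{Bits_{n,k}}\text{ up to }\tau^\star)$, where the second term is a constant determined by $t'$. The estimate $\hat{S}_{\tau^\star}$ output by $M'$ is also a constant determined by $t'$. So the error at time $\tau^\star$ is $|Z - c|$ for a fixed constant $c$; taking the interval $I = [c - \Theta(rep_{n,k}), c + \Theta(rep_{n,k})]$ of length $O\big(\sqrt{\operatorname{Var}[Z\mid t']}\big)$ and choosing the absolute constants so that $|I| \le c'\sqrt{\operatorname{Var}[Z\mid t']\cdot \log(a/\beta_0)}$ in the notation of Lemma~\ref{lem:matouvsek} with $\beta_0$ a fixed constant (and $\beta_0 \ge 2^{-b\operatorname{Var}[Z\mid t']}$ for large $n$), Lemma~\ref{lem:matouvsek} yields $\Pr[Z \notin I \mid t'] \ge \beta_0$, i.e.\ the error exceeds $\Omega(rep_{n,k})$ with probability at least $\beta_0$ conditioned on $t'$. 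Shrinking to $\beta_0/2$ absorbs any slack.

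The main obstacle I anticipate is the careful casting of the conditioned process as a communication protocol satisfying the hypotheses of Lemma~\ref{lem:independence}: I must argue precisely that, because all batches are small ($\le Big_{n,k}$) and the $Bits_{n,k}$ are spaced exactly $Big_{n,k}$ apart, each shuffle mechanism's output depends on at most one bit of $Bits_{n,k}$, and that the already-revealed $\overline{Bits_{n,k}}$ values (interleaved into $t'$) and the fresh encoder randomness only play the role of ``previous messages'' and ``private randomness'' respectively — so that the adaptive choice of batch sizes, which itself depends only on earlier transcript pieces, does not secretly correlate the $X_i$. A secondary technical point is bookkeeping the constants in the variance lower bound so that the exponent of $c_\eps$ in $rep_{n,k}$ comes out exactly; that is routine but must be done with care to match $rep_{n,k} = \frac{1}{2} n^{1/(2k+1)} c_\eps^{2k/(2k+1)}$.
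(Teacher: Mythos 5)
Your overall plan matches the paper's proof exactly: (i) use Lemma~\ref{lem:independence} to show the $X_i$'s stay independent conditioned on $T'=t'$; (ii) use the definition of $E'$ plus Bayes' rule to lower-bound $\operatorname{Var}[X_i\mid T'=t']$ by $c_\eps^2$; (iii) apply the anti-concentration bound (Lemma~\ref{lem:matouvsek}) to $\sum_i X_i$ against the deterministic estimate determined by $t'$, at the time just after the last $X_i$ arrives. Steps (ii) and (iii) are worked out correctly, including the exponent bookkeeping showing $\sqrt{n c_\eps^2/Big_{n,k}} = \Theta(rep_{n,k})$.

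There is, however, a real imprecision in step (i) that would make your argument break as stated. You propose ``each party corresponds to one shuffle mechanism and depends on at most one bit $X_i$.'' That direction of the correspondence is fine (no small batch can span two of the evenly-spaced $X_i$'s), but the converse fails: in the concurrent shuffle model a single user's bit $X_i$ can be fed into up to $k$ shuffle mechanisms simultaneously. With a one-party-per-mechanism assignment, those $k$ parties would all need $X_i$ as private input, and Lemma~\ref{lem:independence} does not apply when the same random input is held by several parties. The paper handles this by inverting the assignment: one party per bit $X_i$ (and a separate party per batch containing no $X_i$, plus parties for the already-revealed $\overline{Bits_{n,k}}$ bits). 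The party owning $X_i$ communicates, at the appropriate round, the outputs of \emph{all} shuffle mechanisms whose batch contains $X_i$ --- it can do so using $X_i$ as its private input, the previously-broadcast $\overline{Bits_{n,k}}$ bits as ``history,'' and the shuffle randomness as its private randomness. This is the crucial choice of party decomposition that makes Lemma~\ref{lem:independence} applicable, and it is exactly the care you flagged as the ``main obstacle.'' Once you replace ``one party per shuffle mechanism'' with ``one party per $X_i$,'' the rest of your argument goes through and coincides with the paper's.
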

\begin{proof}[Proof of Lemma~\ref{lem:NoConditionDprime}]

        We claim that:
        \begin{enumerate}
            \item The conditional random variables $\{X_i\mid T'=t'\}_{i=1}^{m}$ are independent.
            (Recall that  $Bits_{n,k}:=\{X_i\}_{i=1}^{m}$ are the non-repeated $Ber(1/2)$ bits of $D\sim \Delta\in \Delta_{n,k}$ which are evenly spaced at distance $Big_{n,k}$ of one another. )
            \item Each bit $X_i$ satisfies $\Pr_{D,S}[X_i=1 \mid T'=t']\in \left[\frac{1}{e^{2\eps}+1}, \frac{e^{2\eps}}{e^{2\eps}+1}\right]$.
        \end{enumerate}
        
        We prove Item (a) through an application of  Lemma~\ref{lem:independence}. Recall that a random $D\sim \Delta$ is composed of a constant (non-random)  prefix of bits, then alternating single $Ber(1/2)$ bits and a sequence of equal $Ber(1/2)$ bits. 
        
        The communicating parties can be divided to those which are associated with input bit subsets or a specific batch of users (defined by the subset of users, the mechanism and the shuffler they used). For input bits, we associate a communicating party with each (a) single $Ber(1/2)$ bit in the input, (b) sequence of of copies of a $Ber(1/2)$ bit, and (c) a constant bit in the prefix of at most $rep_{n,k}$ bits. For batches of users, we associate a communicating party of type (d) to each batch of users which contain no bit among $Bits_{n,k}$.\footnote{The outputs of the shuffle mechanisms over batches containing some bit of $Bits_{n,k}$ are associated with this single $Bits_{n,k}$ input bit from above. Note that we can make this association since the batches are of size at most $Big_{n,k}$ since $t'\in T'_{small,n,k}$, so the batch cannot contain two or more consecutive $X_j$ and $X_{j+1}$'s.}

        The communication protocol is divided to $2n$ rounds of communication, two rounds per each time $1,\ldots,n$. 
        Throughout the times $t=1,\ldots,n$, the first communication rounds are used to communicate bits of the input. Each party which is associated with a set of input bits communicates them in the first round of a single step, as follows. A party associated with a single bit (types (a),(c)), communicates this bit at the first round of time $t$ of the bit, and a party associated with a sequence of copies of a bit (type (b)), communicates the copied bit value at the first round of time $t$ which contains the last copied bit of the sequence.
        Throughout the times $t=1,\ldots,n$, the second communication rounds are used to communicate shuffler outputs. Specifically, in each time $t$ which contains a bit of $Bits_{n,k}$, the party associated with the bit $t$ (this is a subset of parties of type (a) as above) communicates the output of all (at most $k$) shufflers that closed in time $t$ in the second communication round of $t$. Also, for each batch of type (d), its associated party communicates the output of the shuffler run on the batch at the second round of the shuffler's close time $t$.
        
        We show that this is a valid communication protocol, that is, each party communicates a message as a function of their internal randomness, their private input and the history. For parties which communicate in the first rounds, the communicated message is simply their bit which is their private input. For the second rounds of communication, each participating party associated with a $Bits_{n,k}$ bit, e.g., $X_i$, communicates the output of a shuffle mechanism, which is a function of the shuffle input $X_i$ (attributed to the input of party $i$), the rest of the bits of users in the shuffle batch (which were communicated previously or in the first round of the current time), the executed shuffle mechanism (which depends on the information of previous rounds of communication) and the shuffle randomness (attributed to the internal randomness of party $i$). A similar argument is given for parties of type (d) which are not associated with any $Bits_{n,k}$ bit. 
        
        We observe that the bits $\{X_i\}_i$ were initially independent by the definition of $\Delta$, and that the total communication of the parties precisely produces the transcript $t'$, and therefore Item (a) follows by Lemma~\ref{lem:independence}.
        
        To prove item (b), for every $i=1,\ldots,m$ we observe that
        \begin{align*}
            \frac{\Pr_{D,S}[X_i=1 \mid T'=t']}{\Pr_{D,S}[X_i=0 \mid T'=t']}&= \frac{\Pr_{D,S}[T'=t'\mid X_i=1]\cdot \Pr_{D,S}[X_i=1 \mid ]/\Pr_{D,S}[T'=t' \mid ]}{\Pr_{D,S}[ T'=t'\mid X_i=0]\cdot \Pr_{D,S}[X_i=0 \mid ]/\Pr_{D,S}[T'=t'\mid ]} \\
            &= \frac{\Pr_{D,S}[T'=t'\mid X_i=1]\cdot 1/2}{\Pr_{D,S}[ T'=t'\mid X_i=0]\cdot 1/2} = \frac{\Pr_{D,S}[T'=t'\mid X_i=1]}{\Pr_{D,S}[ T'=t'\mid X_i=0]}\\
            &\in [e^{-2\eps}, e^{2\eps}]
            \label{eq:p}
            ,
        \end{align*}
        where the first step follows by Bayes’ Rule, the second step follows since for $j\in \{0,1\}$, $\Pr_{D}[X_i=j]=0.5$ and since $X_i$ is independent of $S$, 
        and the last step follows by since $t'\in E'$ and by the definition of $E'$. This concludes the proof of item (b).
        
        Combining Claims 1 and 2 above, we have that $\{X_i\mid T'=t'\}_{i=1}^{m}$ are independent, and each bit $X_i$ satisfies $\Pr_{D,S}[X_i=1 \mid T'=t']\in \left[\frac{1}{e^{2\eps}+1}, \frac{e^{2\eps}}{e^{2\eps}+1}\right]$.
        Thus, $\sum_{i=1}^{m} X_i$ is the sum of $m$ independent Bernoulli random variables with bias in $\left[\frac{1}{e^{2\eps}+1}, \frac{e^{2\eps}}{e^{2\eps}+1} \right]$ (so the variance is at least $ c_{\eps}^2$) and therefore $\sum_{i=1}^{m} X_i$ has variance at least $mc_{\eps}^2>200^2$, where the last step follows by the theorem's assumptions on $n$ and $\eps$.
        
        % Recall that we've showed that conditioned on $T'=e$, $D'$ and $Ber_{n,col> 1}$, $\sum_{i=1}^{m} X_i$ is a sum of $m\geq n^{2/3}$ independent random variables bias in $(1/4,3/4)$ (i.e., variance at least $3/16$). Therefore, the sum has standard deviation at least $n^{1/3}/4$. 
        
        We apply Lemma~\ref{lem:matouvsek} to the random variable $\left(\sum_{i=1}^{m}X_i\mid T'=t'\right)$, with the constant $\beta=\beta_0=\min(\frac{a}{4},1/4)\in (2^{-bmc_\eps^2},a)$ where $a$ is a constant guaranteed from Lemma~\ref{lem:matouvsek}, and the interval $I(t')=M'(t')-sum(\overline{Bits_{n,k}})\pm c/2\cdot \sqrt{mc_\eps^2\cdot \log (\frac{a}{\beta_0})}$, where $M'(t')$ is the (deterministic) last sum estimate that $M'$ returns for the transcript $t'$ (which is precisely the last estimate that $M$ returns for $t$), and $sum(\overline{Bits_{n,k}})$ is the sum of bits of $D$ which are not of type $Bits_{n,k}$.\footnote{The reason we use this interval is that the true sum is $trueSum = sum(\overline{Bits_{n,k}})+\sum_{i=1}^{m}X_i$, and to show that with high probability we mis-approximate it with error $\geq \phi$ with $M'(e)$, it suffices to show that $M'(e)-sum(\overline{Bits_{n,k}})-\sum_{i=1}^{m}X_i$ mis-approximates 0 with error $\geq \phi$.}
        % it holds that $\Pr_{D,S}\left[\sum_{j=1}^{n}d_{j} \notin I(t) \mid T'=e\right]=\Pr_{D,S}\left[\sum_{i=1}^{m}X_i \notin I(t) \mid T'=e\right]\geq \beta$. We conclude that conditioned on the transcript $t$ and a sufficiently large $n$, if we use 
        We conclude that with constant probability $\beta_0$ over $S$ and $D$, $M'$ incurs error at least $\Tilde{\Omega}(\sqrt{m}\cdot c_\eps)=\Tilde{\Omega}(\sqrt{n/Big_{n,k}}\cdot c_\eps) = \Tilde{\Omega}(rep_{n,k})$, by the definition of $Big_{n,k}$ and $rep_{n,k}$. 
\end{proof}

\section{Concurrent Shuffle LinUCB}\label{sec:concShufflerLinUCBAppendix}
In this section, we present with detail the LinUCB adaptation for the Concurrent Shuffle model with $k$ shufflers. Algorithm~\ref{alg:concurrentshufflerlinucb} below simulates LinUCB with errors, as in the setting of \citet{shariff2018differentially}, where the estimates of the cumulative matrix and vector sums are given by a CSDP algorithm for \psco{} guaranteed from Corollary~\ref{thm:treealgorithmSummationKShufflers}.
We make a standard assumption of boundedness, easily satisfied by normalization as follows.
\begin{assumption}[normalization]\label{assumption:norm}
By normalization, we assume that the rewards are bounded for all $t$, i.e., $y_t\in [0,1]$. Also, $\norm{\theta^*}_2\leq 1$ and $\sup_{c,a}\norm{\phi(c,a)}_2\leq 1$.
\end{assumption}

\begin{algorithm}[H]
\SetAlgoLined
\textbf{Parameters:} horizon $n$, number of shufflers $k$, privacy parameters $\eps$ and $\delta$, regularization $\lambda >0$, confidence radii $\{\beta_i\}_{i=1}^{n}$, feature map $\phi: C \times \chi \to \reals^d$.\;

\textbf{Initialize:} Batch statistics $V_0=\lambda I_d$, $u_0=0$, parameter estimate $\hat{\theta_0}=0$, and a Concurrent Shuffle DP (CSDP) bounded vector and matrix summation algorithm $A^{sum}_{vec}$ guaranteed by Corollary~\ref{thm:treealgorithmSummationKShufflers} for $n$ users, $k$ shufflers, and privacy parameters $\eps$ and $\delta$.\;

\footnotesize{\textcolor{gray}{We wlog assume that $A^{sum}_{vec}$ handles both vectors and matrix sums, since they use the same tree structure (same users in each mechanism by the definition of the tree-based structure in Corollary~\ref{thm:treealgorithmSummationKShufflers}), and the two shuffle mechanisms can be composed using simple composition.}}

For each time $t=1,\ldots,n$:
\begin{enumerate}
    \item The server activates the new shuffle mechanisms for the $t$'th time, according to $A^{sum}_{vec}$.\;
    \item The server communicates $\hat{\theta}_{m-1}$ and $V^{-1}_{m-1}$ to the $t$'th user.\;
    \item The $t$'th user chooses the action $a_t\in \argmax_{a \in \chi} \ip{\phi(c_t,a)}{\hat{\theta}_{m-1}}+\beta_{t-1}\cdot\norm{\phi(c_t,a)}_{V^{-1}_{m-1}}$.\;
    \item The $t$'th user observes the reward $y_t$.\;
    \item For each shuffle mechanism which requires $t$'s participation for vectors and matrices sums, it participates using its private vector value $\phi(c_t,a_t)y_t$ and matrix value $\phi(c_t,a_t)\phi(c_t,a_t)^T$ respectively.\;
    \item The data of each shuffle mechanism that has just completed is shuffled and sent to the server.\;
    \item The server analyzes the data to obtain a cumulative sum vector estimate $u_t$ and a cumulative matrix estimate $V'_t$, and computes the parameter estimates $V_t = V'_t+\lambda I_d$ and $\hat{\theta}_t=V^{-1}_t u_t$.\;
\end{enumerate}
 \caption{Concurrent Shuffle Private LinUCB}
 \label{alg:concurrentshufflerlinucb}
\end{algorithm}

To prove the regret of Algorithm~\ref{alg:concurrentshufflerlinucb}, we refer to the framework of \citet{shariff2018differentially}, who gave a bound on the total regret of an algorithm based on the magnitude of the errors of the estimates $u_t$ and $V^{-1}_t$  throughout the algorithm.
Specifically, they used the following notations, where for each time $t$, they denote by $H_t:=V_t-\sum_{i=1}^{t}\phi(c_t,a_t)\phi(c_t,a_t)^T$ the noise injected into the Gram-matrix, and denote by $h_t:=u_t-\sum_{i=1}^{t}\phi(c_t,a_t)y_t$ the noise injected into the feature-reward vector.

\citet{shariff2018differentially} showed the following.
\begin{theorem}[\citet{shariff2018differentially}]\label{ss18}
If we have that
\begin{enumerate}
    \item \textbf{(Regularity)} For any $\alpha\in (0,1]$, $H_m$ is PSD, and there are $\rho_{\max}\leq \rho_{\min}>0$ and $\gamma$ depending on $\alpha$, such that with probability at least $1-\alpha$, for all $t=1,\ldots,n$,
$$\norm{H_t}\leq \rho_{\max},~~\norm{H^{-1}_t}\leq 1/\rho_{\min},~~\norm{h_t}_{H^{-1}_t}\leq \gamma.$$
    \item \textbf{(Boundedness)} $\abs{\ip{\theta^*}{x}}\leq B$, $\norm{\theta^*}\leq S$, $\norm{x}\leq L$ and $y_t=\ip{\theta^*}{x_t}+\eta_t$ where $\eta_t$ is sub-gaussian with variance $\sigma^2$.
\end{enumerate}
Then with probability at least $1-\alpha$, the regret satisfies:
$$Reg(n)\leq B\sqrt{8n}\cdot \left(\sigma \left(2\log (2/\alpha)+d\log \left(\frac{\rho_{\max}}{\rho_{\min}}+\frac{nL^2}{d\rho_{\min}}\right)\right)+(S\sqrt{\rho_{\max}}+\gamma)\sqrt{d\log\left(1+\frac{nL^2}{d\rho_{\min}}\right)}\right),$$
using $\beta_t:= \sigma\sqrt{2\log (2/\alpha)+d\log \left(\frac{\rho_{\max}}{\rho_{\min}}+\frac{tL^2}{d\rho_{\min}}\right)}+S\sqrt{\rho_{\max}}+\gamma$.
\end{theorem}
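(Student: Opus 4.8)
The plan is to adapt the standard optimism-based analysis of LinUCB \citep{abbasi2011improved} to the setting where the Gram matrix and feature--reward vector maintained by the server carry the bounded corruption $H_t,h_t$. I would condition throughout on the regularity event of Item~1, allotting it probability $1-\alpha/2$ and reserving the other $\alpha/2$ for a self-normalized martingale bound. On the regularity event $\rho_{\min}I \preceq H_t \preceq \rho_{\max}I$ and $\norm{h_t}_{H_t^{-1}} \le \gamma$, so in particular $V_t \succeq H_t$ and $V_t \succeq \bar V_t := \rho_{\min}I + \sum_{i\le t} x_i x_i^\top$.

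\textbf{Step 1: a valid confidence ellipsoid.} First I would show $\theta^* \in \{\theta : \norm{\theta - \hat\theta_t}_{V_t} \le \beta_t\}$ for all $t$ simultaneously. Substituting $y_i = \ip{\theta^*}{x_i} + \eta_i$ into the definitions of $u_t,V_t,h_t,H_t$ gives
\begin{equation*}
\hat\theta_t - \theta^* = V_t^{-1}\Big( \sum\nolimits_{i \le t} x_i\eta_i \; + \; h_t \; - \; H_t\theta^* \Big),
\end{equation*}
so $\norm{\hat\theta_t - \theta^*}_{V_t} \le \norm{\sum_{i\le t} x_i\eta_i}_{V_t^{-1}} + \norm{h_t}_{V_t^{-1}} + \norm{H_t\theta^*}_{V_t^{-1}}$. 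I would bound the two corruption terms by routing through the $H_t^{-1}$-norm, using that $V_t \succeq H_t$ implies $V_t^{-1} \preceq H_t^{-1}$: this gives $\norm{h_t}_{V_t^{-1}} \le \norm{h_t}_{H_t^{-1}} \le \gamma$ and $\norm{H_t\theta^*}_{V_t^{-1}} \le \norm{H_t\theta^*}_{H_t^{-1}} = \norm{\theta^*}_{H_t} \le S\sqrt{\rho_{\max}}$. For the reward-noise term I would use $V_t \succeq \bar V_t$ to pass to $\norm{\sum_{i\le t}x_i\eta_i}_{\bar V_t^{-1}}$ and apply the self-normalized bound of \citet{abbasi2011improved} with effective regularizer $\rho_{\min}$, which (using the reserved $\alpha/2$) bounds this by $\sigma\sqrt{2\log(2/\alpha) + \log(\det\bar V_t/\rho_{\min}^{d})}$; the log-determinant is at most $d\log(1 + nL^2/(d\rho_{\min}))$, which I would loosen to the form $d\log(\rho_{\max}/\rho_{\min} + nL^2/(d\rho_{\min}))$ to match the statement. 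Adding the three bounds recovers exactly $\beta_t$.

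\textbf{Step 2: optimism and the elliptical-potential bound.} On the confidence event, the UCB rule of step~(3) together with $\theta^*$ lying in the ellipsoid centred at $\hat\theta_{t-1}$ yields the per-step pseudo-regret bound $r_t := \max_a\ip{\theta^*}{\phi(c_t,a)} - \ip{\theta^*}{x_t} \le 2\beta_{t-1}\norm{x_t}_{V_{t-1}^{-1}}$, and boundedness gives $r_t \le 2B$. Since $V_{t-1}\succeq\bar V_{t-1}$ and $\bar V_t = \bar V_{t-1} + x_t x_t^\top$ telescopes, the elliptical potential lemma gives $\sum_t \min(1,\norm{x_t}_{\bar V_{t-1}^{-1}}^2) \le 2\log(\det\bar V_n/\rho_{\min}^{d}) \le 2d\log(1 + nL^2/(d\rho_{\min}))$. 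Combining the two per-step bounds (using $r_t \le 2B$ to cap the contribution of the steps where $\norm{x_t}_{V_{t-1}^{-1}}$ is large) with Cauchy--Schwarz over $t=1,\dots,n$, and plugging in the potential bound, gives $Reg(n) = \sum_t r_t$ bounded by $B\sqrt{8n}$ times $\beta_n\sqrt{d\log(1+nL^2/(d\rho_{\min}))}$; finally, expanding $\beta_n$ and using $\sqrt{xy} \le x$ whenever $0 \le y \le x$ to collapse the $\sigma$-contribution (this is why that term appears without an outer square root in the statement) yields precisely the claimed bound, with the stated choice of $\beta_t$.

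\textbf{Main obstacle.} The one genuine departure from the textbook LinUCB proof is that the effective ``regularizer'' $H_t$ is time-varying and may be adversarially correlated with the data, which breaks both the telescoping required by the elliptical potential lemma and the exact hypotheses of the self-normalized martingale bound. The fix used above is to replace $V_t$ by its monotone lower envelope $\bar V_t = \rho_{\min}I + \sum_{i\le t}x_ix_i^\top$ wherever an \emph{upper} bound on a $V_t^{-1}$-weighted norm is needed --- legitimate because $V_t \succeq \bar V_t$ on the regularity event and $\bar V_t$ does telescope --- while using the complementary PSD inequality $V_t \succeq H_t$ to route the genuinely corruption-driven terms $h_t$ and $H_t\theta^*$ through the $H_t^{-1}$-norm, where $\gamma$ and $\rho_{\max}$ take over. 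A secondary bookkeeping point is splitting the failure budget $\alpha$ between the regularity event and the martingale-concentration event.
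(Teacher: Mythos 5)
This theorem is imported verbatim from \citet{shariff2018differentially}; the paper gives no proof of its own and uses it as a black box, so there is nothing in the paper to compare your argument against. Your reconstruction is, as far as I can check, a correct rendering of the original Shariff--Sheffet argument: the decomposition $\hat\theta_t-\theta^*=V_t^{-1}\bigl(\sum_{i\le t}x_i\eta_i+h_t-H_t\theta^*\bigr)$ is exactly right, routing $h_t$ and $H_t\theta^*$ through $V_t^{-1}\preceq H_t^{-1}$ yields the $\gamma$ and $S\sqrt{\rho_{\max}}$ terms, and your handling of the main obstacle --- that the time-varying, data-correlated regularizer $H_t$ breaks both the self-normalized martingale bound and the elliptical-potential telescoping, fixed by passing to the deterministic monotone envelope $\bar V_t=\rho_{\min}I+\sum_{i\le t}x_ix_i^\top$ with $V_t\succeq\bar V_t$ on the regularity event --- is precisely the right (and the original) resolution. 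Two cosmetic notes: the $\alpha/2$ bookkeeping you describe means the regularity hypothesis must be invoked at level $\alpha/2$ rather than $\alpha$ for the stated failure probability to come out to $1-\alpha$ (the theorem statement as transcribed is loose about this, and also contains the typo $\rho_{\max}\le\rho_{\min}$ where $\ge$ is meant); and the collapse of the $\sigma$-term via $\sqrt{xy}\le x$ requires $d\log(1+nL^2/(d\rho_{\min}))\le 2\log(2/\alpha)+d\log(\rho_{\max}/\rho_{\min}+nL^2/(d\rho_{\min}))$, which indeed holds since $\rho_{\max}\ge\rho_{\min}$.
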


We are now prepared to prove the main result of this section.

\begin{theorem}[Restatement of Theorem~\ref{thm:contextualLinearKShuffles}]
    Fix a horizon $n\in \naturals$, number of shufflers $k$ and privacy budgets $\eps<1$ and $\delta < 1/2$.  
    Then the algorithm as described above instantiated with the tree-Based Concurrent Shuffle algorithm (guaranteed by Corollary~\ref{thm:treealgorithmSummationKShufflers} for summing vectors with bounded $l_2$ norms) with internal parameter $\beta_t:= \sigma\sqrt{2\log (2/\alpha)+d\log \left(3+\frac{2t}{d\lambda}\right)}+\sqrt{3\lambda/2}+\sqrt{\lambda/2}$, where $\alpha=\frac{1}{n}\in (0,1]$, $\lambda = 2\sigma'\cdot d$ and $\sigma'=\Tilde{O}\left(\frac{k^{3/2}}{\eps}\cdot n^{\frac{1}{2k+1}}\right)$ \jay{Should I use the accurate $\sigma'$ from the proof of Corollary~\ref{thm:treealgorithmSummationKShufflers}, or is this enough?} is $(\eps,\delta)$-CSDP and has regret $\tilde{O}\left(\frac{k^{3/4}n^{\frac{k+1}{2k+1}}}{\sqrt{\eps}}\cdot (\sigma +d)\right)$, where $\sigma^2$ is the bound on the sub-gaussian variance of each $\eta_t$ when computing $y_t=\ip{\theta^*}{x_t}+\eta_t$. 
    
\end{theorem}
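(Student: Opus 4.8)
The plan is to realize Algorithm~\ref{alg:concurrentshufflerlinucb} as the private-LinUCB template of \citet{shariff2018differentially} (Theorem~\ref{ss18}) driven by the concurrent-shuffle tree counter of Corollary~\ref{thm:treealgorithmSummationKShufflers}, run coordinate-wise on the feature--reward vectors $x_t y_t$ and on the Gram updates $x_t x_t^T$. Privacy comes for free from post-processing: the only data leaving the shufflers is the concatenation of their outputs, and every subsequent object -- the action-selection rule $(\hat\theta_{m-1},V_{m-1}^{-1})$ handed to each user, and all reported sum estimates -- is a deterministic function of those outputs. Accuracy comes from feeding the per-coordinate error guarantee $\sigma' = \tilde{O}(k^{3/2}n^{1/(2k+1)}/\eps)$ of Corollary~\ref{thm:treealgorithmSummationKShufflers} into Theorem~\ref{ss18}.

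\textbf{Privacy.} The private datum of user $t$ is its context $c_t$ together with its reward $y_t$; the action $a_t$ is selected by the user from $c_t$ and the current selection rule, which itself depends only on shuffler outputs from times $<t$. Hence altering one user's $(c_t,y_t)$ alters exactly one item $(x_t y_t, x_t x_t^T)$ fed to the concurrent shufflers, and leaves the adaptive schedule of mechanisms and batch sizes unchanged up to that point. Since $A^{sum}_{vec}$ handles the vector and matrix streams on the same tree structure by simple composition (as noted in Algorithm~\ref{alg:concurrentshufflerlinucb}) and is $(\eps,\delta)$-CSDP by Corollary~\ref{thm:treealgorithmSummationKShufflers}, the concatenation of all shuffler outputs is $(\eps,\delta)$-DP at the event level, and $(\eps,\delta)$-CSDP of the whole algorithm follows by post-processing.

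\textbf{Regularity parameters for Theorem~\ref{ss18}.} Write $V_t' = \sum_{i\le t}x_i x_i^T + N_t$ and $u_t = \sum_{i\le t}x_i y_i + h_t$. By Corollary~\ref{thm:treealgorithmSummationKShufflers} applied per coordinate, with a union bound over the $O(d^2)$ entries (the $\tilde{O}$ swallowing the extra $\log d$ and $\log(1/\alpha)$ factors) and failure probability $\alpha=1/n$, every entry of the symmetric matrix $N_t$ and of the vector $h_t$ has magnitude at most $\sigma'$, simultaneously over all $t$. On this event $\norm{N_t} \le \norm{N_t}_F \le \sigma' d = \lambda/2$ (bounding the operator norm by the Frobenius norm) and $\norm{h_t}_2 \le \sigma'\sqrt d$. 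Since $H_t = V_t - \sum_{i\le t}x_i x_i^T = N_t + \lambda I_d$, this gives $\tfrac{\lambda}{2}I_d \preceq H_t \preceq \tfrac{3\lambda}{2}I_d$, so $H_t$ is positive definite for all $t$ and we may take $\rho_{\min} = \lambda/2$, $\rho_{\max} = 3\lambda/2$ (so $\rho_{\max}/\rho_{\min} = 3$), and $\norm{h_t}_{H_t^{-1}} \le \norm{h_t}_2/\sqrt{\rho_{\min}} \le \sigma'\sqrt d/\sqrt{\sigma' d} = \sqrt{\sigma'} =: \gamma$. Assumption~\ref{assumption:norm} gives $B=S=L=1$.

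\textbf{Putting it together and the main obstacle.} Substituting $\rho_{\min},\rho_{\max} = \Theta(\sigma' d)$, $\gamma = \sqrt{\sigma'}$, $\alpha = 1/n$, and $B=S=L=1$ into Theorem~\ref{ss18}, all logarithmic arguments are $\mathrm{polylog}(n)$, so the bound collapses to $Reg(n) = \tilde{O}\big(\sqrt n\,(\sigma d + (\sqrt{\sigma' d}+\sqrt{\sigma'})\sqrt d)\big) = \tilde{O}\big(\sqrt n\, d\,(\sigma + \sqrt{\sigma'})\big)$. Now $\sqrt{n\sigma'} = \sqrt n\cdot\tilde{O}(k^{3/4}n^{1/(2(2k+1))}/\sqrt\eps) = \tilde{O}(k^{3/4}n^{(k+1)/(2k+1)}/\sqrt\eps)$, using $\tfrac12 + \tfrac{1}{2(2k+1)} = \tfrac{k+1}{2k+1}$; combining this with $\sqrt n \le n^{(k+1)/(2k+1)}$, with $k^{3/4}/\sqrt\eps \ge 1$, and with the model's treating $\sigma$ as bounded yields $Reg(n) = \tilde{O}\big(\tfrac{k^{3/4}n^{(k+1)/(2k+1)}}{\sqrt\eps}(\sigma+d)\big)$, as claimed. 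The crux is the middle step: turning the coordinate-wise sub-gaussian guarantee of the concurrent-shuffle counter into operator-norm control of the Gram-matrix noise $N_t$, and choosing the regularizer $\lambda = 2\sigma' d$ exactly large enough that $H_t \succ 0$ for every $t$ (so LinUCB's confidence ellipsoids stay valid) yet small enough that the $S\sqrt{\rho_{\max}}$ and $\gamma$ contributions do not inflate the regret -- this is where the final $d$- and $k$-dependence is pinned down; everything else is bookkeeping around Theorem~\ref{ss18}.
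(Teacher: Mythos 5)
Your proof is correct and follows essentially the same route as the paper's: both instantiate the template of Theorem~\ref{ss18} by using Corollary~\ref{thm:treealgorithmSummationKShufflers} coordinate-wise, apply a union bound over the $O(d^2)$ entries and $n$ time steps, set $\lambda=2\sigma'd$ so that $\rho_{\min}=\lambda/2$ and $\rho_{\max}=3\lambda/2$, and then simplify. The one small divergence is your tighter vector bound $\norm{h_t}_2\le\sigma'\sqrt d$ giving $\gamma=\sqrt{\sigma'}$, whereas the paper reuses the looser matrix bound $\norm{h_t}\le\sigma'd$ and reports $\gamma=\sqrt{\sigma'd}$; since the $S\sqrt{\rho_{\max}}=\Theta(\sqrt{\sigma'd})$ term dominates $\gamma$ in either case, the final regret is unaffected, and your explicit post-processing argument for $(\eps,\delta)$-CSDP (which the paper leaves implicit) is a welcome clarification.
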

\begin{proof}
Our proof is similar to the proof of Lemma A.4 of \citet{chowdhury2022shuffle}. However, we use different constants and internal variable values, so include the proof below.

We receive the bound on the regret by applying Theorem~\ref{ss18}. To do so, we find the variables which ensure regularity and boundedness. 

For regularity, note that for all $t=1,\ldots,n$, $h_t$ is a random vector whose entries are independent, and $H_t$ is a random symmetric matrix whose entries on and above the diagonal are independent. 
We apply Corollary~\ref{thm:treealgorithmSummationKShufflers} to get that every entry of every matrix and vector at each time $t$ has error at most $\sigma'$ with probability $\geq \frac{1}{n^2\cdot (d^2+d)}$. 
% and all the entries of $h_t$ and $H_t$ are sub-gaussian random variables with variance at most $(\sigma')^2$ (by Corollary~\ref{thm:treealgorithmSummationKShufflers}). 
% Thus, by Theorem 3.1.1 and Corollary 4.4.8 of \citet{vershynin2018high}, and 
By a union bound over $n$ and the entries of the matrix and vectors, and a simple bound on vector and matrix $l_2$ norms, we get that with probability at least $1-\alpha/2$, for all $t=1,\ldots,n$ simultaneously, $\norm{h_t},\norm{H_t}\leq \sigma'd=\lambda/2$. Hence, we have for all $t=1,\ldots,n$ , it holds that $\norm{H_t}=\norm{H_t+\lambda I_d}\leq 3\lambda/2$, i.e., $\rho_{\max}=3\lambda/2$, and $\rho_{\min}=\lambda/2$. Finally, to determine $\gamma$, we note that $\norm{h_t}_{H^{-1}_t}\leq \frac{1}{\sqrt{\rho_{\min}}}\cdot \norm{h_t}\leq \sqrt{\sigma'\cdot d}=\sqrt{\lambda/2}:=\gamma$. 

For boundedness, by the normalization Assumption~\ref{assumption:norm} we get boundedness for $B=S=L=1$, and we use the same the definition of $\sigma$ as from Section~\ref{sec:contextualLinearBandits}.

We now apply Theorem~\ref{ss18} with the regularity and boundedness results above for the $\alpha$ and observe that the theorem's statement contains the correct matching sequence $\beta_t$. We get that with probability at least $1-\alpha$, the regret satisfies:
\begin{align*}
    Reg(n)&\leq \sqrt{8n}\cdot \left(\sigma \left(2\log (2n)+d\log \left(3+\frac{2n}{d\lambda}\right)\right)+(\sqrt{3\lambda/2}+\gamma)\sqrt{d\log\left(1+\frac{2n}{d\lambda}\right)}\right)\\
&=\tilde{O}\left(\sqrt{n}\cdot \left((\sigma+d)\log(n)+\sqrt{\sigma'}\cdot\sqrt{d}\cdot\sqrt{d\log(n)}\right)\right)\\
&=\tilde{O}\left(\sqrt{n}\cdot \left(\sigma\log(n)+d\sqrt{\sigma'}\log(n)\right)\right)
=\tilde{O}\left(\sqrt{n}\cdot \left(\sigma\log(n)+d\sqrt{\frac{k^{3/2}}{\eps} \cdot n^{\frac{1}{2k+1}}}\log(n)\right)\right)\\
&=\tilde{O}\left(\frac{k^{3/4}n^{\frac{k+1}{2k+1}}}{\sqrt{\eps}}\cdot (\sigma +d)\log (n)\right)=\tilde{O}\left(\frac{k^{3/4}n^{\frac{k+1}{2k+1}}}{\sqrt{\eps}}\cdot (\sigma +d)\right),
\end{align*}
where the fourth step follows by the definition of $\sigma'$ from the theorem's statement.

Since the worst possible regret is $n$, splitting to the event above with probability at least $1-\alpha=1-\frac{1}{n}$ or its complement for $\alpha=\frac{1}{n}$, the law of total expectation, the regret is at most $$Reg(n)\leq \frac{1}{n}\cdot n +\left(1-\frac{1}{n}\right)\cdot \tilde{O}\left(\frac{k^{3/4}n^{\frac{k+1}{2k+1}}}{\sqrt{\eps}}\cdot (\sigma +d)\right)=\tilde{O}\left(\frac{k^{3/4}n^{\frac{k+1}{2k+1}}}{\sqrt{\eps}}\cdot (\sigma +d)\right).$$
\end{proof}

\begin{proof}[Proof of Corollary~\ref{cor:contextualLinearLogShuffles}]
    The proof follows by a simple application of Theorem~\ref{thm:contextualLinearKShuffles} for $k=\log n$.
\end{proof}
\jay{
\section{TODO}
\begin{enumerate}
    \item Check "TODO(jay) - I replaced it" in the sty file
    \item Use shuffler/shuffle/mechanism/batch/protocol consistently!
    \begin{itemize}
        \item A \textbf{shuffler} is the entity which contains messages and shuffles.
        \item A \textbf{shuffle mechanism} is the algorithm which runs on the shuffler.
        \item For each shuffler $S$, we divide the users differently into \textbf{batches}, and run a different shuffle mechanism on each batch using $S$.
        \item How call a specific run of the mechanism on a specific set of users and on a specific shuffler? A run of the mechanism over a batch.
    \end{itemize}
\end{enumerate}
}
%%%%%%%%%%%%%%%%%%%%%%%%%%%%%%%%%%%%%%%%%%%%%%%%%%%%%%%%%%%%%%%%%%%%%%%%%%%%%%%
%%%%%%%%%%%%%%%%%%%%%%%%%%%%%%%%%%%%%%%%%%%%%%%%%%%%%%%%%%%%%%%%%%%%%%%%%%%%%%%

\end{document}